\def\eqref#1{equation~\ref{#1}}
\def\1{\bm{1}}
\def\bnu{\boldsymbol{\nu}}
\DeclareMathAlphabet{\mathsfit}{\encodingdefault}{\sfdefault}{m}{sl}
\SetMathAlphabet{\mathsfit}{bold}{\encodingdefault}{\sfdefault}{bx}{n}
\def\gF{{\mathcal{F}}}
\def\bw{\boldsymbol{w}}
\def\bg{\boldsymbol{g}}
\def\bG{\boldsymbol{G}}
\def\bom{\boldsymbol{m}}
\newcommand{\btwo}{\beta_2}
\newcommand{\bu}{\boldsymbol{u}}
\newcommand{\Gt}{\boldsymbol{G}_t}
\newcommand{\dE}{\mathbb{E}}
\newcommand{\wtp}{\bw_{t+1}}
\newcommand{\wt}{\bw_t}
\newcommand{\fil}{\mathcal{F}}
\def\bone{\beta_1}
\def\btwo{\beta_2}
\def\bO{\boldsymbol{O}}
\def\bA{\boldsymbol{A}}
\newcommand{\bnut}{\widetilde{\bnu}}
\newtheorem{assumption}{Assumption}
\newtheorem{lemma}{Lemma}
\newtheorem{proposition}{Proposition}
\newtheorem{definition}{Definition}
\newtheorem{theorem}{Theorem}
\newtheorem{remark}{Remark}
\def\tmO{\tilde{\mathcal{O}}}
\title{Closing the Gap Between the Upper Bound and \\the Lower Bound of Adam's Iteration Complexity}
\author{ Bohan Wang$^{1}$\footnotemark[1], Jingwen Fu$^{2}$\footnotemark[1], Huishuai Zhang$^{3}$\footnotemark[2], Nanning Zheng$^{2}$\footnotemark[2], Wei Chen$^{4}$\footnotemark[2] \\  bhwangfy@gmail.com\\ fu1371252069@stu.xjtu.edu.cn            \\ 
 huzhang@microsoft.com\\   nnzheng@mail.xjtu.edu.cn\\chenwei2022@ict.ac.cn\\
    $^{1}$University of Science and Technology of China,
       $^{2}$Xi'an Jiaotong University,  \\   
      $^{3}$Microsoft Research, $^{4}$Institute of Computing Technology, Chinese Academy of Sciences
}
\begin{document}
\renewcommand{\thefootnote}{\fnsymbol{footnote}}
\footnotetext[1]{Equal Contribution}
\footnotetext[2]{Corresponding Authors}
\maketitle

\begin{abstract}
Recently, \citet{arjevani2022lower}   establish a lower bound of iteration complexity for the first-order optimization under an $L$-smooth condition and a bounded noise variance assumption. However, a thorough review of existing literature on Adam's convergence reveals a noticeable gap: none of them meet the above lower bound. In this paper, we close the gap by deriving a new convergence guarantee of Adam, with only an $L$-smooth condition and a bounded noise variance assumption. Our results remain valid across a broad spectrum of hyperparameters. Especially with properly chosen hyperparameters, we derive an upper bound of iteration complexity of Adam and show that it meets the lower bound for first-order optimizers. To the best of our knowledge, this is  the first to establish such a tight upper bound for Adam's convergence. Our proof utilizes novel techniques to handle the entanglement between momentum and adaptive learning rate and to convert the first-order term in the Descent Lemma to the gradient norm, which may be of independent interest.  
\end{abstract}

\section{Introduction}

First-order optimizers, also known as gradient-based methods, make use of gradient (first-order derivative) information to find the minimum of a function. They have become a cornerstone of many machine learning algorithms due to the efficiency as only gradient informaiton is required, and the flexibility as gradients can be easily computed for any function represented as directed acyclic  computational graph via auto-differentiation  \cite{jax2018github,paszke2019pytorch}. 

Therefore, it is fundamental to theoretically understand the properties of these first-order methods. Recently, \citet{arjevani2022lower} establish a lower bound on the iteration complexity of stochastic first-order methods. Formally, for a well-studied setting where the objective is $L$-smooth and a stochastic oracle can query the gradient unbiasly with bounded variance  (see Assumption \ref{assum: objective} and \ref{assum: noise}),    any stochastic first-order algorithm requires at least $\varepsilon^{-4}$  queries (in the worst case)
to find an $\varepsilon$-stationary point, i.e., a point with gradient norm at most $\varepsilon$. \citet{arjevani2022lower} further show that the above lower bound is tight as it matches the existing upper bound of iteration complexity of SGD \cite{ghadimi2013stochastic}.

On the other hand, among first-order optimizers, Adam \cite{kingma2014adam} becomes dominant in training state-of-the-art machine learning models \citep{brock2018large,kenton2019bert,brown2020language,dosovitskiy2020image}. Compared to vanilla stochastic gradient descent (SGD), Adam consists of two more key components: (i) momentum to accumulate historical gradient information and (ii) adaptive learning rate to rectify coordinate-wise  step sizes.   The psedo-code of Adam is given as Algorithm \ref{alg: adam}.  While the sophisticated design of Adam enables its empirical superiority, it brings great challenges for the theoretical analysis. After examining  a series of theoretical works  on the upper bound of iteration complexity of Adam \cite{zaheer2018adaptive,de2018convergence,defossezsimple,zou2019sufficient,guo2021novel,shirmsprop,zhang2022adam}, we find that none of them match the lower bound for first-order optimizers: they not only consume more queries than the lower bound to reach $\varepsilon$-stationary iterations but also requires additional assumptions (see Section \ref{sec: counter} for a detailed discussion). 

This theoretical mismatch becomes even more unnatural given the great empirical advantage of Adam over SGD, which incites us to think:

\emph{\centering Is the gap between the upper and lower bounds for Adam a result of the inherent complexity induced by Adam's design, or could it be attributed to the proof techniques not being sharp enough?}

This paper answers the above question, validating the latter hypothesis, by establishing a new upper bound   on iteration complexity of Adam for a wide range of hyperparameters that cover typical choices.  Specifically, our contribution can be summarized as follows:

\begin{itemize}
    \item We examine existing works that analyze the iteration complexity of Adam, and find that none of them meets the lower bound  of first-order optimization algorithms;
    \item We derive a new convergence guarantee of Adam with only assuming $L$-smooth condition and bounded variance assumption (Theorem \ref{thm: main}), which holds for a wide range of hyperparameters covering typical choices;
    \item With chosen hyperparameters, we further tighten Theorem \ref{thm: main} and show that the upper bound on the iteration complexity of Adam  meets the lower bound,  closing the gap (Theorem \ref{thm: upper bound}). Our upper bound is tighter than existing results by a logarithmic factor, in spite of weaker assumption. 

\end{itemize}
To the best of our knowledge, this work provides the first upper bound on the iteration complexity of Adam without additional assumptions other than $L$-smooth condition and bounded variance assumption. It is also the first upper bound matching the lower bound of first-order optimizers.


\textbf{Organization of this paper.} The rest of the paper is organized as follows: in Section \ref{sec: preliminary}, we first present the notations and settup of analysis in this paper ; in Section \ref{sec: counter}, we revisit the existing works on the iteration complexity of Adam; in Section \ref{sec: general_result}, we present a convergence analysis of Adam with general hyperparameters (Theorem \ref{thm: main}); in Section \ref{sec: gap_closing}, we tighten Theorem \ref{thm: main} with a chosen hyperparameter, and derive an upper bound of Adam's iteration complexity which meets the lower bound; in Section \ref{sec: limitation}, we discuss the limitation of our results.

\section{Preliminary}
\label{sec: preliminary}
The Adam algorithm is restated in Agorithm \ref{alg: adam} for convenient reference. Note that compared to the orignal version of Adam in \citet{kingma2014adam}, the bias-correction terms are omitted to simplify the analysis, and our analysis can be immediately extended to the original version of Adam because the effect of bias-correction term decays exponentially. Also, in the original version of Adam, the adaptive learning rate is $\frac{\eta}{\sqrt{\bnu_t}+\lambda \mathds{1}_d}$ instead of $\frac{\eta}{\sqrt{\bnu_t}}$. However, our setting is more challenging and our result can be easily extend to the original version of Adam, since the $\lambda$ term makes the adaptive learning rate upper bounded and eases the analysis.
\begin{algorithm}[H]
    \caption{Adam} \label{alg: adam}
    \hspace*{0.02in} {\bf Input:}
Stochastic oracle $\bO$, learning rate $\eta>0$, initial point $\bw_{1} \in \mathbb{R}^d$, initial conditioner $\bnu_{0
}\in \mathbb{R}^{+}$, initial momentum $\bom_0$, momentum parameter $\beta_1$, conditioner parameter $\beta_2$, number of epoch $T$ 
    \begin{algorithmic}[1]
    \State Sample $r\sim\operatorname{Unif}\{1,\cdots,T\} $
       \State \textbf{For} $t=1\rightarrow T$:
\State ~~~~~Generate a random $z_t$, and query stochastic oracle $\bg_t =\bO_f(\bw_t,z_t)$
\State ~~~~~Calculate $\bnu_{t}=\btwo\bnu_{t-1}+ (1-\btwo)\bg_t^{\odot 2}$
\State ~~~~~Calculate $\bom_{t}=\bone\bom_{t-1}+ (1-\bone)\bg_t$
\State ~~~~~Update $\bw_{t+1}= \bw_{t}-\eta \frac{1}{\sqrt{\bnu_{t}}}  \odot  \bom_t$
\State \textbf{EndFor}
    \end{algorithmic}
    {\bf Output:} $\bw_r$
\end{algorithm}
\textbf{Notations.} For $a,b\in \mathbb{Z}^{\ge 0}$ and $a\le b$, denote $[a,b]=\{a,a+1,\cdots,b-1,b\}$. For any two vectors $\bw,\boldsymbol{v} \in \mathbb{R}^d$, denote $\bw \odot \boldsymbol{v}$ as the Hadamard product (i.e., coordinate-wise multiplication) between $\bw$ and $\boldsymbol{v}$. When analyzing Adam, we denote the true gradient at iteration $t$ as $\bG_t = \nabla f(\bw_t)$, and the sigma algebra before iteration $t$ as $\fil_t= \sigma(\bg_1,\cdots,\bg_{t-1})$. We denote conditional expectation as $\dE^{|\fil_t}[\ast]=\dE[\ast|\fil_t]$. We also use asymptotic notations $\boldsymbol{o}$, $\mathcal{O}$, $\Omega$, and $\Theta$, where $h_2(x)=\boldsymbol{o}_{x\rightarrow x_0}(h_1(x))$ means that $\lim_{x\rightarrow x_0} \frac{h_2(x)}{h_1(x)}=0$ (when the context is clear, we abbreviate $x\rightarrow x_0$ and only use $\boldsymbol{o}(h_1(x))$); $h_2(x)=\mathcal{O}(h_1(x))$ means that there exists constant $\gamma$ independent of $x$ such that $h_2(x)\le \gamma h_1(x)$; $h_2(x)=\Omega(h_1(x))$ means that $h_1(x)=\mathcal{O}(h_2(x))$; and $h_2(x)=\Theta(h_1(x))$ means that $h_2(x)=\mathcal{O}(h_1(x))$ and $h_2(x)=\Omega(h_1(x))$.

\textbf{Objective function.} In this paper, we consider solving the following optimization problem: $
    \min_{\bw\in \mathbb{R}^d} f(\bw).$
  We make the following assumption on the objective function $f$.
\begin{assumption}[On objective function]
\label{assum: objective}
We assume $f$ to be non-negative. We further assume that $f$ satisfies $L$-smooth condition, i.e., $f$ is differentiable, and the gradient of $f$ is $L$-Lipschitz.
\end{assumption}
We denote the set of all objective functions satisfying Assumption \ref{assum: objective} as $\mathcal{F}(L)$.

\textbf{Stochastic oracle.} As $f$ is differentiable, we can utilize the gradient of $f $ (i.e., $\nabla f$) to solve the above optimization problem. However, the $\nabla f$ is usually expensive to compute. Instead, we query a stochastic estimation of $\nabla f$ through a stochastic oracle $\boldsymbol{O}$. Specifically, the  stochastic oracle $\bO$ consists of a distribution $\mathcal{P}$ over a measurable space $\mathcal{Z}$ and a mapping $\boldsymbol{O}_f: \mathbb{R}^d\times \mathcal{Z} \rightarrow \mathbb{R}^d$. We make the following asssumption on $\bO$.

\begin{assumption}[On  stochastic oracle]
\label{assum: noise}
We assume that $\bO$ is unbiased, i.e., $\forall \bw \in \mathbb{R}^d$,  $\mathbb{E}_{z\sim \mathcal{P}}  \bO_f(\bw,z)=\nabla f(\bw)$. We further assume $\bO$ has bounded variance, i.e., $\forall \bw \in \mathbb{R}^d$,  $\mathbb{E}_{z\sim \mathcal{P}} [\Vert\bO_f(\bw,z)-\nabla f(\bw) \Vert^2 ] \le \sigma^2$.
\end{assumption}
We denote the set of all stochastic oracles satisfying Assumption \ref{assum: noise} with variance bound $\sigma^2$ as $\mathfrak{O}(\sigma^2)$.

\textbf{Algorithm.} Adam belongs to first-order optimization algorithms, which is defined as follows:
\begin{definition}[First-order optimization algorithm] An algorithm $\bA$ is called a first-order optimization algorithm, if it takes an input $\bw_1$ and hyperparameter $\theta$, and produces a sequence of parameters as follows: first sample a random seed $r$ from some distribution $\mathcal{P}_r$\footnote{Such a random seed allows sampling from all iterations to generate the final output of the optimization algorithm. As an example, Algorithm \ref{alg: adam} sets $\mathcal{P}_r$ as a uniform distribution over $[T]$.}, set $\bw^{\bA(\theta)}_1=\bw_{1}$ and then update the parameters as
\begin{equation*}
    \bw^{\bA(\theta)}_{t+1}= \bA_{\theta}^{t}(r,\bw^{\bA{(\theta)}}_1, \bO_f(\bw^{\bA{(\theta)}}_1,z_1),\cdots, \bO_f(\bw^{\bA{(\theta)}}_{t},z_{t})),
\end{equation*}
where $z_1,z_2,\cdots,z_t$ are sampled i.i.d. from $\mathcal{P}$.
\end{definition}
\textbf{Iteration complexity.} Denote the set of all first-order optimization algorithms as $\mathcal{A}_{\operatorname{first}}$. We next introduce \emph{iteration complexity} to measure the convergence rate of optimization algorithms. 
\begin{definition}[Iteration complexity]
\label{def: iter_comp}
The iteration complexity of first-order optimization algorithm $\bA$ is defined as
\begin{equation*}
    \mathcal{C}_{\varepsilon}(\bA,\Delta,L,\sigma^2)=\sup_{\bO\in \mathfrak{O}(\sigma^2)} \sup_{f\in \mathcal{F}(L)} \sup_{\bw_1:f(\bw_1)=\Delta}\inf_{\theta}\{T: \dE\Vert \nabla f(\bw^{\bA(\theta)}_T)\Vert\le \varepsilon \}.
\end{equation*}
Furthermore, the iteration complexity of the family of first-order optimization algorithms $\mathcal{A}_{\operatorname{first}}$ is
    \begin{equation*}
    \mathcal{C}_{\varepsilon}(\Delta,L,\sigma^2)=\sup_{\bO\in \mathfrak{O}(\sigma^2)} \sup_{f\in \mathcal{F}(L)} \sup_{\bw_1:f(\bw_1)=\Delta}\inf_{\bA\in\mathcal{A}_{\operatorname{first}}}\inf_{\theta}\{T: \dE\Vert \nabla f(\bw^{\bA(\theta)}_T)\Vert\le \varepsilon \}.
\end{equation*}
\end{definition}
It should be noticed that the iteration complexity of the family of first-order optimization algorithms is a lower bound of the iteration complexity of a specific first-order optimization algorithm, i.e., $\forall \bA \in \mathcal{A}_{\operatorname{first}}$,  $ \mathcal{C}_{\varepsilon}(\bA,\Delta,L,\sigma^2)\ge  \mathcal{C}_{\varepsilon}(\Delta,L,\sigma^2)$.

\section{Related works: none of existing upper bounds  match the lower bound}
\label{sec: counter}
In this section, we examine existing works that study the iteration complexity of Adam, and defer a discussion of other related works to Appendix \ref{sec: related works}. Specifically, we find that none of them match the lower bound for first-order algorithms provided in \cite{arjevani2022lower} (restated as follows).
\begin{proposition}[Theorem 3, \cite{arjevani2022lower}]
\label{prop: lower bound}
$\forall L,\Delta,\sigma^2>0$, we have $\mathcal{C}_{\varepsilon}(\Delta,L,\sigma^2)=\Omega (\frac{1}{\varepsilon^4})
$.
\end{proposition}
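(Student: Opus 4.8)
Since Proposition~\ref{prop: lower bound} is restated verbatim from \cite{arjevani2022lower}, the plan is to reconstruct their argument, which follows the standard information-theoretic template for stochastic non-convex lower bounds: exhibit one hard pair (objective, oracle) in high dimension on which \emph{every} first-order algorithm is slow, then rescale to absorb the constants $L,\Delta,\sigma^2$. For the rescaling, note that $f \mapsto \lambda f(\cdot/\mu)$ sends $\mathcal{F}(L_0)$ to $\mathcal{F}(\lambda L_0/\mu^2)$, multiplies the initial sub-optimality by $\lambda$, and scales a queried gradient's norm by $\lambda/\mu$; choosing $\lambda,\mu$ appropriately normalizes everything and reduces the claim to the following: for every $T$ there is $f_T \in \mathcal{F}(L_0)$ on $\mathbb{R}^T$ with $f_T(\vzero)-\inf f_T = O(T)$ and an oracle $\bO_T \in \mathfrak{O}(\sigma_0^2)$ such that any first-order algorithm needs $\Omega(T/p)$ queries to output a point with $\|\nabla f_T\| \le c_0$, for absolute constants $c_0,L_0,\sigma_0$ and a tunable ``signal probability'' $p\in(0,1)$.

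First I would introduce the robust zero-chain objective $F_T(\bw) = -\Psi(1)\Phi(w_1) + \sum_{i=2}^T\big(\Psi(-w_{i-1})\Phi(-w_i)-\Psi(w_{i-1})\Phi(w_i)\big)$, where $\Psi$ is a smooth switch that is $0$ on $(-\infty,1/2]$ and $1$ on $[1,\infty)$ (so $\Psi\equiv\Psi'\equiv 0$ on $(-\infty,1/2]$) and $\Phi$ a smooth strictly increasing bump. Routine calculus estimates give that $F_T$ is $L_0$-smooth, that $F_T(\vzero)-\inf F_T = O(T)$, the \emph{chain property} $w_i=\cdots=w_T=0 \Rightarrow [\nabla F_T(\bw)]_j = 0$ for all $j>i$, and the \emph{large-gradient property} $\|\nabla F_T(\bw)\|\ge c_0$ whenever $|w_T|<1/2$. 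The chain property already forces a deterministic first-order method to spend $\ge T$ iterations before $w_T$ can become nonzero, hence before the gradient norm can drop below $c_0$.

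Next I would define the masking stochastic oracle and run the adaptivity argument. On input $\bw$, let $\pi(\bw)$ be the index of the coordinate the algorithm is ``about to discover'' (the first one not yet past the activation threshold); the oracle draws $z\sim\mathrm{Bernoulli}(p)$ and returns $\nabla F_T(\bw)$ with its $\pi(\bw)$-th component replaced by $\tfrac{z}{p}[\nabla F_T(\bw)]_{\pi(\bw)}$. This keeps the oracle unbiased, and since the only inflated component has gradient bounded by a constant, its variance is $O(1/p)$, so a further $\sqrt{p}$-rescaling of $F_T$ puts it in $\mathfrak{O}(\sigma_0^2)$ at the cost of shrinking $c_0$ to $c_0\sqrt p$. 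The heart of the proof is then the claim that, conditioned on the oracle's coin flips, the algorithm's entire transcript is a deterministic function of them, so ``coordinate $i$ gets discovered within the first $k$ queries after $i-1$ was discovered'' implies ``at least one of those $k$ coins landed $1$'', an event of probability $\le kp$; chaining this over $i=1,\dots,T$ (a union bound / optional-stopping estimate) shows that with constant probability none of the first $cT/p$ iterates activates $w_T$, whence $\E\|\nabla f_T\|\ge \tfrac12 c_0\sqrt p$. Converting adaptivity into this clean ``coin-counting'' bound is the main obstacle — one must argue the algorithm cannot exploit the magnitudes it observes to shortcut the chain — and it is handled exactly as in \cite{arjevani2022lower}. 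Finally, undoing the rescalings and optimizing the free parameters, namely taking $T=\Theta(\varepsilon^{-2})$ so that an $\varepsilon$-stationary point of the rescaled $f_T$ must activate its last coordinate and $p=\Theta(\varepsilon^{2})$ so the variance bound $\sigma^2$ is respected, yields an iteration complexity of $\Omega(T/p)=\Omega(\varepsilon^{-4})$, which is Proposition~\ref{prop: lower bound}.
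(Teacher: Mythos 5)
The paper does not actually prove this proposition -- it is imported verbatim as Theorem 3 of \citet{arjevani2022lower} -- so there is no in-paper argument to compare against, and your reconstruction follows exactly the route of that source: the zero-chain hard instance $F_T$, the probability-$p$ masking oracle, the coin-counting progress bound, and the rescaling $\lambda f(\cdot/\mu)$ with $T=\Theta(\varepsilon^{-2})$ and $p=\Theta(\varepsilon^{2})$. The outline is sound, with two caveats: the step you explicitly defer (that an adaptive, randomized algorithm cannot exploit observed gradient magnitudes to advance the chain faster than the Bernoulli coins allow) is the real content of the lower bound rather than a routine detail, and the normalization must use the full two-parameter rescaling you set up in your first paragraph rather than the lone multiplicative $\sqrt{p}$-rescaling mentioned later, since that alone would make the initial gap $f(\bw_1)$ grow like $\varepsilon^{-1}$ and violate the fixed-$\Delta$ constraint.
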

Note that in the above bound, we omit the dependence of the lower bound over $\Delta$, $L$, and $\sigma^2$, which is a standard practice in existing works (see \citet{cutkosky2020momentum,xie2022adan,faw2022power} as examples) because the dependence over the accuracy $\varepsilon$ can be used to derive how much additional iterations is required for a smaller target accuracy and is thus of more interest. In this paper, when we say "match the lower bound", we always mean that the upper bound has the same order of $\varepsilon$ as the lower bound.

Generally speaking, existing works on the iteration complexity of Adam can be divided into two categories: they either (i) assume that gradient is universally bounded or (ii) make stronger assumptions on smoothness. Below we respectively explain how these two categories of works do not match the lower bound in \cite{arjevani2022lower}. 

The first line of works, including \citet{zaheer2018adaptive,de2018convergence,defossezsimple,zou2019sufficient,guo2021novel}, assume that the gradient norm of $f$ is universally bounded, i.e., $\Vert \nabla f(\bw)\Vert \le G$, $\forall \bw\in \mathbb{R}^d$. In other words, what they consider is another iteration complexity defined as follows:
\begin{equation*}
    \mathcal{C}_{\varepsilon}(\bA,\Delta,L,\sigma^2,G)\triangleq\sup_{\bO\in \mathfrak{O}(\sigma^2)} \sup_{f\in \mathcal{F}(L),\Vert \nabla f\Vert \le G} \sup_{\bw_1:f(\bw_1)=\Delta}\inf_{\theta}\{T: \dE\Vert \nabla f(\bw^{\bA(\theta)}_T)\Vert\le \varepsilon \}.
\end{equation*}
This line of works do not match the lower bound due to the following two reasons: First of all, the upper bound they derive is ${{O}} (\frac{\log 1/\varepsilon}{\varepsilon^4})$, which has an additional $\log 1/\varepsilon$ factor more than the lower bound; secondly, the bound they derive is for $\mathcal{C}_{\varepsilon}(\bA,\Delta,L,\sigma^2,G)$. Note that  $\mathcal{F}(L)\cap\{f:\Vert \nabla f\Vert \le G\}$ is a proper subset of $\mathcal{F}(L)$ for any $G$, where a simple example in $\mathcal{F}(L)$ but without bounded gradient is the quadratic function $f(x)=\Vert x\Vert^2$. Therefore, we have that
\begin{equation}
\label{eq: relation}
     \mathcal{C}_{\varepsilon}(\bA,\Delta,L,\sigma^2)\ge  \mathcal{C}_{\varepsilon}(\bA,\Delta,L,\sigma^2,G), \quad \forall G\ge 0,
\end{equation}
and thus the upper bound on $\mathcal{C}_{\varepsilon}(\bA,\Delta,L,\sigma^2,G)$ does not apply to $\mathcal{C}_{\varepsilon}(\bA,\Delta,L,\sigma^2)$. Moreover, their upper bound of $\mathcal{C}_{\varepsilon}(\bA,\Delta,L,\sigma^2,G)$ tends to $\infty$   as $G\rightarrow \infty$, which indicates that if following their analysis, the upper bound of $\mathcal{C}_{\varepsilon}(\bA,\Delta,L,\sigma^2)$ would be infinity based on Eq. (\ref{eq: relation}).

The second line of works includes \citep{shirmsprop,zhang2022adam,wang2022provable}, which additionally assume a mean-squared smoothness property besides Assumption \ref{assum: objective} and \ref{assum: noise}, i.e., $\dE_{z\sim \mathcal{P}} \Vert  \bO_f(\bw,z)-\bO_f(\boldsymbol{v},z)\Vert^2 \le L\Vert \bw-\boldsymbol{v}\Vert^2 $. Denote $\tilde{\mathfrak{O}} (\sigma^2,L)\triangleq \{\bO:\dE_{z\sim \mathcal{P}} \Vert  \bO_f(\bw,z)-\bO_f(\boldsymbol{v},z)\Vert^2 \le L\Vert \bw-\boldsymbol{v}\Vert^2, \forall \bw,\boldsymbol{v} \in \mathbb{R}^d\} \cap \mathfrak{O}(\sigma^2)$. The iteration complexity that they consider is defined as follows:
\begin{equation*}
     \tilde{\mathcal{C}}_{\varepsilon}(\bA,\Delta,L,\sigma^2)=\sup_{\bO\in \tilde{\mathfrak{O}}(\sigma^2,L)} \sup_{f\in \mathcal{F}(L)} \sup_{\bw_1:f(\bw_1)=\Delta}\inf_{\theta}\{T: \dE\Vert \nabla f(\bw^{\bA(\theta)}_T)\Vert\le \varepsilon \}.
\end{equation*}

The  rate derived in \cite{shirmsprop,zhang2022adam,wang2022provable} is $O(\frac{\log 1/\varepsilon}{\varepsilon^6})$, which is derived by minimizing the upper bounds in \cite{shirmsprop,zhang2022adam,wang2022provable} with respect to the hyperparameter of adaptive learning rate $\btwo$. According to \cite{arjevani2022lower}, the lower bound of iteration complexity of $ \tilde{\mathcal{C}}_{\varepsilon}(\bA,\Delta,L,\sigma^2)$ is $\Omega (\frac{1}{\varepsilon^3})$ and smaller than the original lower bound $\Omega (\frac{1}{\varepsilon^4})$, resulting in an even larger gap between the upper bound and lower bound.

Recently, there is a concurrent work \cite{li2023convergence} which does not require bounded gradient assumption and  mean-squared smoothness property but poses a stronger assumption on the stochastic oracle: the set of stochastic oracles they consider is $\tilde{\tilde {\mathfrak{O}}}=\{\boldsymbol{O}:\forall \bw \in \mathbb{R}^d$,  $\mathbb{E}_{z\sim \mathcal{P}} \bO_f(\bw,z)=\nabla f(\bw), \mathbb{P}\left(\Vert\bO_f(\bw,z)-\nabla f(\bw) \Vert^2\le \sigma^2\right)=1  \}$. $\tilde{\tilde {\mathfrak{O}}}$ is a proper subset of $\mathfrak{O}$ because a simple example is that $\bO_f(\bw,z)=\nabla f(\bw)+z$ where $z$ is a standard gaussian variable. Therefore, their result does not provide a valid upper bound of $\mathcal{C}_{\varepsilon}(\bA,\Delta,L,\sigma^2)$.

\section{Convergence analysis of Adam with only Assumptions \ref{assum: objective} and \ref{assum: noise}}
\label{sec: general_result}
As discussed in Section \ref{sec: counter}, existing works on analyzing Adam require additional assumptions besides Assumption \ref{assum: objective} and \ref{assum: noise}. In this section, we provide the first convergence analysis of Adam with only Assumption \ref{assum: objective} and \ref{assum: noise}, which naturally gives an upper bound on the iteration complexity $\mathcal{C}_{\varepsilon} (\bA,\Delta,L,\sigma^2)$. In fact, our analysis even holds when the stochastic oracle satisfies the following more general assumption.

\begin{assumption}[Coordinate-wise affine noise variance]
\label{assum: affine}
    We assume that $\bO$ is unbiased, i.e., $\forall \bw \in \mathbb{R}^d$,  $\mathbb{E}_{z\sim \mathcal{P}}  \bO_f(\bw,z)=\nabla f(\bw)$. We further assume $\bO$ has coordinate-wise affine variance, i.e., $\forall \bw \in \mathbb{R}^d$ and $\forall i\in [d]$,  $\mathbb{E}_{z\sim \mathcal{P}} [\vert(\bO_f(\bw,z))_i\vert ^2 ] \le \sigma_0^2+\sigma_1^2\partial_i f(\bw)^2$.
\end{assumption}

One can easily observe that Assumption \ref{assum: affine} is more general than Assumption \ref{assum: noise} since Assumption \ref{assum: noise} immediately indicates  Assumption \ref{assum: affine} with $\sigma_0=\sigma$ and $\sigma_1=1$. We consider Assumption \ref{assum: affine} not only because it is more general but also because it allows the noise to grow with the norm of the true gradient, which is usually the case in machine learning practice \cite{fuller2009measurement,khani2020feature}.

Our analysis under Assumptions \ref{assum: objective} and Assumption \ref{assum: affine} is then given as follows.
\begin{theorem}
\label{thm: main}
Let $\bA$ be by Adam (Algorithm \ref{alg: adam}) and $\theta=(\eta,\beta_1,\beta_2)$ are the hyperparameters of $\bA$. Let Assumption \ref{assum: objective} and \ref{assum: noise} hold. Then, if $0\le \bone \le \sqrt{\btwo}-8\sigma_1^2(1-\btwo)\btwo^{-2}$ and $\btwo<1$, we have
\small
\begin{align}
\nonumber
     \dE \sum_{t=1}^T \Vert \nabla f(\bw_t)\Vert\le& \sqrt{C_2+2C_1 \sum_{i=1}^d\left(\ln \left( 2(T+1) \sum_{i=1}^d\sqrt{{\bnu}_{0,i} +\sigma_0^2}+24d\frac{\sigma_1^2 C_1}{\sqrt{\btwo}}\ln d\frac{\sigma_1^2 C_1}{\sqrt{\btwo}}+ \frac{12\sigma_1^2}{\sqrt{\btwo}}C_2\right) \right)}
\\
\label{eq: main}
&\times\sqrt{2(T+1) \sum_{i=1}^d\sqrt{{\bnu}_{0,i} +\sigma_0^2}+24d\frac{\sigma_1^2 C_1}{\sqrt{\btwo}}\ln d\frac{\sigma_1^2 C_1}{\sqrt{\btwo}}+ \frac{12\sigma_1^2}{\sqrt{\btwo}}C_2}.
\end{align}
\normalsize
where $\bnu_{0,i}$ is the $i$-th coordinate of $\bnu_{0}$,
\scriptsize
\begin{gather*}
    C_1= \frac{32L\eta\left(1+\frac{\bone}{\sqrt{\btwo}}\right)^3}{(1-\btwo)\left(1-\frac{\bone}{\sqrt{\btwo}}\right)^3}+\frac{16\bone^2\sigma_0(1-\bone)}{\btwo\sqrt{1-\btwo}\left(1-\frac{\bone}{\sqrt{\btwo}}\right)^3}+\frac{64(1+\sigma_1^2)\sigma_1^2L^2\eta^2 d}{\btwo^2\left(1-\frac{\bone}{\sqrt{\btwo}}\right)^4\sigma_0(1-\btwo)^{\frac{3}{2}}},
    \\
    C_2=  \frac{1-\frac{\bone}{\sqrt{\btwo}}}{1-\bone}\frac{8}{\eta} f(\bu_1)+\frac{32}{{\btwo}\left(1-\frac{\bone}{\sqrt{\btwo}}\right)^2} \sum_{i=1}^d \dE \frac{\bG_{1,i}^2}{\sqrt{\bnut_{1,i}}}+
2C_1 \sum_{i=1}^d\left(\ln \left(\frac{1}{\sqrt{\btwo\bnu_{0,i}}}\right) - T \ln \btwo\right).
\end{gather*}
\normalsize
\end{theorem}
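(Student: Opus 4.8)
The plan is to follow the classical structure for non-convex first-order convergence analysis---the Descent Lemma applied to an $L$-smooth function---but to adapt it carefully to the coupling between the momentum $\bom_t$ and the adaptive conditioner $\bnu_t$. First I would apply $L$-smoothness along the update direction: since $\bw_{t+1}-\bw_t = -\eta\,\bom_t\odot\bnu_t^{-1/2}$, the Descent Lemma gives
\begin{equation*}
f(\bw_{t+1}) \le f(\bw_t) - \eta\left\langle \bG_t, \frac{\bom_t}{\sqrt{\bnu_t}}\right\rangle + \frac{L\eta^2}{2}\left\|\frac{\bom_t}{\sqrt{\bnu_t}}\right\|^2.
\end{equation*}
The key difficulty, and the heart of the argument, is the first-order (cross) term. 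One cannot directly take conditional expectations because $\bnu_t$ depends on the fresh gradient $\bg_t$ (hence on $z_t$), so $\E^{|\fil_t}[\bom_t\odot\bnu_t^{-1/2}]$ does not factor. My plan is to introduce a surrogate conditioner $\bnut_t$ that is $\fil_t$-measurable---e.g. built from $\bnu_{t-1}$ and the conditional second-moment bound $\sigma_0^2+\sigma_1^2\,\partial_i f(\bw_t)^2$ as suggested by the $\bnut_{1,i}$ appearing in $C_2$---and to decompose $\bom_t\odot\bnu_t^{-1/2} = \bom_t\odot\bnut_t^{-1/2} + \bom_t\odot(\bnu_t^{-1/2}-\bnut_t^{-1/2})$. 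The second piece is an error term controlled by how much $\bnu_t$ can move in one step (a factor of order $1-\btwo$), while the first piece is more tractable. Then I would further unroll the momentum $\bom_t = (1-\bone)\sum_{s\le t}\bone^{t-s}\bg_s$ and handle the stale gradients $\bg_s$, $s<t$, by a shift argument, paying a geometric price in $\bone/\sqrt{\btwo}$---this is where the factors $(1-\bone/\sqrt{\btwo})^{-1}$ in $C_1,C_2$ come from.

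Second, I would bound the second-order term $\sum_t \|\bom_t\odot\bnu_t^{-1/2}\|^2$. The standard trick here is that $\bnu_{t,i} \ge \btwo^{t-s}(1-\btwo)g_{s,i}^2$ for any $s\le t$, so $g_{s,i}^2/\nu_{t,i}$ is bounded, and summing a telescoping-like quantity $\sum_t g_{t,i}^2/\nu_{t,i}$ yields a logarithmic factor $\ln(\nu_{T,i}/\nu_{0,i})$ --- this is exactly the source of the $\ln(\cdots) - T\ln\btwo$ terms in $C_2$ and of the outer logarithm in \eqref{eq: main}. Combining the two bounds and summing the Descent Lemma over $t=1,\dots,T$, the $f(\bw_{t+1})-f(\bw_t)$ telescopes to $f(\bw_1)-f(\bw_{T+1}) \le f(\bw_1)$ using non-negativity of $f$; this produces the $\frac{8}{\eta}f(\bu_1)$ term in $C_2$.

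The genuinely novel step---and the one I expect to be the main obstacle---is converting the surviving first-order term into $\sum_t \|\bG_t\|$ rather than $\sum_t \|\bG_t\|^2$ or $\sum_t \|\bG_t\|^2/(\text{something})$. After the decomposition above, the ``good'' part of the cross term looks like $\sum_{t,i} G_{t,i}^2/\sqrt{\bnut_{t,i}}$, but $\bnut_{t,i}$ itself grows with accumulated $\sum_{s} g_{s,i}^2 \lesssim \sum_s G_{s,i}^2 + \text{noise}$, so $\sqrt{\bnut_{t,i}}$ is roughly $\sqrt{\sum_s G_{s,i}^2}$. The plan is to use a self-bounding / Cauchy--Schwarz argument of the form $\sum_t a_t/\sqrt{b+\sum_{s\le t}a_s} \lesssim \sqrt{b + \sum_t a_t}$ (an ``$\sum a_t/\sqrt{A_t} \le 2\sqrt{A_T}$'' telescoping inequality) applied coordinate-wise with $a_t = G_{t,i}^2$, which turns $\sum_{t,i}G_{t,i}^2/\sqrt{\bnut_{t,i}}$ into something like $\sqrt{\sum_{t,i}G_{t,i}^2}$; then a further Cauchy--Schwarz in $t$ relates $\sum_t\|\bG_t\|$ to $\sqrt{T\sum_t\|\bG_t\|^2}$. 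Carefully tracking the stochastic-oracle affine-variance bound through this chain (the noise $\sigma_0,\sigma_1$ enters both $\bnut$ and the martingale error terms) is what forces the somewhat baroque constants $C_1, C_2$ and the self-referential $\ln(\cdots)$ inside the square root. Assembling everything, one gets an inequality of the form $X \le \sqrt{C_2 + 2C_1\ln(\cdots + X)}\cdot\sqrt{\cdots + X}$ with $X = \E\sum_t\|\bG_t\|$; solving this implicit inequality (bounding $X$ crudely inside the logarithm and then refining) yields the stated bound \eqref{eq: main}. I would close by remarking that dividing by $T$ and using $r\sim\mathrm{Unif}[T]$ converts this into the $\E\|\nabla f(\bw_r)\|$ guarantee, which is the bridge to the iteration-complexity statement in Theorem~\ref{thm: upper bound}.
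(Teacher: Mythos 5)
There is a genuine gap, and it sits exactly at the step you yourself flag as the main obstacle. Your proposed conversion of the regret-type bound into a bound on $\dE\sum_t\Vert\bG_t\Vert$ relies on the telescoping inequality $\sum_t a_t/\sqrt{b+\sum_{s\le t}a_s}\gtrsim \sqrt{b+\sum_t a_t}$ with $a_t=\bG_{t,i}^2$, but that inequality requires the denominator to be the running sum of the very numerators. Here the denominator $\sqrt{\bnut_{t,i}}$ is an exponentially weighted average of the \emph{stochastic} squares $\bg_{s,i}^2$ plus a noise floor of order $\sigma_0^2$, neither of which is built from the true $\bG_{s,i}^2$. In the noise-dominated regime the inequality is simply false: take $\bG_{t,i}\equiv\varepsilon$ and $\sigma_0=1$, so $\bnut_{t,i}\approx 1$; then $\sum_t \bG_{t,i}^2/\sqrt{\bnut_{t,i}}\approx T\varepsilon^2$ while $\sqrt{\sum_t \bG_{t,i}^2}\approx \varepsilon\sqrt{T}$, and $T\varepsilon^2\ll\varepsilon\sqrt T$ once $\varepsilon\ll 1/\sqrt T$. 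Moreover, even granting that step, your final Cauchy--Schwarz $\sum_t\Vert\bG_t\Vert\le\sqrt{T\sum_t\Vert\bG_t\Vert^2}$ combined with $\sum_t\Vert\bG_t\Vert^2\lesssim R^2$ (where $R$ is the regret bound, of order $C_2+C_1\log(\cdot)\sim\sqrt T$ under the schedule of Theorem \ref{thm: upper bound}) gives $\sum_t\Vert\bG_t\Vert\lesssim\sqrt T\,R\sim T$, i.e.\ a vacuous average-gradient bound; it cannot reproduce Eq.~(\ref{eq: main}). The paper instead applies H\"older in the form $(\dE\sum_t\Vert\bG_t\Vert_1)^2\le\bigl(\sum_t\dE\Vert\bnut_t^{-1/4}\odot\bG_t\Vert^2\bigr)\bigl(\sum_{t,i}\dE\sqrt{\bnut_{t,i}}\bigr)$ and then proves the key Lemma \ref{lem: sum_conditioner}: $\sum_{t,i}\dE\sqrt{\bnut_{t,i}}$ is bounded by $O\bigl(T\sum_i\sqrt{\bnu_{0,i}+\sigma_0^2}\bigr)$ plus a $\sigma_1^2$-weighted multiple of the regret itself, via a case split on $\vert\bG_{t,i}\vert\ge\sigma_0/\sigma_1$ versus $<\sigma_0/\sigma_1$ (with an auxiliary capped sequence $\bar\bnu$ in the small-gradient case) and a final self-referential logarithm that is solved. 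This divide-and-conquer over gradient-versus-noise dominance is the ingredient your plan is missing.

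A second, smaller but real, gap concerns the momentum. You apply the descent lemma to $f(\bw_t)$ and propose to unroll $\bom_t=(1-\bone)\sum_{s\le t}\bone^{t-s}\bg_s$ and control stale gradients by a shift argument; but the cross terms $\langle\bG_t,\bg_s\odot\bnut_t^{-1/2}\rangle$ with $s<t$ have no sign and, without a bounded-gradient assumption, cannot be absorbed this way --- this is precisely the paper's Challenge II. The paper's resolution is to run the descent lemma on the potential $f(\bu_t)$ with $\bu_t=\bigl(\bw_t-\tfrac{\bone}{\sqrt\btwo}\bw_{t-1}\bigr)/\bigl(1-\tfrac{\bone}{\sqrt\btwo}\bigr)$, for which $\bu_{t+1}-\bu_t$ collapses (up to controllable errors handled by the auxiliary function $\xi_t$ and the $1/\sqrt\btwo$-weighted almost-telescoping of Lemma \ref{lem: auxilliary_adam}) to $-\eta\tfrac{1-\bone}{1-\bone/\sqrt\btwo}\,\bnut_t^{-1/2}\odot\bg_t$, which is why $f(\bu_1)$, not $f(\bw_1)$, appears in $C_2$. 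Your proposal cites the $f(\bu_1)$ constant while telescoping $f(\bw_t)$, which signals that this Lyapunov-function device (or an equivalent) still needs to be supplied before the first-order term can be signed at all.
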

A proof sketch is given in Section \ref{sec: sketch} and the  full proof is deferred to Appendix. 

The right-hand side in Eq. (\ref{eq: main}) looks messy at the first glance. We next explain Theorem \ref{thm: main} in detail and make the upper bound's dependence over hyperparameters crystally clear.

\subsection{Discussion on Theorem \ref{thm: main}}
\label{sec: discussion}
\textbf{Required assumptions and conditions.} As mentioned previously, Theorem \ref{thm: main} only requires Assumption \ref{assum: objective} and \ref{assum: noise}, which aligns with the setting of the lower bound (Proposition \ref{prop: lower bound}). To our best knowledge, this is the first analysis of Adam without additional assumptions.

As for the range of $\bone$ and $\btwo$, one can immediately see that the condition $\beta_1 \le \sqrt{\btwo}-8\sigma_1^2(1-\btwo)\btwo^{-2}$ degenerates to $\beta_1\le \sqrt{\btwo}$ in the bounded gradient case (i.e., $\sigma_1=0$), the weakest condition required in existing literature \cite{zou2019sufficient}. When $\sigma_1 \ne 0$, such a condition is stronger than $\beta_1\le \sqrt{\btwo}$. We point out that this is not due to technical limitations but instead agrees with existing counterexamples for Adam \cite{reddi2019convergence,zhang2022adam}: \cite{reddi2019convergence,zhang2022adam} shows that when $\sigma_1\ne 0$, there exists a counterexample satisfying Assumption \ref{assum: objective} and Assumption \ref{assum: affine} and a pair of ($\bone$, $\btwo$) with $\bone <\sqrt{\btwo}$ and Adam with ($\bone$, $\btwo$) diverges over such a counterexample.

\textbf{Dependence over $\btwo$, $\eta$, and $T$.} Here we consider the influence of $\btwo$, $\eta$, and $T$ while fixing $\bone$  constant (we will discuss the effect of $\bone$ in Section \ref{sec: limitation}). With logarithmic factors ignored and coefficients hidden, $C_1$, $C_2$ and  the right-hand-side of Eq. (\ref{eq: main}) can be rewritten with asymptotic notations as 
\small
\begin{gather*}
    C_1=\tmO\left(\frac{1}{\sqrt{1-\btwo}}+\frac{\eta^2}{\sqrt{(1-\btwo)^3}}\right), 
    C_2=\tilde{\mathcal{O}}\left(\frac{1}{\sqrt{1-\btwo}}+\frac{\eta^2}{\sqrt{(1-\btwo)^3}}+\frac{1}{\eta}+T\sqrt{1-\btwo}+\frac{\eta^2}{\sqrt{1-\btwo}}T\right),
    \\
     \dE \sum_{t=1}^T \Vert \nabla f(\bw_t)\Vert=\tilde{\mathcal{O}}\left(C_1+C_2+\sqrt{TC_1}+\sqrt{TC_2}\right),
\end{gather*}
\normalsize
where $\tilde{\mathcal{O}}$  denotes $\mathcal{O}$ with logarithmic terms ignored. 
Consequently, the dependence of Eq. (\ref{eq: main})  over $\btwo,\eta$ and $T$ becomes 
\small
\begin{align*}
    \dE \sum_{t=1}^T \Vert \nabla f(\bw_t)\Vert=&\tilde{\mathcal{O}}\left(\frac{1}{\sqrt{1-\btwo}}+\frac{\eta^2}{\sqrt{(1-\btwo)^3}}+\frac{1}{\eta}+T\sqrt{1-\btwo}+\frac{\eta^2}{\sqrt{1-\btwo}}T\right)
    \\
    &+\tmO\left(\frac{\sqrt{ T}}{\sqrt[4]{1-\btwo}}+\frac{\eta\sqrt{T}}{\sqrt[4]{(1-\btwo)^3}}+\frac{\sqrt{T}}{\sqrt{\eta}}+T\sqrt[4]{1-\btwo}+\frac{\eta}{\sqrt[4]{1-\btwo}}T\right).
\end{align*}
\normalsize
Here we consider two cases: (i). $\btwo$ and $\eta$ are independent over $T$, and (ii). $\btwo$ and $\eta$ are dependent over $T$. For case (i), based on the above equation, one can easily observe that the averaged gradient norm $ \frac{1}{T}\dE \sum_{t=1}^T \Vert \nabla f(\bw_t)\Vert$ will converge to the threshold $\mathcal{O}(\frac{\eta^2}{\sqrt{1-\btwo}}+\sqrt[4]{1-\btwo}+\frac{\eta}{\sqrt[4]{1-\btwo}})$ with rate $\mathcal{O}(1/\sqrt{T})$. This aligns with the observation in \cite{shirmsprop,zhang2022adam} that Adam will not converge to the stationary point with constant $\btwo$.

For case (ii), in order to ensure convergence, i.e.,  $  \min_{t\in [T]}\dE \Vert \bG_t\Vert_1\rightarrow 0$ as  $T\rightarrow \infty$, a sufficient condition is that 
the right-hand-side of the above equation is $\boldsymbol{o}(T)$. Specifically, by choosing $\eta=\Theta(T^{-a})$ and $1-\btwo =\Theta(T^{-b})$, we obtain that 
\small
\begin{align*}
     \frac{1}{T}\dE \sum_{t=1}^T \Vert \nabla f(\bw_t)\Vert=&\tmO\left(T^{\frac{b}{2}-1}+T^{-2a+\frac{3b}{2}-1}+T^{a-1}+T^{-\frac{b}{2}}+T^{-2a+\frac{b}{2}}\right)
     \\
     &+\tmO\left(T^{-\frac{1}{2}+\frac{b}{4}}+T^{-\frac{1}{2}-a+\frac{3b}{4}}+T^{-\frac{1}{2}+\frac{a}{2}}+T^{-\frac{b}{4}}+T^{-a+\frac{b}{4}}\right).
\end{align*}
\normalsize
By simple calculation, we obtain that the right-hand side of the above inequality is $\boldsymbol{o}(1)$ as $T\rightarrow \infty$ if and only if $b>0$, $1>a>0$ and $b-a<1$. Moreover, the minimum of the right-hand side of the above inequality is $\tmO(\frac{1}{T^{\frac{1}{4}}})$, which is achieved at $a=\frac{1}{2}$ and $b=1$. Such a minimum implies an upper bound of the iteration complexity which at most differs from the lower bound by logarithmic factors as solving $\tmO(\frac{1}{T^{\frac{1}{4}}})=\varepsilon$ gives $T=\tmO(\frac{1}{\varepsilon^4})$. In Theorem \ref{thm: upper bound}, we will further remove the logarithmic factor by giving a refined proof when $a=\frac{1}{2}$ and $b=1$ and close the gap between the upper and lower bounds.

\textbf{Dependence over $\lambda$.} Our analysis allows $\lambda = 0$ in the adaptive learning rate $\eta \frac{1}{\sqrt{\bnu_t}+\lambda \mathds{1}_d}$. In contrast, some existing works \cite{guo2021novel,li2023convergence} require non-zero $\lambda$ and their iteration complexity has polynomial dependence over $\frac{1}{\lambda}$, which is less desired as $\lambda$ can be as small as $10^{-8}$ in practice (e.g., in PyTorch's default setting). Furthermore, compared to their setting, our setting is more challenging as non-zero $\lambda$ immediately provides an upper bound of the adaptive learning rate.

\subsection{Proof Sketch of Theorem \ref{thm: main}}
\label{sec: sketch}
In this section, we demonstrate the proof idea of Theorem \ref{thm: main}. Generally speaking, our proof is inspired by (i). the construction of the Lyapunov function for SGDM \cite{liu2020improved} and (ii) the construction of auxiliary function and the conversion from regret bound to gradient bound for AdaGrad \cite{wang2023convergence}, but the adaptation of these techniques to Adam is highly non-trivial, as SGDM does not hold an adaptive learning rate, and the adaptive learning rate of AdaGrad is monotonously decreasing. Below we sketch the proof by identifying three key challenges in the proof and provide our solutions respectively.

\textbf{Challenge I: Disentangle the stochasticity in stochastic gradient and adaptive learning rate.} For simplicity, let us first consider the case where $\beta_1=0$, i.e., where the momentum $\bom_t$ degenerates to the stochastic gradient $\bg_t$. According to the standard descent lemma, we have that
\begin{equation}
    \label{eq:dl_rms}
    \begin{aligned}
        \dE f(\wtp)& \le f(\wt)+\dE \left[\left\langle \Gt , \wtp-\wt \right\rangle+\frac{L}{2}\left\| \wtp-\wt \right\|^2 \right]
        \\& \leq  \mathbb{E}f(\wt)+\underbrace{\dE \left[  \left\langle \Gt, -\eta \frac{1}{\sqrt{\bnu_t}} \odot \bg_t\right\rangle \right]}_{\text{First Order}}+\underbrace{\frac{L}{2}\eta^2\dE \left\| \frac{1}{\sqrt{\bnu_t}}\odot \bom_t\right\|^2}_{\text{Second Order}} 
    \end{aligned}
\end{equation}
The first challenge arises from bounding the "First Order" term above. To facilitate the understanding of the difficulty, we compare the "First Order" term of Adam to the corresponding "First Order" term of SGD, i.e., $-\eta\dE \langle \bG_t,\bg_t\rangle$. By directly applying $\dE^{|\fil_t} g_t =\bG_t$, we obtain that the "First-Order" term of SGD equals to $-\eta\dE \Vert \bG_t\Vert^2$. However, as for Adam, we do not even know what $\dE^{|\fil_t} \frac{1}{\sqrt{\bnu_t}}\odot \bg_t$ is given that the stochasticity in $\bg_t$ and $\bnu_t$ entangles. A common practice is to use a \emph{surrogate adaptive learning rate} $\bnut_t$ measurable with respect to $\fil_t$, to approximate the real adaptive learning rate $\bnu_t$. This leads to the following equation:
\begin{equation*}
    \underbrace{\dE \left[  \left\langle \Gt, -\eta \frac{1}{\sqrt{\bnu_t}} \odot \bg_t\right\rangle \right]}_{\text{First Order}} = \underbrace{\dE \left[  \left\langle \Gt, -\eta \frac{1}{\sqrt{\bnut_t}} \odot \bg_t\right\rangle \right]}_{\text{First Order Main}}+\underbrace{\dE \left[  \left\langle \Gt, -\eta \left(\frac{1}{\sqrt{\bnu_t}}-\frac{1}{\sqrt{\bnut_t}}\right) \odot \bg_t\right\rangle \right]}_{\text{Error}}.
\end{equation*}
One can immediately see that "First Order Main" terms equals to $\dE [  \langle \Gt, -\eta \frac{1}{\sqrt{\bnut_t}} \odot \bG_t\rangle]<0$, but now we need to handle the "Error" term. In existing literature, such a term is mostly bypassed by applying the bounded gradient assumption \cite{defossezsimple,zou2019sufficient}, which, however, we do not assume.

{\textbf{Solution to Challenge I.}  Inspired by recent advance in the analysis of AdaGrad \cite{wang2023convergence}, we consider the auxiliary function $\xi_t = \dE [ \eta \langle \Gt, -\frac{1}{\sqrt{\bnut_{t+1}}} \odot \bG_t\rangle]$, where we choose $\bnut_t = \btwo \bnu_{t-1}+(1-\btwo)\sigma_0^2 \mathds{1}_d$.   In the following lemma, we show that the error term can be controlled using $\xi_t$, parallel to (Lemma 4. \cite{wang2023convergence}).
\begin{lemma} [Informal version of Lemma \ref{lem: auxilliary_adam} with $\bone=0$]
\label{lem: auxilliary_rms}
    Let all conditions in Theorem \ref{thm: main} hold. Then, 
\begin{align}
\label{eq: error_control_informal}
    \text{Error} \le \frac{5}{8}  \dE \left[ \eta \left\langle \Gt, -\frac{1}{\sqrt{\bnut_{t}}} \odot \bG_t\right\rangle\right]+ \mathcal{O}\left(\frac{1}{\sqrt{\btwo}}\xi_{t-1}-\xi_t\right)+\text{Small Error}.
\end{align}
\end{lemma}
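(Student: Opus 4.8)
\textbf{Proof strategy for Lemma \ref{lem: auxilliary_rms}.}
The plan is to expand the ``Error'' term coordinate-wise and bound each coordinate separately; fix $i\in[d]$ and write $e_{t,i} = \dE[\eta G_{t,i}\cdot(-(1/\sqrt{\nu_{t,i}} - 1/\sqrt{\tilde\nu_{t,i}}))g_{t,i}]$. The first step is to control the multiplicative discrepancy between $\nu_{t,i}$ and $\tilde\nu_{t,i}$. By construction $\tilde\nu_{t,i} = \btwo\nu_{t-1,i} + (1-\btwo)\sigma_0^2$, while $\nu_{t,i} = \btwo\nu_{t-1,i} + (1-\btwo)g_{t,i}^2$, so the two differ only in the last increment; hence $\nu_{t,i} - \tilde\nu_{t,i} = (1-\btwo)(g_{t,i}^2 - \sigma_0^2)$, and moreover $\btwo\nu_{t-1,i}\le \tilde\nu_{t,i}$ together with $\tilde\nu_{t,i}$ being $\fil_t$-measurable will be the workhorses. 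Using the elementary inequality $|1/\sqrt{a} - 1/\sqrt{b}| = |a-b|/(\sqrt{a}\sqrt{b}(\sqrt{a}+\sqrt{b})) \le |a-b|/(2\min(a,b)^{3/2})$ (or a variant that keeps one factor of $\tilde\nu_{t,i}$ and one of $\nu_{t,i}$ in the denominator), I would bound $|1/\sqrt{\nu_{t,i}} - 1/\sqrt{\tilde\nu_{t,i}}|$ by a term proportional to $(1-\btwo)(g_{t,i}^2+\sigma_0^2)$ divided by an appropriate power of $\btwo\nu_{t-1,i}$ (which is $\le$ both $\nu_{t,i}$ and $\tilde\nu_{t,i}$ up to the $(1-\btwo)$ corrections).

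The second step is to feed this bound into $e_{t,i}$ and split using Young's inequality: the $|G_{t,i} g_{t,i}|$ that appears gets split as $\frac{\delta}{2}G_{t,i}^2/\sqrt{\tilde\nu_{t,i}} + \frac{1}{2\delta}(\text{rest})$ for a small constant $\delta$ to be tuned so that the first piece is absorbed into the $\frac{5}{8}\dE[\eta\langle\Gt, -\frac{1}{\sqrt{\bnut_t}}\odot\Gt\rangle]$ allowance. After conditioning on $\fil_t$, one replaces $\dE^{|\fil_t}[g_{t,i}^2]$ by $\sigma_0^2 + \sigma_1^2 G_{t,i}^2$ via Assumption \ref{assum: affine}; this is exactly where the hypothesis $\bone \le \sqrt{\btwo} - 8\sigma_1^2(1-\btwo)\btwo^{-2}$ (here in the $\bone=0$ form, constraining $(1-\btwo)\sigma_1^2/\btwo^2$ to be small) enters, ensuring the $\sigma_1^2 G_{t,i}^2$ contribution is also swallowed by the $\frac{5}{8}$ budget rather than blowing up. The leftover $\sigma_0^2$-type terms, after the $\frac{1}{2\delta}$ weighting, will carry a factor of $(1-\btwo)$ from the discrepancy bound and an extra power of $1/\sqrt{\btwo\nu_{t-1,i}}$ or $1/\sqrt{\tilde\nu_{t,i}}$ in the denominator; the goal is to recognize these as a telescoping difference.

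The third step is the telescoping identification. Since $\bnut_{t+1} = \btwo\bnu_t + (1-\btwo)\sigma_0^2\mathds{1}_d \ge \btwo\bnu_t$, we have $1/\sqrt{\tilde\nu_{t+1,i}} \le \frac{1}{\sqrt{\btwo}}\cdot 1/\sqrt{\nu_{t,i}}$, and also $\nu_{t,i}\ge \btwo\nu_{t-1,i}$ so that the ``Small Error'' residual (the part not matching a clean telescope, e.g.\ boundary contributions from the gap between $\nu_{t,i}$ and $\btwo\nu_{t-1,i}$) can be shown to be summable or lower-order. Concretely, I expect to write the remaining $\sigma_0$ terms as $\mathcal{O}(\frac{1}{\sqrt{\btwo}}\xi_{t-1} - \xi_t)$ plus a genuinely small piece, by matching $\xi_t = \dE[\eta\langle\Gt, -\frac{1}{\sqrt{\bnut_{t+1}}}\odot\Gt\rangle]$ against the coordinate sums $\sum_i \dE[\eta G_{t,i}^2/\sqrt{\nu_{t,i}}]$ and using $\bnut_{t+1}$ vs $\btwo\bnu_t$. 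The main obstacle, I anticipate, is doing the Young-inequality bookkeeping with the \emph{right} powers so that (a) every piece is either absorbed into the $\frac{5}{8}$ first-order allowance, matched into the telescope $\frac{1}{\sqrt{\btwo}}\xi_{t-1}-\xi_t$, or provably ``Small,'' and (b) the constants actually close, i.e.\ the coefficient on the absorbed first-order term stays strictly below $1$ under the stated range of $\bone,\btwo$ — this is a delicate constant-chasing argument rather than a conceptual one, and it is precisely the place where the entanglement between the adaptive denominator and the stochastic gradient is resolved.
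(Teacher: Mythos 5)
Your overall skeleton (exact discrepancy identity $\bnu_{t,i}-\bnut_{t,i}=(1-\btwo)(\bg_{t,i}^2-\sigma_0^2)$, Young splitting against a $\tfrac58$ first-order budget, use of Assumption \ref{assum: affine} after conditioning) matches the paper, but two of your key allocations are wrong in a way that would break the argument. First, you plan to have the $\sigma_1^2\bG_{t,i}^2$ contribution "swallowed by the $\tfrac58$ budget" via the condition $\bone\le\sqrt{\btwo}-8\sigma_1^2(1-\btwo)\btwo^{-2}$. That term, after the paper's (and your) Young step, is of the form $2\eta(1-\btwo)\sigma_1^2\frac{\bG_{t,i}^2}{\sqrt{\bnut_{t,i}}}\,\dE^{|\fil_t}\frac{\bg_{t,i}^2}{(\sqrt{\bnu_{t,i}}+\sqrt{\bnut_{t,i}})^2}$, and since $\frac{\bg_{t,i}^2}{(\sqrt{\bnu_{t,i}}+\sqrt{\bnut_{t,i}})^2}$ can be as large as $\frac1{1-\btwo}$, its effective coefficient is of order $\sigma_1^2$, not $(1-\btwo)\sigma_1^2$; it cannot be absorbed pointwise into the $\tfrac58$ allowance (and the hyperparameter condition does not make $\sigma_1^2$ small — it only controls $(1-\btwo)\sigma_1^2$, and in the paper it is used later, in the summation over $t$ in the proof of Theorem \ref{thm: main}, to absorb the non-telescoping residue of $\sum_t(\tfrac1{\sqrt{\btwo}}\xi_{t-1}-\xi_t)$). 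The paper's essential move, which your plan lacks, is to convert exactly this term into the quasi-telescope via the inequality $\bigl(\frac{1}{\sqrt{\btwo\bnut_{t,i}}}-\frac{1}{\sqrt{\bnut_{t+1,i}}}\bigr)\bG_{t,i}^2\ \ge\ \frac{\sqrt{\btwo}}{2}\frac{(1-\btwo)\bG_{t,i}^2\bg_{t,i}^2}{\sqrt{\bnut_{t,i}}(\sqrt{\bnu_{t,i}}+\sqrt{\bnut_{t,i}})^2}$, and then to use $L$-smoothness ($\bG_{t,i}^2\le\bG_{t-1,i}^2+2L|\bG_{t,i}|\Vert\bw_t-\bw_{t-1}\Vert+2L^2\Vert\bw_t-\bw_{t-1}\Vert^2$, plus Lemma \ref{lem: bounded_update}) to shift the gradient index from $t$ to $t-1$, which is what makes the expression literally equal $\tfrac1{\sqrt{\btwo}}\xi_{t-1}-\xi_t$ plus small second-order errors; without that index shift you only get $\frac{\bG_{t,i}^2}{\sqrt{\btwo\bnut_{t,i}}}-\frac{\bG_{t,i}^2}{\sqrt{\bnut_{t+1,i}}}$, which does not telescope. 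Conversely, the $\sigma_0^2$-type pieces you earmark for the telescope are, in the paper, simply bounded (using $\bnut_{t,i}\ge(1-\btwo)\sigma_0^2$) into the "Small Error" second-order terms $\propto\sqrt{1-\btwo}\,\sigma_0\,\dE\frac{\bg_{t,i}^2}{\bnu_{t,i}}$ and $\dE\frac{\bom_{t,i}^2}{\bnu_{t,i}}$ — so your assignment of which part becomes the $\xi$-difference is reversed.

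Second, your step-1 discrepancy bound via $|a-b|/(2\min(a,b)^{3/2})$ (with the $\fil_t$-measurable $\btwo\bnu_{t-1,i}$ as the lower bound) would require controlling $\dE^{|\fil_t}[\bg_{t,i}^2(\bg_{t,i}^2+\sigma_0^2)]$, i.e.\ a fourth moment of the noise, which Assumptions \ref{assum: noise}/\ref{assum: affine} do not provide. The paper avoids this by keeping one factor of $(\sqrt{\bnu_{t,i}}+\sqrt{\bnut_{t,i}})$ in the denominator and applying a conditional Cauchy--Schwarz/H\"older step, $\dE^{|\fil_t}\bigl[\bg_{t,i}^2\cdot\frac{1}{\sqrt{\bnut_{t,i}}(\sqrt{\bnu_{t,i}}+\sqrt{\bnut_{t,i}})}\bigr]\le\frac{1}{\sqrt{\bnut_{t,i}}}\sqrt{\dE^{|\fil_t}\bg_{t,i}^2}\sqrt{\dE^{|\fil_t}\frac{\bg_{t,i}^2}{(\sqrt{\bnu_{t,i}}+\sqrt{\bnut_{t,i}})^2}}$, so that only second moments appear and the stochastic factor $\frac{\bg_{t,i}^2}{(\sqrt{\bnu_{t,i}}+\sqrt{\bnut_{t,i}})^2}$ survives to be either dominated by $\frac{\bg_{t,i}^2}{\bnu_{t,i}}$ (a genuinely small, second-order-type term) or fed into the telescope conversion above. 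Your proposal would need both of these ingredients — the conditional Cauchy--Schwarz decoupling and the smoothness-based index shift into $\xi_{t-1}$ — to close.
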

In the right-hand-side of inequality (\ref{eq: error_control_informal}), one can easily observe that the first term can be controlled by "First Order Main" term, and the third term is as small as the "Second Order" term. However, the second term seems annoying -- in the analysis of AdaGrad \cite{wang2023convergence}, there is no $1/\sqrt{\btwo}$ factor, making the corresponding term a telescoping, but this is no longer true due to the existence of the $1/\sqrt{\btwo}$ factor. We resolve this difficulty by looking at the sum of $\frac{1}{\sqrt{\btwo}}\xi_{t-1}-\xi_t$ over $t$ from $1$ to $T$, which gives $\mathcal{O}((1-\btwo) \sum_{t=1}^{T-1} \xi_t)$. By further noticing that $\bnut_{t+1}\ge \btwo \bnut_{t}$, we have
\begin{equation*}
    \sum_{t=1}^T\left(\frac{1}{\sqrt{\btwo}}\xi_{t-1}-\xi_t\right) \le \mathcal{O}\left((1-\btwo) \sum_{t=1}^{T-1} \dE \left[ \eta \left\langle \Gt, -\frac{1}{\sqrt{\bnut_{t}}} \odot \bG_t\right\rangle\right]\right).
\end{equation*}
The right-hand-side term can thus be controlled by the "First Order Main" term when $\btwo$ is close to $1$.

\begin{remark}
Compared to the analysis  of AdaGrad in \cite{wang2023convergence}, our proof technique has two-fold novelties. First, our auxiliary function has an additional $(1-\btwo)\sigma_0^2\mathds{1}_d$ term, which is necessary for the analysis of Adam as it makes $\bnut_t$ lower bounded from $0$ (AdaGrad does not need this, as $\bnu_{t-1}$ of AdaGrad itself is lower bounded). Secondly, as discussed above, the "AdaGrad version" of  second term in  the right-hand-side of inequality (\ref{eq: error_control_informal}) is a telescoping, the sum of which can be bounded straightforwardly.
\end{remark}

\textbf{Challenge II: Handle the mismatch between stochastic gradient and momentum.} In the analysis above, we assume $\beta_1=0$. Additional challenges arise when we move to the case where $\bone \ne 0$. Specifically, following the same routine, the "First Order Main" term now becomes $\dE \left[\langle \bG_t,-\eta \frac{1}{\sqrt{\bnut_t}}\odot \bom_t\right]$. It is hard to even estimate whether such a term is negative or not, given that $\bom_t$ and $\bnut_t$ still has entangled stochasticity, and the conditional expectation of $\bom_t$ also differs from $\bG_t$, both due to the existence of historical gradient.

\textbf{Solution to Challenge II.} Inspired by the state-of-art analysis of SGDM \cite{liu2020improved}, which leverage the potential function $f(v_t)$ with $v_t=\frac{\bw_t-\beta \bw_{t-1}}{1-\beta}$, we propose to use the potential function $f(\bu_t)$ with $\bu_t=\frac{\bw_t-\frac{\bone}{\sqrt{\btwo}}\bw_{t-1}}{1-\frac{\bone}{\sqrt{\btwo}}}$. Applying descent lemma to $f(\bu_t)$, we obtain that 
\begin{equation}
    \label{eq:dl_u}
    \begin{aligned}
        \dE [f(\bu_{t+1})]
     \leq  \mathbb{E}f(\bu_t)+\underbrace{\dE \left[ \left\langle \nabla f(\bu_t) , \bu_{t+1}-\bu_t \right\rangle \right]}_{\text{First Order}}+\underbrace{\frac{L}{2}\dE\left\| \bu_{t+1}-\bu_t \right\|^2}_{\text{Second Order}}.
    \end{aligned}
\end{equation}
We again focus on the "First Order" term, which can be written as
\small
\begin{align*}
    \dE \left[ \left\langle \nabla f(\bu_t) , \bu_{t+1}-\bu_t \right\rangle \right]
    =& \dE \left[ \left\langle \nabla f(\bu_t),\frac{\bw_{t+1}-\bw_t}{1-\frac{\bone}{\sqrt{\btwo}}}-\frac{\bone}{\sqrt{\btwo}} \frac{\bw_{t}-\bw_{t-1}}{1-\frac{\bone}{\sqrt{\btwo}}}\right\rangle\right]
    \\
    \overset{(*)}{\approx }& \dE \left[ \left\langle \nabla f(\bw_t),-\frac{\eta}{1-\frac{\bone}{\sqrt{\btwo}}} \frac{1}{\sqrt{\bnu_t}}\odot \bom_t+ \frac{\eta}{1-\frac{\bone}{\sqrt{\btwo}}} \frac{\bone}{\sqrt{\btwo\bnu_{t-1}}}\odot \bom_{t-1}\right\rangle\right]
    \\
    \overset{(\circ)}{\approx }& \dE \left[ \left\langle \nabla f(\bw_t),-\frac{\eta}{1-\frac{\bone}{\sqrt{\btwo}}} \frac{1}{\sqrt{\bnut_t}}\odot \bom_t+ \frac{\eta}{1-\frac{\bone}{\sqrt{\btwo}}} \frac{\bone}{\sqrt{\bnut_{t}}}\odot \bom_{t-1}\right\rangle\right]
    \\
    =& \dE \left[ \left\langle \bG_t,-\frac{\eta(1-\bone)}{1-\frac{\bone}{\sqrt{\btwo}}} \frac{1}{\sqrt{\bnut_t}}\odot \bg_t\right\rangle\right]=\dE \left[ \left\langle \bG_t,-\frac{\eta(1-\bone)}{1-\frac{\bone}{\sqrt{\btwo}}} \frac{1}{\sqrt{\bnut_t}}\odot \bG_t\right\rangle\right].
\end{align*}
\normalsize
Here approximate equation $(*)$ is due to Assumption \ref{assum: objective} and that $\bw_t$ is close to $\bu_t$, and approximate equation $(\circ)$ is due to Lemma \ref{lem: auxilliary_rms} and $\bnut_{t}=\btwo \bnu_{t-1}+(1-\btwo)\sigma_0^2 \approx \btwo \bnu_{t-1}$ (of course, these are informal statements. Please refer to Appendix \ref{appen: main} for the detailed proof). With the above methodology, we arrive at the following lemma.
\begin{lemma}[Informal Version of Lemma \ref{lem: u_dynamics_formal}]
\label{lem: u_dynamics_informal}
    Let all conditions in Theorem \ref{thm: main} holds. Then,
    \begin{equation*}
        \dE f(\bu_{t+1}) \le \dE f(\bu_{t})-\Omega \left(\dE \left[ \eta \left\langle \Gt, -\frac{1}{\sqrt{\bnut_{t}}} \odot \bG_t\right\rangle\right]\right)+ \mathcal{O}\left(\frac{1}{\sqrt{\btwo}}\xi_{t-1}-\xi_t\right)+\text{Small Error}.
    \end{equation*}
\end{lemma}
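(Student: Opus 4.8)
The proof starts from the descent lemma applied to the potential $f(\bu_t)$, i.e.\ Eq.~(\ref{eq:dl_u}), and aims to show that the ``First Order'' and ``Second Order'' terms there collapse to a negative multiple of $\dE[\eta\langle\bG_t,-\tfrac{1}{\sqrt{\bnut_t}}\odot\bG_t\rangle]$ up to remainders that are either ``Small Error'' (quadratic in $\eta$, or carrying a $1-\btwo$ factor, hence absorbed into the constants of Theorem~\ref{thm: main}) or of the form $\mathcal{O}(\tfrac{1}{\sqrt{\btwo}}\xi_{t-1}-\xi_t)$. First I would use the Adam update to write the increment as $\bu_{t+1}-\bu_t=\tfrac{1}{1-\bone/\sqrt{\btwo}}\big((\bw_{t+1}-\bw_t)-\tfrac{\bone}{\sqrt{\btwo}}(\bw_t-\bw_{t-1})\big)=\tfrac{\eta}{1-\bone/\sqrt{\btwo}}\big(-\tfrac{1}{\sqrt{\bnu_t}}\odot\bom_t+\tfrac{\bone}{\sqrt{\btwo}}\tfrac{1}{\sqrt{\bnu_{t-1}}}\odot\bom_{t-1}\big)$. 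The ``Second Order'' term $\tfrac{L}{2}\dE\|\bu_{t+1}-\bu_t\|^2$ is then bounded, using $(a+b)^2\le 2a^2+2b^2$, the momentum expansion $\bom_t=\sum_{\tau\le t}(1-\bone)\bone^{t-\tau}\bg_\tau$, and $\bnu_t\ge\btwo\bnu_{t-1}$, by $\mathcal{O}(\eta^2)$ times quantities that only contribute polylogarithmically when summed over $t$ (a standard Adam-type estimate); this is ``Small Error''.

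Next I turn to the ``First Order'' term $\dE[\langle\nabla f(\bu_t),\bu_{t+1}-\bu_t\rangle]$ and carry out the two successive approximations of the sketch. Approximation $(*)$ replaces $\nabla f(\bu_t)$ by $\bG_t=\nabla f(\bw_t)$; by $L$-smoothness and Cauchy--Schwarz the incurred error is at most $L\,\dE[\|\bu_t-\bw_t\|\,\|\bu_{t+1}-\bu_t\|]$, and since $\bu_t-\bw_t=\tfrac{\bone/\sqrt{\btwo}}{1-\bone/\sqrt{\btwo}}(\bw_t-\bw_{t-1})$ is $\mathcal{O}(\eta)$ and $\bu_{t+1}-\bu_t$ is $\mathcal{O}(\eta)$, this is $\mathcal{O}(\eta^2)$ and hence ``Small Error''. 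Approximation $(\circ)$ replaces the real conditioners $\tfrac{1}{\sqrt{\bnu_t}}$ and $\tfrac{1}{\sqrt{\btwo\bnu_{t-1}}}$ by the single $\fil_t$-measurable surrogate $\tfrac{1}{\sqrt{\bnut_t}}$, with $\bnut_t=\btwo\bnu_{t-1}+(1-\btwo)\sigma_0^2\mathds{1}_d$. The $\bom_{t-1}$-part of the error involves $\tfrac{1}{\sqrt{\btwo\bnu_{t-1}}}-\tfrac{1}{\sqrt{\bnut_t}}$, which I bound directly from $0\le\sqrt{\bnut_t}-\sqrt{\btwo\bnu_{t-1}}\le\tfrac{(1-\btwo)\sigma_0^2}{\sqrt{\bnut_t}}$ together with the momentum bound; the $(1-\btwo)$ factor makes it ``Small Error''. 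The $\bom_t$-part of the error is exactly the ``Error'' term treated by the $\bone\ne0$ version of Lemma~\ref{lem: auxilliary_rms}, i.e.\ Lemma~\ref{lem: auxilliary_adam}, which contributes $\tfrac{5}{8}\dE[\eta\langle\bG_t,-\tfrac{1}{\sqrt{\bnut_t}}\odot\bG_t\rangle]+\mathcal{O}(\tfrac{1}{\sqrt{\btwo}}\xi_{t-1}-\xi_t)+\text{Small Error}$. After both approximations the main surviving term is $\tfrac{\eta}{1-\bone/\sqrt{\btwo}}\dE\langle\bG_t,\tfrac{1}{\sqrt{\bnut_t}}\odot(-\bom_t+\bone\bom_{t-1})\rangle$; the momentum recursion gives $-\bom_t+\bone\bom_{t-1}=-(1-\bone)\bg_t$, and because $\bnut_t$ is $\fil_t$-measurable while $\dE^{|\fil_t}[\bg_t]=\bG_t$, taking conditional expectation yields $\tfrac{1-\bone}{1-\bone/\sqrt{\btwo}}\dE[\eta\langle\bG_t,-\tfrac{1}{\sqrt{\bnut_t}}\odot\bG_t\rangle]$. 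Adding the $\tfrac{5}{8}$-fraction from Lemma~\ref{lem: auxilliary_adam} (same sign, reinforcing the descent), the net coefficient multiplying $\dE[\eta\langle\bG_t,-\tfrac{1}{\sqrt{\bnut_t}}\odot\bG_t\rangle]$ is a positive constant bounded away from zero, which is the claimed $-\Omega(\cdot)$ descent term; the remaining pieces are precisely $\mathcal{O}(\tfrac{1}{\sqrt{\btwo}}\xi_{t-1}-\xi_t)$ and ``Small Error'', completing the proof.

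The main obstacle is approximation $(\circ)$ for the $\bom_t$-part when $\bone\ne0$, i.e.\ proving the $\bone\ne0$ analogue of Lemma~\ref{lem: auxilliary_rms} (the formal Lemma~\ref{lem: auxilliary_adam}). In the $\bone=0$ case $\bom_t=\bg_t$ and only the step-$t$ stochasticity is entangled with the conditioner, whereas with momentum $\bom_t$ aggregates the whole gradient history, so one must unroll the momentum recursion and bound the cross terms between $\tfrac{1}{\sqrt{\bnu_t}}-\tfrac{1}{\sqrt{\bnut_t}}$ and each past $\bg_\tau$, carefully tracking how far $\bnu_t$ can drift from $\bnut_t$ across many steps (this is where the extra $(1-\btwo)\sigma_0^2\mathds{1}_d$ in $\bnut_t$, which keeps $\bnut_t$ bounded away from $0$, is essential). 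A further difficulty, already flagged in the solution to Challenge~I, is that the $\tfrac{1}{\sqrt{\btwo}}$ prefactor on $\xi_{t-1}$ prevents $\mathcal{O}(\tfrac{1}{\sqrt{\btwo}}\xi_{t-1}-\xi_t)$ from being a clean per-step telescoping; this is not resolved inside the present lemma but is handled when summing over $t$ in the proof of Theorem~\ref{thm: main}, using $\bnut_{t+1}\ge\btwo\bnut_t$ and the smallness of $1-\btwo$ to show $\sum_t(\tfrac{1}{\sqrt{\btwo}}\xi_{t-1}-\xi_t)=\mathcal{O}((1-\btwo)\sum_t\xi_t)$, which is then re-absorbed by the ``First Order Main'' term.
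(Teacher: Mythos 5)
Your plan follows the paper's own route (the potential $\bu_t=\frac{\bw_t-\frac{\bone}{\sqrt{\btwo}}\bw_{t-1}}{1-\frac{\bone}{\sqrt{\btwo}}}$, the decomposition of $\bu_{t+1}-\bu_t$ into a surrogate part plus two surrogate errors, the identity $-\bom_t+\bone\bom_{t-1}=-(1-\bone)\bg_t$ with the conditional-expectation step, the $L$-smoothness bound on $\nabla f(\bu_t)-\bG_t$, and deferring the $\xi$-telescoping to the summation over $t$), but two steps of the bookkeeping that actually delivers the $-\Omega(\cdot)$ term are wrong as stated. First, the $\bom_{t-1}$-part of the surrogate error, $\dE\bigl[\bigl\langle\bG_t,\bone\frac{\eta}{1-\frac{\bone}{\sqrt{\btwo}}}\bigl(\frac{1}{\sqrt{\btwo\bnu_{t-1}}}-\frac{1}{\sqrt{\bnut_t}}\bigr)\odot\bom_{t-1}\bigr\rangle\bigr]$, is not ``Small Error'' via the $(1-\btwo)$ factor plus the momentum bound alone: since $\vert\bom_{t-1,i}\vert/\sqrt{\bnu_{t-1,i}}$ is of order $1/\sqrt{1-\btwo}$ and $\bnut_{t,i}\ge(1-\btwo)\sigma_0^2$, that route leaves a term of order $\eta\,\sigma_0\sum_i\vert\bG_{t,i}\vert/\sqrt{\bnut_{t,i}}$ with no small prefactor — first order in the (unbounded) gradient. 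The paper handles it (its term I.2) by Young's inequality, splitting it into $\frac18\,\eta\frac{1-\bone}{1-\frac{\bone}{\sqrt{\btwo}}}\sum_i\dE\frac{\vert\bG_{t,i}\vert^2}{\sqrt{\bnut_{t,i}}}$ plus $\mathcal{O}\bigl(\eta\sqrt{1-\btwo}\,\sigma_0\sum_i\dE\frac{\vert\bom_{t-1,i}\vert^2}{\bnu_{t-1,i}}\bigr)$; only the second piece is Small Error, and the first must be charged against the descent term.

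Second, your claim that the $\frac58$-fraction from Lemma \ref{lem: auxilliary_adam} has the ``same sign, reinforcing the descent'' is backwards. That lemma upper-bounds the $\bom_t$-error by $+\frac58\,\eta\frac{1-\bone}{1-\frac{\bone}{\sqrt{\btwo}}}\sum_i\dE\frac{\vert\bG_{t,i}\vert^2}{\sqrt{\bnut_{t,i}}}+\cdots$, i.e.\ by a positive multiple of exactly the quantity whose negative constitutes the descent; it eats into the descent rather than adding to it. The lemma holds only because the fractions consumed by the two error terms sum to strictly less than one: $\frac58$ (from I.1) plus $\frac18$ (from the Young split of the $\bom_{t-1}$ part) leaves the coefficient $\frac14$ in front of $-\eta\frac{1-\bone}{1-\frac{\bone}{\sqrt{\btwo}}}\dE\bigl\Vert\frac{1}{\sqrt[4]{\bnut_t}}\odot\bG_t\bigr\Vert^2$, which is the claimed $-\Omega(\cdot)$. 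An accounting that treats the $\frac58$ as reinforcing, and assigns no fraction to the $\bom_{t-1}$ part, does not establish this cancellation — and the cancellation is precisely what the proof of this lemma must deliver; with it repaired, the rest of your outline coincides with the paper's argument.
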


Summing the above lemma over $t$ from $1$ to $T$, we obtain
\small
\begin{align}
\label{eq: stage_1_main}
    \sum_{t=1}^T\dE\left[ \left\Vert\frac{1}{\sqrt[4]{\bnut_{t}}} \odot \bG_{t} \right\Vert^2\right] \le \mathcal{O}(1)
   +\sum_{l=1}^d \mathcal{O}\left(\dE\ln \left(\frac{\bnu_{t,i}}{\bnu_{0,l}}\right) - T \ln \btwo\right).
\end{align}
\normalsize
We then encounter the second challenge.
}

\textbf{Challenge III: Convert Eq. (\ref{eq: stage_1_main}) to a bound of gradient norm.} Although we have derived a regret bound, i.e., a bound of  $\sum_{t=1}^T\dE[ \Vert\frac{1}{\sqrt[4]{\bnut_{t}}} \odot \bG_{t} \Vert^2]$, we need to convert it into a bound of $\dE[ \Vert \bG_{t} \Vert^2]$. In existing works \cite{zou2019sufficient,defossezsimple,guo2021novel} which assumes bounded gradient, such a conversion is straightforward because (their version of) $\bnut_t$ is upper bounded. However, we do not assume bounded gradient and $\bnut_{t}$ can be aribitrarily large, making $\dE[ \Vert\frac{1}{\sqrt[4]{\bnut_{t}}} \odot \bG_{t} \Vert^2]$ arbitrarily small than  $\dE[ \Vert \bG_{t} \Vert^2]$.

\textbf{Solution to Challenge III.} As this part involves coordinate-wise analysis, we define $\bg_{t,i}$, $\bG_{t,i}$, $\bnu_{t,i}$, and $\bnut^1_{t,i}$ respectively as the $l$-th coordinate of $\bg_{t}$, $\bG_{t}$, $\bnu_{t}$, and $\bnut^1_{t}$. To begin with, note that due to Cauchy's inequality and Hölder's inequality,
\small
\begin{equation}
\label{lem: estimation}
   \left(\dE \sum_{t=1}^T \Vert \bG_t\Vert\right)^2 \le \left(\sum_{t=1}^T\dE\left[ \left\Vert\frac{1}{\sqrt[4]{\bnut_{t}}} \odot \bG_{t} \right\Vert^2\right]\right) \left(\sum_{t=1}^T\dE\left[ \left\Vert\sqrt[4]{\bnut_{t}} \right\Vert^2\right]\right).
\end{equation}
\normalsize
Therefore, we only need to derive an upper bound of $\sum_{t=1}^T\dE[ \Vert\sqrt[4]{\bnut_{t}} \Vert^2]$, which is achieved by the following divide-and-conque methodology. Firstly, when $\vert \bG_{t,i} \vert\ge \frac{\sigma_0}{\sigma_1} $, we can show $2\dE^{|\fil_t} \vert \bg_{t,i} \vert^2 \ge 2\vert \bG_{t,i} \vert^2 \ge \dE^{|\fil_t} \vert \bg_{t,i} \vert^2$. Then,  through a direct calculation, we obtain that
\begin{equation*}
\dE\left[  \frac{\vert \bG_{t,i}\vert^2}{\sqrt{\bnut_{t,i}}}\mathbf{1}_{\vert G_{t,i}\vert \ge \frac{\sigma_0}{\sigma_1}} \right] \ge   \frac{\sqrt{\btwo}}{3(1-\btwo)\sigma_1^2} \dE\left[\left(\sqrt{\bnut_{t+1,i}}-\sqrt{\btwo\bnut_{t,i}}\right)\mathbf{1}_{\vert G_{t,i}\vert \ge \frac{\sigma_0}{\sigma_1}}\right],
\end{equation*}
and thus
\begin{equation*}
     \sum_{t=1}^T\dE\left[  \frac{\vert \bG_{t,i}\vert^2}{\sqrt{\bnut_{t,i}}} \right] 
      \ge \frac{\sqrt{\btwo}}{3(1-\btwo)\sigma_1^2}\sum_{t=1}^T \dE\left[\left(\sqrt{\bnut_{t+1,i}}-\sqrt{\btwo\bnut_{t,i}}\right)\mathbf{1}_{\vert G_{t,i}\vert \ge \frac{\sigma_0}{\sigma_1}}\right].
\end{equation*}
Secondly, when $\vert \bG_{t,i} \vert< \frac{\sigma_0}{\sigma_1} $, define $\{\bar{\bnu}_{t,i}\}_{t=0}^{\infty}$ as $\bar{\bnu}_{0,l}= \bnu_{0,l}$, $\bar{\bnu}_{t,i}= \bar{\bnu}_{t-1,i}+\vert g_{t,i} \vert^2\mathbf{1}_{\vert G_{t,i}\vert < \frac{\sigma_0}{\sigma_1}}$. One can easily observe that $\bar{\bnu}_{t,i}\le \bnu_{t,i} $, and thus 
\begin{align*}
  &  \sum_{t=1}^T \dE\left[\left(\sqrt{\bnut_{t+1,i}}-\sqrt{\btwo\bnut_{t,i}}\right)\mathbf{1}_{\vert \bG_{t,i}\vert  < \frac{\sigma_0^2}{\sigma_1^2}} \right]
    \\
\le &  \sum_{t=1}^T \dE\left(\sqrt{\btwo\bar{\bnu}_{t,i} + (1-\btwo )\sigma_0^2}-\sqrt{\btwo(\btwo\bar{\bnu}_{t-1,i}+ (1-\btwo )\sigma_0^2)}\right)
\\
=& \dE \sqrt{\btwo\bar{\bnu}_{t,i} + (1-\btwo )\sigma_0^2}+(1-\sqrt{\btwo})\sum_{t=1}^{T-1} \dE\sqrt{\btwo\bar{\bnu}_{t,i} + (1-\btwo )\sigma_0^2} - \dE\sqrt{\btwo(\btwo\bar{\bnu}_{0,i}+ (1-\btwo )\sigma_0^2)}.
\end{align*}
Putting the above two estimations together, we derive that
\begin{equation*}
    \ (1-\sqrt{\btwo})\sum_{t=1}^{T+1} \dE\sqrt{\bnut_{t,i} }
\le \frac{3(1-\btwo)\sigma_1^2}{\sqrt{\btwo}}\sum_{t=2}^T\dE\left[  \frac{\vert \bG_{t,i}\vert^2}{\sqrt{\bnut_{t,i}}} \right]+ (1-\sqrt{\btwo})(T+1)\sqrt{\sigma_0^2+\bnu_{0,i}}.
\end{equation*}
The above methodology can be summarized as the following lemma.
\begin{lemma}
    \label{lem: sum_conditioner}
    Let all conditions in Theorem \ref{thm: main} hold. Then,
    \begin{align*}
   \sum_{t=1}^{T+1}\sum_{i=1}^d \dE\sqrt{\bnut_{t,i}}\le 2(T+1) \sum_{i=1}^d\sqrt{{\bnu}_{0,i} +\sigma_0^2}+24d\frac{\sigma_1^2 C_1}{\sqrt{\btwo}}\ln d\frac{\sigma_1^2 C_1}{\sqrt{\btwo}}+C_2.
\end{align*}
\end{lemma}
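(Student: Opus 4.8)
I would prove the bound coordinate by coordinate and then sum over $i\in[d]$. Fix $i$ and recall the surrogate conditioner $\bnut_{t,i}=\btwo\bnu_{t-1,i}+(1-\btwo)\sigma_0^2$, which is $\fil_t$-measurable and satisfies $\bnut_{t+1,i}=\btwo\bnu_{t,i}+(1-\btwo)\sigma_0^2$. Telescoping gives
\[
\sum_{t=1}^{T}\dE\!\left(\sqrt{\bnut_{t+1,i}}-\sqrt{\btwo\,\bnut_{t,i}}\right)=\dE\sqrt{\bnut_{T+1,i}}+(1-\sqrt{\btwo})\sum_{t=2}^{T}\dE\sqrt{\bnut_{t,i}}-\sqrt{\btwo}\,\dE\sqrt{\bnut_{1,i}},
\]
so, since $\dE\sqrt{\bnut_{1,i}}\le\sqrt{\bnu_{0,i}+\sigma_0^2}$ is harmless, it suffices to upper bound each increment $\dE(\sqrt{\bnut_{t+1,i}}-\sqrt{\btwo\,\bnut_{t,i}})$. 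I split it according to whether $|\bG_{t,i}|\ge\sigma_0/\sigma_1$ or not; that $\bnut_{t,i}$ is $\fil_t$-measurable is what makes both cases computable.

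\textbf{Large-gradient increments.} On $\{|\bG_{t,i}|\ge\sigma_0/\sigma_1\}$, Assumption~\ref{assum: affine} yields the sandwich $\bG_{t,i}^2\le\dE^{|\fil_t}|\bg_{t,i}|^2\le\sigma_0^2+\sigma_1^2\bG_{t,i}^2\le 2\sigma_1^2\bG_{t,i}^2$. Combining concavity of $\sqrt{\cdot}$ (to move $\dE^{|\fil_t}$ inside), the inequality $\sqrt{a}-\sqrt{b}\le(a-b)/(2\sqrt{b})$, the recursion $\bnut_{t+1,i}=\btwo\bnut_{t,i}+\btwo(1-\btwo)|\bg_{t,i}|^2+(1-\btwo)^2\sigma_0^2$, and $\bnut_{t,i}\ge(1-\btwo)\sigma_0^2$, one obtains an increment bound of the form
\[
\dE\!\left[\left(\sqrt{\bnut_{t+1,i}}-\sqrt{\btwo\,\bnut_{t,i}}\right)\mathbf{1}_{|\bG_{t,i}|\ge\sigma_0/\sigma_1}\right]\le\frac{3(1-\btwo)\sigma_1^2}{\sqrt{\btwo}}\,\dE\!\left[\frac{\bG_{t,i}^2}{\sqrt{\bnut_{t,i}}}\mathbf{1}_{|\bG_{t,i}|\ge\sigma_0/\sigma_1}\right],
\]
so these increments are charged to the ``regret'' $\dE\Vert\tfrac{1}{\sqrt[4]{\bnut_t}}\odot\bG_t\Vert^2$ already controlled in Lemma~\ref{lem: u_dynamics_formal}.

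\textbf{Small-gradient increments.} On the complement, $\dE^{|\fil_t}|\bg_{t,i}|^2\le 2\sigma_0^2$. I would introduce the censored conditioner $\bar\bnu_{0,i}=\bnu_{0,i}$, $\bar\bnu_{t,i}=\btwo\bar\bnu_{t-1,i}+(1-\btwo)|\bg_{t,i}|^2\mathbf{1}_{|\bG_{t,i}|<\sigma_0/\sigma_1}$. A one-line induction gives $\bar\bnu_{t,i}\le\bnu_{t,i}$, and since $x\mapsto\sqrt{x+c}-\sqrt{x}$ is decreasing in $x$, every small-gradient increment of $\bnut$ is dominated by the corresponding increment of the $\bar\bnu$-process (on large-gradient steps the former is $0$ and the latter nonnegative). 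Telescoping the $\bar\bnu$-increments and using $\dE\bar\bnu_{t,i}\le\bnu_{0,i}+2\sigma_0^2$ (a geometric sum, as each censored increment has conditional mean $\le 2(1-\btwo)\sigma_0^2$) together with Jensen, the small-gradient contribution to $(1-\sqrt{\btwo})\sum_{t=1}^{T+1}\dE\sqrt{\bnut_{t,i}}$ is $\mathcal{O}\big((1-\sqrt{\btwo})(T+1)\sqrt{\bnu_{0,i}+\sigma_0^2}\big)$.

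\textbf{Assembling; the main obstacle.} Inserting the two cases into the telescoping identity, dividing by $1-\sqrt{\btwo}$, using $\tfrac{1-\btwo}{1-\sqrt{\btwo}}\le 2$, and summing over $i$ gives
\[
\sum_{t=1}^{T+1}\sum_{i=1}^{d}\dE\sqrt{\bnut_{t,i}}\le\frac{6\sigma_1^2}{\sqrt{\btwo}}\sum_{t=1}^{T}\dE\!\left[\left\Vert\tfrac{1}{\sqrt[4]{\bnut_t}}\odot\bG_t\right\Vert^2\right]+2(T+1)\sum_{i=1}^{d}\sqrt{\bnu_{0,i}+\sigma_0^2}.
\]
For the regret I invoke Lemma~\ref{lem: u_dynamics_formal} / Eq.~(\ref{eq: stage_1_main}), which bounds $\sum_t\dE\Vert\tfrac{1}{\sqrt[4]{\bnut_t}}\odot\bG_t\Vert^2$ by $\tfrac{1}{2C_1}C_2$ plus $\mathcal{O}\big(\sum_i\dE\ln\bnut_{T+1,i}\big)$ (the $-T\ln\btwo$ and initialization pieces appearing there being exactly those folded into $C_2$). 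The main obstacle is that this logarithmic term is \emph{self-referential}: $\dE\ln\bnut_{T+1,i}$ depends on $\dE\bG_{s,i}^2$, the very quantity being controlled. I would break the loop with the tangent-line bound $\ln u\le\lambda u+\ln(1/\lambda)$ at $u=\sqrt{\bnut_{T+1,i}}$, so that $\sum_i\dE\ln\bnut_{T+1,i}\le 2\lambda\sum_i\dE\sqrt{\bnut_{T+1,i}}+2d\ln(1/\lambda)$, and observe that $\sum_i\dE\sqrt{\bnut_{T+1,i}}$ is itself a piece of the left-hand side $Y:=\sum_{t,i}\dE\sqrt{\bnut_{t,i}}$. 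Picking $\lambda$ of order $\sqrt{\btwo}/\sigma_1^2$ so that the induced coefficient of $Y$ on the right is $\le\tfrac12$, one rearranges to $Y\le 2(\text{rest})$: the $\ln(1/\lambda)$ term, times the $\sigma_1^2C_1/\sqrt{\btwo}$-type factors produced en route, gives the $24d\tfrac{\sigma_1^2C_1}{\sqrt{\btwo}}\ln\!\big(d\tfrac{\sigma_1^2C_1}{\sqrt{\btwo}}\big)$ term, the $C_1$--$C_2$ bookkeeping (using the definition of $C_1$) leaves $C_2$, and the small-gradient estimate gives $2(T+1)\sum_i\sqrt{\bnu_{0,i}+\sigma_0^2}$. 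The two places demanding care are choosing the censored recursion so that $\bar\bnu_{t,i}\le\bnu_{t,i}$ holds verbatim, and the final constant accounting that makes $C_2$ enter with coefficient exactly $1$.
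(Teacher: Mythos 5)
Your proposal is correct and follows essentially the same route as the paper's proof: the same split of the increments $\sqrt{\bnut_{t+1,i}}-\sqrt{\btwo\bnut_{t,i}}$ on $\{|\bG_{t,i}|\ge\sigma_0/\sigma_1\}$ versus its complement, the same censored conditioner $\bar\bnu_{t,i}$ for the small-gradient steps, the same charging of large-gradient increments to the regret bound of Lemma~\ref{lem: u_dynamics_formal}, and the same self-bounding resolution of the logarithmic term (your tangent-line bound $\ln u\le\lambda u+\ln(1/\lambda)$ is exactly how the paper's step of ``solving the inequality in $\sum_{t,i}\dE\sqrt{\bnut_{t,i}}$'' after Jensen is carried out). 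Only constant-level bookkeeping differs, and that is at the same level of looseness as the paper's own accounting.
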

Based on Lemma \ref{lem: sum_conditioner}, we can derive the estimation of $\sum_{t=1}^T\dE[ \Vert\sqrt[4]{\bnut_{t}} \Vert^2]$ since $\bnut_{t}$ is close to $\bnu_t$.  The proof is then completed by combining the estimation of $\sum_{t=1}^T\dE[ \Vert\sqrt[4]{\bnut_{t}} \Vert^2]$ (Eq. (\ref{eq: stage_1_main})) and Eq. (\ref{lem: estimation}).

\section{Gap-closing upper bound on the iteration complexity of Adam}
\label{sec: gap_closing}
In this section, based on a refined proof of Stage II of Theorem \ref{thm: main} (see Appendix \ref{appen: main}) under the specific case $\eta= \Theta(1/\sqrt{T})$ and $\btwo =1 -\Theta(1/T)$, we show that the logarithmic factor in Theorem \ref{thm: main} can be removed and the lower bound can be achieved. Specifically, we have the following theorem.
\begin{theorem}
\label{thm: upper bound}
    Let Assumption \ref{assum: objective} and Assumption \ref{assum: noise} hold. Then, select the hyperparameters of Adam as $\eta=\frac{a}{\sqrt{T}}$, $\btwo=1-\frac{b}{T}$ and $\bone=c\sqrt \btwo $, where $a,b>0$ and $0\le c<1$ are independent of $T$. Then, let $\bw_{\tau}$ be the output of Adam in Algorithm \ref{alg: adam}, and we have
    \small 
\begin{align*}  
&\mathbb{E} \Vert \nabla f(\bw_{r}) \Vert\le  \sqrt{2\sum_{i=1}^d\sqrt{\bnu_{0,i}+3b\sigma_0^2}  
   +\frac{4D_2\sigma_1^2b}{\sqrt{T}}+\frac{256\sigma_1^2b}{(1-c)^2T}\sum_{i=1}^d \mathbb{E} \frac{\bG_{1,i}^2}{\sqrt{\bnut_{1,i}}}  
+\frac{16 D_1\sigma_1^2b}{\sqrt{T}}\ln \left(e+\frac{4 \tilde{D}\sigma_1^2b}{\sqrt{T}}\right)} \\  
&\times \sqrt{  
    \begin{aligned}  
    &\frac{2D_1}{\sqrt{T}}\sum_{i=1}^d  \ln \left( 2\sum_{i=1}^d\sqrt{\bnu_{0,i}+3b\sigma_0^2}  
   +\frac{4D_2\sigma_1^2b}{\sqrt{T}}+\frac{256\sigma_1^2b}{(1-c)^2T}\sum_{i=1}^d \mathbb{E} \frac{\bG_{1,i}^2}{\sqrt{\bnut_{1,i}}}  
+\frac{16 D_1\sigma_1^2b}{\sqrt{T}}\ln \left(e+\frac{4 \tilde{D}\sigma_1^2b}{\sqrt{T}}\right)\right) \\  
   &+ \frac{64}{(1-c)^2T}\sum_{i=1}^d \mathbb{E} \frac{\bG_{1,i}^2}{\sqrt{\bnut_{1,i}}}+\frac{D_2}{\sqrt{T}}  
   \end{aligned}  
   },  
\end{align*}

\normalsize
where
\small
\begin{gather*}
    D_1\triangleq\frac{32La}{b}\frac{(1+c)^3}{(1-c)^3}+\frac{32\sigma_0}{\sqrt{b}(1-c)^3}+\frac{(1+\sigma_1^2)\sigma_1^2L^2da^2}{(1-c)^4\sigma_0\sqrt{b^3}},
    D_2\triangleq \frac{8}{a}f(\bu_1)+ D_1\left(bd-\sum_{i=1}^d \ln \bnu_{0,i}\right).
\end{gather*}
\normalsize
As a result, let $\bA$ be Adam in Algorithm \ref{alg: adam}, we have $\mathcal{C}_{\varepsilon} (\bA,\Delta,L,\sigma^2) =\mathcal{O}(\frac{1}{\varepsilon^4})$.
\end{theorem}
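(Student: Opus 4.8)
The plan is to obtain Theorem~\ref{thm: upper bound} by specializing the general bound in Theorem~\ref{thm: main} to the scaling $\eta=a/\sqrt{T}$, $\btwo=1-b/T$, $\bone=c\sqrt{\btwo}$, while sharpening Stage~II of the proof to avoid the extra logarithmic factor. First I would verify that the stated hyperparameter choice is admissible: for $T$ large enough, $\sqrt{\btwo}-8\sigma_1^2(1-\btwo)\btwo^{-2}=\sqrt{1-b/T}-8\sigma_1^2(b/T)(1-b/T)^{-2}\ge c\sqrt{\btwo}$ since the subtracted term is $O(1/T)$ while $\sqrt{\btwo}-c\sqrt{\btwo}=(1-c)\sqrt{\btwo}$ is bounded below by a positive constant; so the hypothesis $0\le\bone\le\sqrt{\btwo}-8\sigma_1^2(1-\btwo)\btwo^{-2}$ holds eventually. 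Then I would substitute the scaling into the constants $C_1,C_2$ from Theorem~\ref{thm: main}: with $1-\btwo=b/T$ and $\eta=a/\sqrt{T}$ one reads off $C_1=\Theta(\sqrt{T}\,D_1)$ where $D_1$ collects the $T$-independent pieces (the three summands of $C_1$ become $\tfrac{32L\eta}{(1-\btwo)}\cdot\tfrac{(1+c)^3}{(1-c)^3}=\tfrac{32La}{b}\tfrac{(1+c)^3}{(1-c)^3}\sqrt T$, etc.), and $C_2=\Theta(\sqrt T\,D_2)+(\text{the }\bG_{1,i}^2/\sqrt{\bnut_{1,i}}\text{ term})$, using $-T\ln\btwo=-T\ln(1-b/T)\to b$ so that $\sum_i(\ln(1/\sqrt{\btwo\bnu_{0,i}})-T\ln\btwo)\to bd-\tfrac12\sum_i\ln\bnu_{0,i}+o(1)$, which explains the $bd-\sum_i\ln\bnu_{0,i}$ appearing in $D_2$ (the factor-of-two discrepancy being absorbed since these are upper bounds up to constants). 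The key quantitative point is that $2C_1\sum_i(\cdots)/(\text{the }\sqrt T\text{-growth of the conditioner sum})$ contributes an $O(1/\sqrt T)$ term, so dividing the conclusion of Theorem~\ref{thm: main} by $T$ gives $\mathbb E\|\nabla f(\bw_r)\|\le\sqrt{O(1/\sqrt T)}\cdot\sqrt{O(1)}$-type expression, i.e.\ $\mathbb E\|\nabla f(\bw_r)\|=O(T^{-1/4})$.

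Next, for the \emph{tightening} that removes the log factor, I would revisit Stage~II of the Appendix proof (the conversion from the regret-type bound $\sum_t\mathbb E[\|\bnut_t^{-1/4}\odot\bG_t\|^2]$ to $\mathbb E\sum_t\|\bG_t\|$ via Lemma~\ref{lem: sum_conditioner} and Eq.~(\ref{lem: estimation})). In the general theorem the log factor enters because Lemma~\ref{lem: sum_conditioner} bounds $\sum_{t,i}\mathbb E\sqrt{\bnut_{t,i}}$ by a term of the form $2(T+1)\sum_i\sqrt{\bnu_{0,i}+\sigma_0^2}+24d\tfrac{\sigma_1^2C_1}{\sqrt{\btwo}}\ln(d\tfrac{\sigma_1^2C_1}{\sqrt{\btwo}})+C_2$, and then Eq.~(\ref{lem: estimation}) multiplies by $\sum_t\mathbb E[\|\bnut_t^{-1/4}\odot\bG_t\|^2]$ which itself carries a $\ln(\bnu_{t,i}/\bnu_{0,i})$ inside. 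Under $\eta=\Theta(1/\sqrt T),\ 1-\btwo=\Theta(1/T)$ the dangerous product $\tfrac{\sigma_1^2C_1}{\sqrt{\btwo}}=\Theta(\sqrt T)$, so its logarithm is $\Theta(\ln T)$ — but it is multiplied by the coefficient $24d$ which, relative to the leading $2(T+1)\sum_i\sqrt{\bnu_{0,i}+\sigma_0^2}=\Theta(T)$ term, is lower order: $\sqrt T\ln T = o(T)$. Hence in the final $\sqrt{\cdot}\times\sqrt{\cdot}$ product the $\ln T$ appears only inside a $\sqrt{T^{-1/2}\ln T}$-type correction, not in the leading $O(T^{-1/4})$. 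I would make this precise by reorganizing: keep the $\ln$ terms explicitly (that is exactly what the messy expression in the theorem statement does — the $\ln(e+4\tilde D\sigma_1^2 b/\sqrt T)$ and $\sum_i\ln(\cdots)$ factors), observe each such $\ln$-bearing summand is $O(1/\sqrt T)$ after division by the appropriate power of $T$, and note $\ln(e+O(1/\sqrt T))$ is bounded by an absolute constant for all $T\ge1$; therefore the log does \emph{not} blow up and one genuinely gets $T=O(\varepsilon^{-4})$ rather than $\tilde O(\varepsilon^{-4})$.

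Finally I would assemble the iteration-complexity statement: setting the $O(T^{-1/4})$ bound on $\mathbb E\|\nabla f(\bw_r)\|$ equal to $\varepsilon$ and solving gives $T=O(\varepsilon^{-4})$, and since Adam with these hyperparameters is a valid instance of a first-order algorithm $\bA$, $\mathcal C_\varepsilon(\bA,\Delta,L,\sigma^2)=O(\varepsilon^{-4})$; combined with Proposition~\ref{prop: lower bound} this is tight. I expect the main obstacle to be the refined Stage~II bookkeeping: one must show that after the specialization the coefficient in front of every logarithmic term is dominated by the $\Theta(T)$ leading term of the conditioner sum $\sum_{t,i}\mathbb E\sqrt{\bnut_{t,i}}$, which requires carefully tracking how $C_1$ and $C_2$ scale (in particular that the $\bG_{1,i}^2/\sqrt{\bnut_{1,i}}$ contributions scale like $1/T$ and $1/\sqrt T$ as in $D_2$ and the final display) and verifying that $\ln(\bnu_{t,i}/\bnu_{0,i})$, after taking expectations and using $\bnut_{t+1}\ge\btwo\bnut_t$ together with Lemma~\ref{lem: sum_conditioner}, only produces $\ln(e+O(1/\sqrt T))=O(1)$ rather than $\Omega(\ln T)$ in the leading order. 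The rest is substitution and simplification of constants into the quantities $D_1,D_2,\tilde D$.
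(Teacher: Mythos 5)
Your setup is fine (the admissibility of $\bone=c\sqrt{\btwo}$ for large $T$, the identification $C_1=\Theta(\sqrt{T}D_1)$, $C_2=\Theta(\sqrt{T}D_2)$, and the final complexity bookkeeping all match the paper), but the central step — why the logarithm disappears — is misdiagnosed, and the route you propose would not deliver it. You locate the dangerous $\ln T$ in the term $24d\frac{\sigma_1^2C_1}{\sqrt{\btwo}}\ln\bigl(d\frac{\sigma_1^2C_1}{\sqrt{\btwo}}\bigr)$ of Lemma~\ref{lem: sum_conditioner} and argue it is dominated by the $\Theta(T)$ leading term of the conditioner sum. That is true for the second factor of the Cauchy--Schwarz product, but the logarithm that actually survives in Theorem~\ref{thm: main} sits at leading order in the \emph{first} factor: the regret bound (Eq.~(\ref{eq: stage_1_result})) contains $C_1\sum_i\dE\ln(\bnu_{T,i}/\bnu_{0,i})$ with $C_1=\Theta(\sqrt{T})$, and if you control $\dE\ln\bnu_{T,i}$ the way Theorem~\ref{thm: main} does — via the logarithm of the conditioner sum supplied by Lemma~\ref{lem: sum_conditioner}, which is $\Theta(T)$ — you get $\Theta(\sqrt{T}\ln T)$ there, hence $\dE\Vert\nabla f(\bw_r)\Vert=O(T^{-1/4}\sqrt{\ln T})$ and only $\mathcal{C}_\varepsilon=\mathcal{O}(\varepsilon^{-4}\log(1/\varepsilon))$. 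Your closing remark that $\ln(\bnu_{t,i}/\bnu_{0,i})$ "using Lemma~\ref{lem: sum_conditioner}" produces only $\ln(e+O(1/\sqrt{T}))$ is exactly the claim that needs a new argument and is false as stated: Lemma~\ref{lem: sum_conditioner} can only certify that the argument of that logarithm is $O(T)$, not $O(1)$.

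What the paper actually does in Appendix~\ref{appen: proof_specific} is replace Lemma~\ref{lem: sum_conditioner} by a different Stage-II estimate tailored to the scaling $1-\btwo=b/T$. Every $\bnut_{t,i}$ is bounded uniformly in $t$ by the single random quantity $\bnu_{0,i}+(1-\btwo)\bigl(\sum_{s=1}^T\vert\bg_{s,i}\vert^2+\sigma_0^2\bigr)$ (Eq.~(\ref{eq: estimation_specific})), and a fresh small/large-gradient split (Eq.~(\ref{eq: thm2_mid}) and the display after it) yields a self-bounding inequality for $\dE\sqrt{\tfrac{\bnu_{0,i}}{1-\btwo}+\sum_s\vert\bg_{s,i}\vert^2+\sigma_0^2}$ in which the regret bound enters only with the prefactor $2\sqrt{1-\btwo}\,\sigma_1^2=\Theta(1/\sqrt{T})$, cancelling the $\sqrt{T}$ growth of $D_1\sqrt{T}$, while the small-gradient part contributes $\sqrt{\bnu_{0,i}/(1-\btwo)+2\sigma_0^2T+\sigma_0^2}$, i.e.\ $O(\sqrt{T})$ before normalization. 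Solving that inequality (Eq.~(\ref{eq: d_3})) shows the \emph{normalized} accumulated conditioner is $O(1)$, so both $\dE\ln\bnu_{T,i}$ and the arguments of all remaining logarithms are $O(1)$ (this is the source of the $\ln\bigl(e+4\tilde{D}\sigma_1^2b/\sqrt{T}\bigr)$ terms), and the conditioner sum is bounded by $T$ times that $O(1)$ quantity. Without this construction — which is a genuinely new divide-and-conquer bound on the final accumulated second moment, not a reorganization of Theorem~\ref{thm: main}'s constants — your argument stalls at $\tilde{\mathcal{O}}(\varepsilon^{-4})$ and does not close the gap with Proposition~\ref{prop: lower bound}.
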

The proof of Theorem \ref{thm: upper bound} is based on a refined solution of Challenge II in the proof of Theorem \ref{thm: main} under the specific hyperparameter settings, and we defer the concrete proof to Appendix \ref{appen: proof_specific}. Below we discuss on Theorem \ref{thm: upper bound}, comparing it with practice, with Theorem \ref{thm: main} and existing convergence rate of Adam, and with the convergence rate of AdaGrad.

\textbf{Alignment with the practical hyperparameter choice.} The hyperparameter setting in Theorem \ref{thm: upper bound} indicates that to achieve the lower bound of iteration complexity, we need to select small $\eta$ and close-to-$1$ $\beta_2$, with less requirement over $\beta_1$. This agrees with the hyperparameter setting in deep learning libaries, for example, $\eta=10^{-3}$, $\btwo=0.999$, and $\bone=0.9$ in PyTorch.

\textbf{Comparison with Theorem \ref{thm: main} and existing works. } To our best knowledge, Theorem \ref{thm: upper bound} is the first to derive the iteration complexity $\mathcal{O}(\frac{1}{\varepsilon^4})$. Previously, the state-of-art iteration complexity is $\mathcal{O}(\frac{\log 1/\varepsilon}{\varepsilon^4})$ \cite{defossezsimple} where they additionally assume bounded gradient. Theorem \ref{thm: upper bound} is also tight than Theorem \ref{thm: main} (while Theorem \ref{thm: main} holds for more general hyperparameter settings). As discussed in Section \ref{sec: discussion}, if applying the hyperparameter setting in Theorem \ref{thm: upper bound} (i.e., $\eta=\frac{a}{\sqrt{T}}$, $\btwo=1-\frac{b}{T}$ and $\bone=c\sqrt \btwo $) to Theorem \ref{thm: main}, we will obtain that $\dE \Vert \nabla f(\bw_{\tau}) \Vert\le \mathcal{O}(\operatorname{poly}(\log T)/\sqrt[4]{T})$ and 
 $\mathcal{C}_{\varepsilon} (\bA,\Delta,L,\sigma^2) =\mathcal{O}(\frac{\log 1/\varepsilon}{\varepsilon^4})$, which is worse than the upper bound in  Theorem \ref{thm: upper bound} and the lower bound in Proposition \ref{prop: lower bound} by a logarithmic factor.

\textbf{Comparison with AdaGrad.} AdaGrad \cite{duchi2011adaptive} is another popular adaptive optimizer. Under Assumptions \ref{assum: objective} and \ref{assum: noise}, the state-of-art iteration complexity of AdaGrad is $\mathcal{O}(\frac{\log 1/\varepsilon}{\varepsilon^4})$ \cite{faw2022power}, which is worse than Adam by a logarithmic factor. Here we show that such a gap may be not due to the limitation of analysis, and can be explained by analogizing AdaGrad to Adam without momentum as SGD with diminishing learning rate to SGD with constant learning rate. To start with, the update rule of AdaGrad is given as 
\begin{gather}
\label{def: adagrad}
    \bnu_{t}=\bnu_{t-1}+ \bg_t^{\odot 2},
\bw_{t+1}= \bw_{t}-\eta \frac{1}{\sqrt{\bnu_{t}}}  \odot  \bg_t.
\end{gather}
We first show that in Algorithm \ref{alg: adam}, if we allow the hyperparameters to be dynamical, i.e., 
\small
\begin{gather}
\label{def: dyn_adam}
    \bnu_{t}=\beta_{2,t}\bnu_{t-1}+ (1-\beta_{2,t})\bg_t^{\odot 2},  \bom_{t}=\beta_{1,t}\bom_{t-1}+ (1-\beta_{1,t})\bg_t,
\bw_{t+1}= \bw_{t}-\eta_t \frac{1}{\sqrt{\bnu_{t}}}  \odot  \bom_t,
\end{gather}
\normalsize
then Adam is equivalent to AdaGrad by setting $\eta_t=\frac{\eta}{\sqrt{t}}$, $\beta_{1,t}=0$, and $\beta_{2,t}=1-\frac{1}{t}$. Specifically, by setting $\boldsymbol{\mu}_t=t\bnu_t$ in Eq. (\ref{def: dyn_adam}), we have Eq. (\ref{def: dyn_adam}) is equivalent to
with Eq. (\ref{def: adagrad}) (by replacing $\bnu_t$ by $\boldsymbol{\mu}_t$ in Eq. (\ref{def: adagrad})). Comparing the above hyperparameter setting with that in Theorem \ref{thm: upper bound}, we see that the above hyperparameter setting can be obtained by changing $T$ to $t$ and setting $c=0$ in Theorem \ref{thm: upper bound}. This is similar to the relationship between SGD with diminishing learning rate $\Theta(1/\sqrt{t})$ and SGD with diminishing learning rate $\Theta(1/\sqrt{T})$. Recall that the iteration complexity of SGD with diminishing learning rate $\Theta(1/\sqrt{t})$ also has an additional logarithmic factor than SGD with constant learning rate, which may explain the gap between AdaGrad and Adam.

\section{Limitations}
\label{sec: limitation}
Despite that our work provide the first result closing the upper bound and lower bound of the iteration complexity of Adam, there are several limitations listed as follows:

\textbf{Dependence over the dimension $d$.} The bounds in Theorem \ref{thm: main} and Theorem \ref{thm: upper bound} is monotonously increasing
with respect to $d$. This is undesired since the upper bound of iteration complexity of SGD is invariant with respect to $d$. Nevertheless, removing such an dependence over $d$ is technically hard since we need to deal with every coordinate separately due to coodinate-wise learning rate, while the descent lemma does not hold for a single coordinate but combines all coordinates together. To our best knowledge, all existing works on the convergene of Adam also suffers from the same problem. We leave removing the dependence over $d$ as an important future work.

\textbf{No better result with momentum.} It can be observed that in Theorem \ref{thm: main} and Theorem \ref{thm: upper bound}, the tightest bound is achieved when $\bone=0$ (i.e., no momentum is applied). This contradicts with the common wisdom that momentum helps to accelerate. Although the benefit of momentum is not very clear even for simple optimizer SGD with momentum, we view this as a limitation of our work and defer proving the benefit of momentum in Adam as a future work. Also, our result does not imply that setting $\bone$
 is not as critical as setting $\btwo$
. The primary objective of this paper is to characterize the dependence on $\varepsilon$
, and the importance of setting $\bone$
 might be justified in other ways or characterizations. To help readers gain a deeper understanding of this issue, we include experiments to illustrate the dependence of performance on $\bone$ in Appendix \ref{sec:exp}.

\begin{ack}
This work was founded by the CAS Project for Young Scientists in Basic Research under Grant No. YSBR-034 and the Innovation Funding of ICT, CAS under Grant No.E000000. 
\end{ack}

\bibliography{main}
\bibliographystyle{abbrvnat}

\newpage
\appendix
\section{Other Related works}
\label{sec: related works}
Section \ref{sec: counter} has provided a detailed discussion over existing convergence analysis of Adam. In this section, we briefly review other related works. Adam is proposed with a convergence analysis in online optimization \cite{kingma2014adam}. The proof, however, is latter shown to be flawed in \citet{reddi2019convergence} as it requires the adaptive learning rate of Adam to be non-increasing. This motivates a line of works modifying Adam to ensure convergence. The modifications include enforcing the adaptive learning rate to be non-increasing \citep{reddi2019convergence,chen2018convergence}, imposing upper bound and lower bound of the adaptive learning rate \citep{luo2019adaptive}, and using different approach to estimate second-order momentum \cite{zhou2018adashift,crawshaw2022robustness}. Recently, \citet{chen2023symbolic} discover a new optimizer Lion through Symbolic Discovery, which uses sign operation to replace the adaptive learning rate in Adam, achieving comparable performance of Adam with less memory costs.

\section{Auxilliary Lemmas}
The following two lemmas are useful when bounding the second-order term.
\begin{lemma}
    \label{lem: sum}
    Assume we have $0<\beta_2< 1$ and a sequence of real numbers $(a_n)_{n=1}^{\infty}$. Let $b_0>0$ and $b_n= \btwo b_{n-1}+(1-\btwo) a_n^2$. Then, we have
\begin{equation*}
     \sum_{n=1}^T \frac{a_n^2}{b_n}\le \frac{1}{1-\btwo}\left( \ln \left(\frac{b_T}{b_0}\right) - T \ln \btwo\right).
\end{equation*}
\end{lemma}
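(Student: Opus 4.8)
The plan is to exploit the recursion defining $b_n$ to rewrite each summand $a_n^2/b_n$ in a telescoping-friendly form and then apply the elementary inequality $\ln x \le x-1$. First I would note that since $b_0>0$, $0<\beta_2<1$, and $a_n^2\ge 0$, an immediate induction gives $b_n>0$ for all $n\ge 0$, so every term $a_n^2/b_n$ is well defined and nonnegative, and all logarithms below are of positive quantities.

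The key algebraic step is to observe that the recursion can be read as $(1-\beta_2)a_n^2 = b_n - \beta_2 b_{n-1}$, hence
\begin{equation*}
    \frac{a_n^2}{b_n} = \frac{1}{1-\beta_2}\cdot\frac{b_n-\beta_2 b_{n-1}}{b_n} = \frac{1}{1-\beta_2}\left(1-\frac{\beta_2 b_{n-1}}{b_n}\right).
\end{equation*}
Now I would apply $1-x\le \ln(1/x)$ (equivalently $\ln x \le x-1$), valid for all $x>0$, with $x=\beta_2 b_{n-1}/b_n>0$, to obtain
\begin{equation*}
    1-\frac{\beta_2 b_{n-1}}{b_n} \le \ln\frac{b_n}{\beta_2 b_{n-1}} = \ln\frac{b_n}{b_{n-1}} - \ln\beta_2.
\end{equation*}

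Summing this over $n=1,\dots,T$ and using that $\sum_{n=1}^T\bigl(\ln b_n - \ln b_{n-1}\bigr)$ telescopes to $\ln b_T - \ln b_0 = \ln(b_T/b_0)$, I get
\begin{equation*}
    \sum_{n=1}^T \frac{a_n^2}{b_n} \le \frac{1}{1-\beta_2}\left(\ln\frac{b_T}{b_0} - T\ln\beta_2\right),
\end{equation*}
which is exactly the claim. This argument is entirely routine; the only point requiring any care is confirming positivity of $b_n$ so that the logarithmic inequality applies termwise, and there is no real obstacle here — the lemma is a direct adaptation of the standard AdaGrad "sum of squared gradients over running sum" telescoping bound to the exponentially weighted recursion, with the factor $-T\ln\beta_2$ appearing precisely because each step discounts the previous accumulator by $\beta_2<1$ rather than keeping it intact.
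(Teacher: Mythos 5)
Your proof is correct: the rewriting $(1-\beta_2)a_n^2 = b_n-\beta_2 b_{n-1}$, the termwise bound $1-\tfrac{\beta_2 b_{n-1}}{b_n}\le \ln\tfrac{b_n}{b_{n-1}}-\ln\beta_2$ via $\ln x\le x-1$ (legitimate since $b_n>0$ by induction), and the telescoping sum give exactly the stated bound. The paper itself states this lemma without proof and simply invokes it at the end of the proof of Lemma~\ref{lem: sum_momentum}, and your argument is precisely the standard derivation it implicitly relies on, so there is nothing to add.
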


\begin{lemma}
\label{lem: sum_momentum}
Assume we have $0<\beta_1^2<\beta_2< 1$ and a sequence of real numbers $(a_n)_{n=1}^{\infty}$. Let $b_0>0$, $b_n= \btwo b_{n-1}+(1-\btwo) a_n^2$, $c_0=0$, and $c_n= \bone c_{n-1}+(1-\bone) a_n$. Then, we have
\begin{equation*}
     \sum_{n=1}^T \frac{\vert c_n \vert^2}{b_n}\le \frac{(1-\bone)^2}{(1-\frac{\bone}{\sqrt{\btwo}})^2(1-\btwo)} \left(\ln \left(\frac{b_T}{b_0}\right) - T \ln \btwo\right).
\end{equation*}
\end{lemma}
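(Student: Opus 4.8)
The plan is to reduce this momentum estimate to the momentum-free Lemma~\ref{lem: sum} by a single Cauchy--Schwarz step with a carefully chosen weight split. First I would unroll the EMA recursion for $c_n$: since $c_0=0$, one has $c_n=(1-\bone)\sum_{k=1}^{n}\bone^{\,n-k}a_k$, so that $|c_n|^2=(1-\bone)^2\big(\sum_{k=1}^n \bone^{\,n-k}a_k\big)^2$.

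The crux is to split each coefficient symmetrically around $\sqrt{\btwo}$ as $\bone^{\,n-k}=(\bone/\sqrt{\btwo})^{(n-k)/2}\,(\bone\sqrt{\btwo})^{(n-k)/2}$ and apply Cauchy--Schwarz with $u_k^2=(\bone/\sqrt{\btwo})^{\,n-k}$ and $v_k^2=(\bone\sqrt{\btwo})^{\,n-k}a_k^2$, giving
\[
|c_n|^2\le (1-\bone)^2\Big(\sum_{k=1}^n (\bone/\sqrt{\btwo})^{\,n-k}\Big)\Big(\sum_{k=1}^n (\bone\sqrt{\btwo})^{\,n-k}a_k^2\Big)\le \frac{(1-\bone)^2}{1-\bone/\sqrt{\btwo}}\sum_{k=1}^n (\bone\sqrt{\btwo})^{\,n-k}a_k^2,
\]
where the last inequality bounds the geometric sum using $\bone/\sqrt{\btwo}<1$, which is exactly what the hypothesis $\bone^2<\btwo$ buys us. Then I would sum over $n=1,\dots,T$, interchange the order of summation, and bound the resulting inner sum with the elementary monotonicity $b_n\ge \btwo^{\,n-k}b_k$ (immediate from $b_n\ge\btwo b_{n-1}$):
\[
\sum_{n=1}^T \frac{|c_n|^2}{b_n}\le \frac{(1-\bone)^2}{1-\bone/\sqrt{\btwo}}\sum_{k=1}^T a_k^2\sum_{n=k}^T \frac{(\bone\sqrt{\btwo})^{\,n-k}}{b_n}\le \frac{(1-\bone)^2}{1-\bone/\sqrt{\btwo}}\sum_{k=1}^T \frac{a_k^2}{b_k}\sum_{j\ge 0}(\bone/\sqrt{\btwo})^{j}=\frac{(1-\bone)^2}{(1-\bone/\sqrt{\btwo})^2}\sum_{k=1}^T \frac{a_k^2}{b_k}.
\]
Finally, applying Lemma~\ref{lem: sum} to $\sum_{k=1}^T a_k^2/b_k$ produces the factor $\tfrac{1}{1-\btwo}\big(\ln(b_T/b_0)-T\ln\btwo\big)$ and yields the claimed bound.

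The only genuine obstacle is picking the right weight split in the Cauchy--Schwarz step. Pulling out $\bone^{\,n-k}$ directly would leave a geometric ratio $\bone/\btwo$, which is vacuous when $\bone\ge\btwo$ (allowed here, e.g.\ $\bone=0.9$, $\btwo=0.85$), whereas the $\sqrt{\btwo}$-centered split makes every geometric ratio equal to $\bone/\sqrt{\btwo}<1$ and produces precisely two factors of $1/(1-\bone/\sqrt{\btwo})$ --- one from the Cauchy--Schwarz normalization and one from the geometric sum over $n$ --- matching the stated constant. Everything else is routine geometric-series bookkeeping and a direct appeal to Lemma~\ref{lem: sum}.
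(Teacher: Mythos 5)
Your proposal is correct and follows essentially the same route as the paper's proof: both unroll $c_n$, apply Cauchy--Schwarz with a weight split tuned so that every geometric ratio is $\bone/\sqrt{\btwo}<1$, use $b_n\ge\btwo^{\,n-k}b_k$ together with two geometric summations to reduce to $\sum_{k}a_k^2/b_k$, and finish with Lemma~\ref{lem: sum}. The only difference is cosmetic — the paper invokes $b_n\ge\btwo^{\,n-k}b_k$ before the Cauchy--Schwarz step while you invoke it after interchanging the sums — and the resulting constant $\frac{(1-\bone)^2}{(1-\bone/\sqrt{\btwo})^2(1-\btwo)}$ matches.
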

\begin{proof}
    To begin with,
    \begin{equation*}
        \frac{\vert c_n \vert}{\sqrt{b_n}}\le (1-\bone) \sum_{i=1}^n\frac{\bone^{n-i}\vert a_i\vert }{\sqrt{b_n}} \le (1-\bone) \sum_{i=1}^n\frac{\bone^{n-i}\vert a_i\vert }{\sqrt{b_n}}\le (1-\bone) \sum_{i=1}^n\left(\frac{\bone}{\sqrt{\btwo}}\right)^{n-i}\frac{\vert a_i\vert }{\sqrt{b_i}}.
    \end{equation*}

Applying Cauchy's inequality, we obtain
\begin{align*}
     &\frac{\vert c_n \vert^2}{b_n}\le (1-\bone)^2 \left(\sum_{i=1}^n\left(\frac{\bone}{\sqrt{\btwo}}\right)^{n-i}\frac{\vert a_i\vert }{\sqrt{b_i}}\right)^2
     \\
     \le& (1-\bone)^2 \left(\sum_{i=1}^n\left(\frac{\bone}{\sqrt{\btwo}}\right)^{n-i}\right)\left(\sum_{i=1}^n\left(\frac{\bone}{\sqrt{\btwo}}\right)^{n-i}\frac{\vert a_i\vert^2 }{b_i}\right) \le \frac{(1-\bone)^2}{1-\frac{\bone}{\sqrt{\btwo}}} \left(\sum_{i=1}^n\left(\frac{\bone}{\sqrt{\btwo}}\right)^{n-i}\frac{\vert a_i\vert^2 }{b_i}\right).
\end{align*}

Summing the above inequality over $n$ from $1$ to $T$ then leads to
\begin{align*}
    \sum_{n=1}^T \frac{\vert c_n \vert^2}{b_n}\le& \frac{(1-\bone)^2}{1-\frac{\bone}{\sqrt{\btwo}}} \sum_{n=1}^T\left(\sum_{i=1}^n\left(\frac{\bone}{\sqrt{\btwo}}\right)^{n-i}\frac{\vert a_i\vert^2 }{b_i}\right)= \frac{(1-\bone)^2}{1-\frac{\bone}{\sqrt{\btwo}}} \sum_{n=1}^T\frac{\vert a_n\vert^2 }{b_n}\left(\sum_{i=0}^{T-n}\left(\frac{\bone}{\sqrt{\btwo}}\right)^{i}\right)
    \\
    \le & \frac{(1-\bone)^2}{(1-\frac{\bone}{\sqrt{\btwo}})^2} \sum_{n=1}^T\frac{\vert a_n\vert^2 }{b_n}\le \frac{(1-\bone)^2}{(1-\frac{\bone}{\sqrt{\btwo}})^2(1-\btwo)} \left(\ln \left(\frac{b_T}{b_0}\right) - T \ln \btwo\right).
\end{align*}
The proof is completed.
\end{proof}

The following lemma bound the update norm of Adam.
\begin{lemma}
\label{lem: bounded_update}
    We have $\forall t\ge 1$, $\vert  \bw_{t+1,i}-\bw_{t,i}\vert \le \eta \frac{1-\bone}{\sqrt{1-\btwo}\sqrt{1-\frac{\bone^2}{\btwo}}} \le  \eta \frac{1-\bone}{\sqrt{1-\btwo}\sqrt{1-\frac{\bone}{\sqrt{\btwo}}}}$.
\end{lemma}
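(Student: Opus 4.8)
# Proof Proposal for Lemma \ref{lem: bounded_update}

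The plan is to work coordinate-wise and reduce everything to a single application of the Cauchy--Schwarz inequality together with a geometric-series bound. First I would write down the one-step update in coordinate $i$, namely $\bw_{t+1,i}-\bw_{t,i} = -\eta\,\bom_{t,i}/\sqrt{\bnu_{t,i}}$, so that the claim is equivalent to the bound $\bigl|\bom_{t,i}\bigr|/\sqrt{\bnu_{t,i}} \le \frac{1-\bone}{\sqrt{1-\btwo}\sqrt{1-\bone^2/\btwo}}$. Next I would unroll the two recursions: assuming $\bom_0=\vzero$ (the standard initialization used throughout), $\bom_{t,i}=(1-\bone)\sum_{j=1}^t \bone^{\,t-j}\bg_{j,i}$, and dropping the nonnegative $\btwo^t\bnu_{0,i}$ term gives $\bnu_{t,i}\ge (1-\btwo)\sum_{j=1}^t \btwo^{\,t-j}\bg_{j,i}^2$.

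The key step is to bound $\sum_{j=1}^t \bone^{\,t-j}|\bg_{j,i}|$ by splitting each summand as $\bone^{\,t-j}|\bg_{j,i}| = \bigl(\tfrac{\bone}{\sqrt{\btwo}}\bigr)^{t-j}\cdot \btwo^{(t-j)/2}|\bg_{j,i}|$ and applying Cauchy--Schwarz. The first factor contributes $\sum_{j=1}^t \bigl(\tfrac{\bone}{\sqrt{\btwo}}\bigr)^{2(t-j)} = \sum_{k=0}^{t-1}\bigl(\tfrac{\bone^2}{\btwo}\bigr)^{k}\le \frac{1}{1-\bone^2/\btwo}$, which is finite precisely because $\bone^2<\btwo$ under the hypotheses of Theorem \ref{thm: main}; the second factor contributes $\sum_{j=1}^t \btwo^{\,t-j}\bg_{j,i}^2 \le \bnu_{t,i}/(1-\btwo)$ by the lower bound above. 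Combining these yields $|\bom_{t,i}| \le (1-\bone)\cdot\frac{1}{\sqrt{1-\bone^2/\btwo}}\cdot\sqrt{\bnu_{t,i}/(1-\btwo)}$, and dividing by $\sqrt{\bnu_{t,i}}$ and multiplying by $\eta$ gives the first inequality.

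For the second inequality I would just use the factorization $1-\bone^2/\btwo = \bigl(1-\tfrac{\bone}{\sqrt{\btwo}}\bigr)\bigl(1+\tfrac{\bone}{\sqrt{\btwo}}\bigr) \ge \bigl(1-\tfrac{\bone}{\sqrt{\btwo}}\bigr)^2$ (since $1+\tfrac{\bone}{\sqrt{\btwo}}\ge 1-\tfrac{\bone}{\sqrt{\btwo}}\ge 0$), so $\sqrt{1-\bone^2/\btwo}\ge 1-\tfrac{\bone}{\sqrt{\btwo}}$ and hence $\tfrac{1}{\sqrt{1-\bone^2/\btwo}}\le \tfrac{1}{1-\bone/\sqrt{\btwo}}$. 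There is no real obstacle here; the only point requiring a little care is getting the correct exponent in the geometric series — pairing the momentum decay $\bone$ with the conditioner decay $\sqrt{\btwo}$ so that the ratio is exactly $\bone^2/\btwo$ rather than $\bone/\sqrt{\btwo}$ — and confirming that the hypothesis $\bone\le\sqrt{\btwo}-8\sigma_1^2(1-\btwo)\btwo^{-2}$ (in particular $\bone<\sqrt{\btwo}$) makes the series summable.
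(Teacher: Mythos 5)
Your argument for the first (and main) inequality is correct and is essentially the paper's own proof: unroll $\bom_{t,i}$ and lower-bound $\bnu_{t,i}$ by dropping the nonnegative $\btwo^{t}\bnu_{0,i}$ term, then apply Cauchy--Schwarz with the splitting $\bone^{t-j}=\bigl(\tfrac{\bone}{\sqrt{\btwo}}\bigr)^{t-j}\btwo^{(t-j)/2}$, so that the geometric series has ratio $\bone^{2}/\btwo<1$; this reproduces the bound $\eta\frac{1-\bone}{\sqrt{1-\btwo}\sqrt{1-\bone^{2}/\btwo}}$ exactly as in the paper.

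Your handling of the second inequality, however, does not prove the statement as written. The lemma's second bound has $\sqrt{1-\tfrac{\bone}{\sqrt{\btwo}}}$ in the denominator, so what is needed is $\frac{1}{\sqrt{1-\bone^{2}/\btwo}}\le\frac{1}{\sqrt{1-\bone/\sqrt{\btwo}}}$. Your factorization argument instead establishes $\frac{1}{\sqrt{1-\bone^{2}/\btwo}}\le\frac{1}{1-\bone/\sqrt{\btwo}}$, and since $0\le 1-\tfrac{\bone}{\sqrt{\btwo}}\le 1$ one has $\frac{1}{1-\bone/\sqrt{\btwo}}\ge\frac{1}{\sqrt{1-\bone/\sqrt{\btwo}}}$: being below the larger quantity $\frac{1}{1-\bone/\sqrt{\btwo}}$ does not place you below the target $\frac{1}{\sqrt{1-\bone/\sqrt{\btwo}}}$, so the chain of inequalities you propose does not yield the lemma's second bound (it only yields a strictly weaker one). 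The fix is a one-line comparison in the other direction: since $0\le\tfrac{\bone}{\sqrt{\btwo}}\le 1$ (guaranteed by $\bone\le\sqrt{\btwo}$ under the theorem's hypotheses), $\tfrac{\bone^{2}}{\btwo}=\bigl(\tfrac{\bone}{\sqrt{\btwo}}\bigr)^{2}\le\tfrac{\bone}{\sqrt{\btwo}}$, hence $1-\tfrac{\bone^{2}}{\btwo}\ge 1-\tfrac{\bone}{\sqrt{\btwo}}$, and taking square roots and reciprocals gives precisely the stated second inequality.
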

\begin{proof}
    We have that 
    \begin{align*}
       & \vert  \bw_{t+1,i}-\bw_{t,i}\vert =\eta \left \vert \frac{\bom_{t,i}}{\sqrt{\bnu_{t,i}}} \right\vert \le\eta\frac{\sum_{i=0}^{t-1} (1-\bone) \bone^{i} \vert \bg_{t-i,l}\vert }{\sqrt{\sum_{i=0}^{t-1} (1-\btwo) \btwo^{i} \vert \bg_{t-i,l}\vert^2+\btwo^t \bnu_{0,i}}}
        \\
    \le &\eta\frac{1-\bone}{\sqrt{1-\btwo}}\frac{\sqrt{\sum_{i=0}^{t-1} \btwo^{i} \vert \bg_{t-i,l}\vert^2}\sqrt{\sum_{i=0}^{t-1} \frac{\bone^{2i}}{\btwo^{i}} } }{\sqrt{\sum_{i=0}^{t-1} \btwo^{i} \vert \bg_{t-i,l}\vert^2}}\le \eta\frac{1-\bone}{\sqrt{1-\btwo}\sqrt{1-\frac{\bone^2}{\btwo}}}.
    \end{align*}
    Here the second inequality is due to Cauchy's inequality. The proof is completed.
\end{proof}

\section{Proof of Theorem \ref{thm: main}}
\label{appen: main}
This section collects the proof of Theorem \ref{thm: main}. As a part of the proof, we first provide formal descriptions of Lemma \ref{lem: auxilliary_rms}, Lemma \ref{lem: u_dynamics_informal}, and Lemma \ref{lem: sum_conditioner}, and their corresponding proofs. We then proceed to prove Theorem \ref{thm: main} leveraging these lemmas.

\subsection{Formal description of Lemma \ref{lem: auxilliary_rms}, Lemma \ref{lem: u_dynamics_informal}, and Lemma \ref{lem: sum_conditioner} and their proof}
\label{appen: lemma}
\begin{lemma}[Formal version of Lemma \ref{lem: auxilliary_rms}]\label{lem: auxilliary_adam}
    Let all conditions in Theorem \ref{thm: main} hold. Then, we have 
    \begin{align*}
        &\dE \left[ \left\langle\bG_t,-\frac{\eta}{1-\frac{\bone}{\sqrt{\btwo}}} \left(\frac{1}{\sqrt{\bnu_t}}-\frac{1}{\sqrt{\tilde{\bnu}_t}}\right)\odot \bom_t\right\rangle \right] \le \frac{5}{8} \sum_{i=1}^d  \eta\frac{1-\bone}{1-\frac{\bone}{\sqrt{\btwo}}}\dE\frac{\vert \bG_{t,i} \vert^2}{\sqrt{\tilde{\bnu}_{t,i}} }+\frac{2\eta \sqrt{1-\btwo}\sigma_0}{\left(1-\frac{\bone^2}{\btwo}\right)^2}\sum_{i=1}^d\dE \frac{  \bg_{t,i}^2}{\bnu_{t,i}}
    \\
    \nonumber
    &+ \eta \frac{4(1-\bone)}{(1-\frac{\bone}{\sqrt{\btwo}})^2 \sqrt{\btwo}} \sigma_1^2\sum_{i=1}^d\dE\left( \frac{\bG_{t-1,i}^2}{\sqrt{\btwo \bnut_{t,i}}}- \frac{\bG_{t,i}^2}{\sqrt{ \bnut_{t+1,i}}}\right)+\sum_{i=1}^d \frac{2\eta\sqrt{1-\beta_2}\sigma_0}{(1-\beta_1)(1-\frac{\bone}{\sqrt{\btwo}})}\dE \left[   \left(\frac{\vert  \bom_{t,i}\vert^2}{\bnu_{t,i}}\right)  \right]   
    \\
    &+ \frac{64(1+\sigma_1^2)\sigma_1^2L^2\eta^3d}{\btwo^{2}(1-\frac{\bone}{\sqrt{\btwo}})^3(1-\bone)\sigma_0\sqrt{ 1-\btwo}}\dE\left\Vert \frac{1}{\sqrt{\bnu_{t-1}}}\odot \bom_{t-1}\right\Vert^2.
    \end{align*}
\end{lemma}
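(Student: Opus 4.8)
\textbf{Proof proposal for Lemma \ref{lem: auxilliary_adam}.}

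The plan is to bound the error term $\dE[\langle \bG_t, -\frac{\eta}{1-\bone/\sqrt{\btwo}}(\frac{1}{\sqrt{\bnu_t}}-\frac{1}{\sqrt{\bnut_t}})\odot \bom_t\rangle]$ coordinate-wise, isolating one coordinate $i$ and working entirely with scalars $\bG_{t,i}, \bom_{t,i}, \bnu_{t,i}, \bnut_{t,i}$. First I would rewrite the difference of reciprocal square roots using the elementary identity $\frac{1}{\sqrt{a}}-\frac{1}{\sqrt{b}} = \frac{b-a}{\sqrt{a}\sqrt{b}(\sqrt{a}+\sqrt{b})}$ with $a=\bnu_{t,i}$ and $b=\bnut_{t,i}$; since $\bnu_t = \btwo\bnu_{t-1}+(1-\btwo)\bg_t^{\odot 2}$ and $\bnut_t = \btwo\bnu_{t-1}+(1-\btwo)\sigma_0^2\mathds{1}_d$, the numerator becomes $(1-\btwo)(\sigma_0^2 - \bg_{t,i}^2)$, a quantity that is $\fil_t$-conditionally controllable because $\dE^{|\fil_t}\bg_{t,i}^2 \le \sigma_0^2 + \sigma_1^2\bG_{t,i}^2$ by Assumption \ref{assum: affine}. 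This already exposes the two structural features the bound needs: a factor $(1-\btwo)$ that makes the whole error small, and a $\sigma_1^2\bG_{t,i}^2$ piece that must be charged partly against the ``First Order Main'' term $\eta\frac{1-\bone}{1-\bone/\sqrt{\btwo}}\frac{\bG_{t,i}^2}{\sqrt{\bnut_{t,i}}}$ (hence the $\frac{5}{8}$ coefficient) and partly against a telescoping-in-disguise term $\frac{\bG_{t-1,i}^2}{\sqrt{\btwo\bnut_{t,i}}}-\frac{\bG_{t,i}^2}{\sqrt{\bnut_{t+1,i}}}$.

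Next I would expand $\bom_t = (1-\bone)\sum_{j} \bone^{t-j}\bg_j$ and split the inner product. The ``diagonal'' part involving $\bg_t$ interacts with the current-step randomness; here the key move is that $\frac{1}{\sqrt{\bnu_t}}$ and $\bg_t$ are entangled, so I bound $|\bg_{t,i}|/\sqrt{\bnu_{t,i}} \le 1/\sqrt{1-\btwo}$ crudely where a full power of $\bg_{t,i}$ is not needed, and retain $\bg_{t,i}^2/\bnu_{t,i}$ where the second-order Lemma \ref{lem: sum}/\ref{lem: sum_momentum} machinery will later absorb it (these give the $\sum_i \dE \bg_{t,i}^2/\bnu_{t,i}$ and $\sum_i \dE|\bom_{t,i}|^2/\bnu_{t,i}$ terms). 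For the ``history'' part involving $\bg_j$ with $j<t$, I use that $\bnu_t$ and $\bnut_t$ differ from $\btwo\bnu_{t-1}$ only by $(1-\btwo)$-order terms, Lipschitzness of $x\mapsto 1/\sqrt{x}$ away from zero (here $\bnut_t \ge (1-\btwo)\sigma_0^2$, which is exactly why the $\sigma_0^2$ shift was inserted into $\bnut_t$), and the smoothness Assumption \ref{assum: objective} to relate $\bG_t$ to earlier gradients; this is where the final term with $L^2\eta^3 d$ and $\|\frac{1}{\sqrt{\bnu_{t-1}}}\odot\bom_{t-1}\|^2$ (the update norm squared, Lemma \ref{lem: bounded_update}) enters, together with a Young's inequality to separate the $\bG_{t,i}^2$-scaled factor from the update-norm factor. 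Throughout, geometric sums of $(\bone/\sqrt{\btwo})^{t-j}$ produce the $1/(1-\bone/\sqrt{\btwo})$ powers, and the condition $\bone \le \sqrt{\btwo} - 8\sigma_1^2(1-\btwo)\btwo^{-2}$ is what keeps the coefficient of the ``First Order Main'' charge below one.

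I expect the main obstacle to be the bookkeeping of the telescoping term: after writing $\sigma_1^2\bG_{t,i}^2$-type contributions, one must recognize the combination $\frac{\bG_{t-1,i}^2}{\sqrt{\btwo\bnut_{t,i}}}-\frac{\bG_{t,i}^2}{\sqrt{\bnut_{t+1,i}}}$ and verify it has the right sign structure and the right constant $\frac{4(1-\bone)}{(1-\bone/\sqrt{\btwo})^2\sqrt{\btwo}}\sigma_1^2$ so that, when summed over $t$ in the eventual proof of Theorem \ref{thm: main}, it does not telescope cleanly (because of the $1/\sqrt{\btwo}$ mismatch, exactly the phenomenon flagged in the remark after Lemma \ref{lem: auxilliary_rms}) but instead leaves an $\mathcal{O}((1-\btwo)\sum_t \xi_t)$ residual absorbable by the main term. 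Getting every constant to land on the stated values will require carefully tracking which Young's-inequality split is used at each step; I would do this by working backwards from the target coefficients ($\frac{5}{8}$, $2$, $4$, $64$) to calibrate the free parameters in each application of $ab \le \frac{\epsilon}{2}a^2 + \frac{1}{2\epsilon}b^2$. The analytic content beyond this is routine: elementary inequalities, Cauchy--Schwarz on the momentum sum, and the smoothness bound $\|\bG_t - \bG_{t-1}\| \le L\|\bw_t - \bw_{t-1}\|$.
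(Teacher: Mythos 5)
Your plan matches the paper's proof in all essential respects: the same reciprocal-square-root identity exposing the $(1-\btwo)(\sigma_0^2-\bg_{t,i}^{2})$ numerator, conditional expectation combined with the affine-noise bound $\dE^{|\fil_t}\bg_{t,i}^2\le\sigma_0^2+\sigma_1^2\bG_{t,i}^2$, a $\tfrac{5}{8}$ charge against the main term via calibrated Young/H\"older splits, the quasi-telescoping term $\frac{\bG_{t-1,i}^2}{\sqrt{\btwo\bnut_{t,i}}}-\frac{\bG_{t,i}^2}{\sqrt{\bnut_{t+1,i}}}$ obtained by comparing $\bnut_{t+1,i}$ with $\btwo\bnut_{t,i}$ and shifting the gradient index via $L$-smoothness plus the update norm (whence the $L^2\eta^3 d$ error with $\Vert\frac{1}{\sqrt{\bnu_{t-1}}}\odot\bom_{t-1}\Vert^2$), and the lower bound $\bnut_{t,i}\ge(1-\btwo)\sigma_0^2$ justifying the $\sigma_0^2$ shift. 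The only cosmetic differences are bookkeeping: the paper splits the numerator into its $\sigma_0^2$ and $\bg_{t,i}^2$ parts and controls $|\bom_{t,i}|/\sqrt{\bnu_{t,i}}$ wholesale via Lemma \ref{lem: bounded_update} rather than expanding $\bom_t$ into current-plus-history pieces, and the constraint $\bone\le\sqrt{\btwo}-8\sigma_1^2(1-\btwo)\btwo^{-2}$ is invoked only later, in the summation step of Theorem \ref{thm: main}, not inside this lemma.
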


\begin{proof}
To start with, 

\begin{align*}
    &\dE^{|\fil_t} \left[ \left\langle\bG_t,-\frac{\eta}{1-\frac{\bone}{\sqrt{\btwo}}} \left(\frac{1}{\sqrt{\bnu_t}}-\frac{1}{\sqrt{\tilde{\bnu}_t}}\right)\odot \bom_t\right\rangle \right] 
    \\
    =&  \dE^{|\fil_t} \left[ \left\langle\bG_t,-\frac{\eta}{1-\frac{\bone}{\sqrt{\btwo}}} \left(\frac{(1-\beta_2)(\sigma_0^2\mathds{1}_d-\bg_t^{\odot 2})}{\sqrt{\bnu_t}\sqrt{\tilde{\bnu}_t}(\sqrt{\bnu_t}+\sqrt{\tilde{\bnu}_t})}\right)\odot \bom_t\right\rangle \right]
    \\
    \le & \sum_{i=1}^d \frac{\eta}{1-\frac{\bone}{\sqrt{\btwo}}}\dE^{|\fil_t} \left[ \vert \bG_{t,i}\vert  \left(\frac{(1-\beta_2)(\sigma_0^2+\bg_{t,i}^{ 2})}{\sqrt{\bnu_{t,i}}\sqrt{\tilde{\bnu}_{t,i}}(\sqrt{\bnu_{t,i}}+\sqrt{\tilde{\bnu}_{t,i}})}\right)\vert  \bom_{t,i}\vert  \right]
    \\
    =&\underbrace{\sum_{i=1}^d \frac{\eta}{1-\frac{\bone}{\sqrt{\btwo}}}\dE^{|\fil_t} \left[ \vert \bG_{t,i}\vert  \left(\frac{(1-\beta_2)\bg_{t,i}^{ 2}}{\sqrt{\bnu_{t,i}}\sqrt{\tilde{\bnu}_{t,i}}(\sqrt{\bnu_{t,i}}+\sqrt{\tilde{\bnu}_{t,i}})}\right)\vert  \bom_{t,i}\vert  \right]}_{\text{I.1.1}}
    \\
    &+\underbrace{\sum_{i=1}^d \frac{\eta}{1-\frac{\bone}{\sqrt{\btwo}}}\dE^{|\fil_t} \left[ \vert \bG_{t,i}\vert  \left(\frac{(1-\beta_2)\sigma_0^2}{\sqrt{\bnu_{t,i}}\sqrt{\tilde{\bnu}_{t,i}}(\sqrt{\bnu_{t,i}}+\sqrt{\tilde{\bnu}_{t,i}})}\right)\vert  \bom_{t,i}\vert  \right]}_{\text{I.1.2}}.
\end{align*}
\normalsize

As for I.1.1, we have 
\begin{align}
\nonumber
&\sum_{i=1}^d \frac{\eta}{1-\frac{\beta_1}{\sqrt{\beta_2}}}\dE^{|\fil_t} \left[ \vert \bG_{t,i}\vert  \left(\frac{(1-\beta_2)\bg_{t,i}^{ 2}}{\sqrt{\bnu_{t,i}}\sqrt{\tilde{\bnu}_{t,i}}(\sqrt{\bnu_{t,i}}+\sqrt{\tilde{\bnu}_{t,i}})}\right)\vert  \bom_{t,i}\vert  \right]
    \\
    \nonumber
    \overset{(*)}{\le } &  \sum_{i=1}^d \frac{\eta(1-\bone)}{\left(\sqrt{1-\frac{\bone}{\sqrt{\btwo}}}\right)^3}\dE^{|\fil_t} \left[ \vert \bG_{t,i}\vert  \left(\frac{\sqrt{1-\beta_2} \bg_{t,i} ^2 }{\sqrt{\tilde{\bnu}_{t,i}}(\sqrt{\bnu_{t,i}}+\sqrt{\tilde{\bnu}_{t,i}})}\right)  \right]
    \\
    \nonumber
     \overset{(\circ)}{\le } &  \sum_{i=1}^d \frac{\eta(1-\bone)}{\left(\sqrt{1-\frac{\bone}{\sqrt{\btwo}}}\right)^3} \frac{\vert \bG_{t,i} \vert}{\sqrt{\tilde{\bnu}_{t,i}} }\sqrt{\dE^{|\fil_t}  \bg_{t,i}^2  }\sqrt{\dE^{|\fil_t} \frac{  \bg_{t,i}^2}{(\sqrt{\bnu_{t,i}}+\sqrt{\tilde{\bnu}_{t,i}})^2}  }
     \\
     \nonumber
     \overset{(\bullet)}{\le }& \sum_{i=1}^d\frac{\eta(1-\bone)\sqrt{1-\btwo}}{\left(\sqrt{1-\frac{\bone}{\sqrt{\btwo}}}\right)^3} \frac{\vert \bG_{t,i} \vert}{\sqrt{\tilde{\bnu}_{t,i}} }\sqrt{\sigma_0^2+\sigma_1^2 \bG_{t,i}^2 }\sqrt{\dE^{|\fil_t} \frac{  \bg_{t,i}^2}{(\sqrt{\bnu_{t,i}}+\sqrt{\tilde{\bnu}_{t,i}})^2}  }
     \\
     \nonumber
     \le & \sum_{i=1}^d \frac{\eta(1-\bone)\sqrt{1-\btwo}}{\left(\sqrt{1-\frac{\bone}{\sqrt{\btwo}}}\right)^3} \frac{\vert \bG_{t,i} \vert}{\sqrt{\tilde{\bnu}_{t,i}} }(\sigma_0+\sigma_1 \vert \bG_{t,i}\vert)\sqrt{\dE^{|\fil_t} \frac{  \bg_{t,i}^2}{(\sqrt{\bnu_{t,i}}+\sqrt{\tilde{\bnu}_{t,i}})^2}  },
\end{align}
where inequality $(*)$ uses Lemma \ref{lem: bounded_update}, inequality $(\circ)$ is due to Holder's inequality, and inequality $(\bullet)$ is due to Assumption \ref{assum: affine}. Applying mean-value inequality respectively to  $\sum_{i=1}^d \frac{\eta(1-\bone)\sqrt{1-\btwo}}{ \left(\sqrt{1-\frac{\bone}{\sqrt{\btwo}}}\right)^3}\dE^{|\fil_t} \frac{\vert \bG_{t,i} \vert}{\sqrt{\tilde{\bnu}_{t,i}} }\sigma_0\sqrt{\dE^{|\fil_t} \frac{  \bg_{t,i}^2}{(\sqrt{\bnu_{t,i}}+\sqrt{\tilde{\bnu}_{t,i}})^2}  }$ and $\sum_{i=1}^d \frac{\eta(1-\bone)\sqrt{1-\btwo}}{\left(\sqrt{1-\frac{\bone}{\sqrt{\btwo}}}\right)^3}\dE^{|\fil_t} \frac{\vert \bG_{t,i} \vert}{\sqrt{\tilde{\bnu}_{t,i}} }\sigma_1 \vert \bG_{t,i}\vert\sqrt{\dE^{|\fil_t} \frac{  \bg_{t,i}^2}{(\sqrt{\bnu_{t,i}}+\sqrt{\tilde{\bnu}_{t,i}})^2}  }$, we obtain that the right-hand-side of the above inequality can be bounded by 
\begin{align}
\nonumber
    &\frac{1}{8}\sum_{i=1}^d \eta \frac{1-\bone}{1-\frac{\bone}{\sqrt{\btwo}}}\sqrt{1-\btwo}\sigma_0\frac{\vert \bG_{t,i} \vert^2}{\tilde{\bnu}_{t,i} }+\frac{2\eta\sqrt{1-\btwo}\sigma_0}{\left(1-\frac{\bone}{\sqrt{\btwo}}\right)^2}\sum_{i=1}^d\dE^{|\fil_t} \frac{  \bg_{t,i}^2}{(\sqrt{\bnu_{t,i}}+\sqrt{\tilde{\bnu}_{t,i}})^2}
    \\
\nonumber
    &+ \frac{1}{8} \sum_{i=1}^d  \eta\frac{1-\bone}{1-\frac{\bone}{\sqrt{\btwo}}}\frac{\vert \bG_{t,i} \vert^2}{\sqrt{\tilde{\bnu}_{t,i}} }+2\eta \frac{(1-\btwo)(1-\bone)}{(1-\frac{\bone}{\sqrt{\btwo}})^2} \sigma_1^2\frac{\vert \bG_{t,i} \vert^2}{\sqrt{\tilde{\bnu}_{t,i}} }\dE^{|\fil_t} \sum_{i=1}^d\frac{  \bg_{t,i}^2}{(\sqrt{\bnu_{t,i}}+\sqrt{\tilde{\bnu}_{t,i}})^2}
    \\
\nonumber
    \le & \frac{1}{8}\sum_{i=1}^d \eta \frac{1-\bone}{1-\frac{\bone}{\sqrt{\btwo}}}\frac{\vert \bG_{t,i} \vert^2}{\sqrt{\tilde{\bnu}_{t,i} }}+\frac{2\eta \sqrt{1-\btwo}\sigma_0}{\left(1-\frac{\bone}{\sqrt{\btwo}}\right)^2}\sum_{i=1}^d\dE^{|\fil_t} \frac{  \bg_{t,i}^2}{\bnu_{t,i}}
    \\
    \label{eq: mid_result}
    &+ \frac{1}{8} \sum_{i=1}^d  \eta\frac{1-\bone}{1-\frac{\bone}{\sqrt{\btwo}}}\frac{\vert \bG_{t,i} \vert^2}{\sqrt{\tilde{\bnu}_{t,i}} }+2\eta \frac{(1-\btwo)(1-\bone)}{(1-\frac{\bone}{\sqrt{\btwo}})^2} \sigma_1^2\frac{\vert \bG_{t,i} \vert^2}{\sqrt{\tilde{\bnu}_{t,i}} }\dE^{|\fil_t} \sum_{i=1}^d\frac{  \bg_{t,i}^2}{(\sqrt{\bnu_{t,i}}+\sqrt{\tilde{\bnu}_{t,i}})^2}.
\end{align}
Here the inequality is due to $\bnut_{t,i}=(1-\btwo) \sigma_0^2+\btwo \bnu_{t-1,i} \ge (1-\btwo) \sigma_0^2$. Meanwhile, we have
\begin{align*}
   &\left( \frac{1}{\sqrt{\btwo \bnut_{t,i}}}- \frac{1}{\sqrt{ \bnut_{t+1,i}}}\right) \bG_{t,i}^2
   \\
   =& \frac{\bG_{t,i}^2((1-\btwo)^2\sigma_0^2+\btwo(1-\btwo)  \bg_{t,i}^2)}{\sqrt{\btwo \bnut_{t,i}}\sqrt{ \bnut_{t+1,i}} (\sqrt{\btwo \bnut_{t,i}}+\sqrt{ \bnut_{t+1,i}})} \ge \frac{\bG_{t,i}^2\btwo(1-\btwo)  \bg_{t,i}^2}{\sqrt{\btwo \bnut_{t,i}}\sqrt{ \bnut_{t+1,i}} (\sqrt{\btwo \bnut_{t,i}}+\sqrt{ \bnut_{t+1,i}})}
   \\
   \ge & \frac{\sqrt{\btwo}}{2}\frac{\bG_{t,i}^2(1-\btwo)  \bg_{t,i}^2}{\sqrt{ \bnut_{t,i}} (\sqrt{ \bnu_{t,i}}+\sqrt{ \bnut_{t,i}})^2}.
\end{align*}

Applying the above inequality back to Eq. (\ref{eq: mid_result}), we obtain that
\begin{align}
\nonumber
    &\sum_{i=1}^d \frac{\eta}{1-\beta_1}\dE^{|\fil_t} \left[ \vert \bG_{t,i}\vert  \left(\frac{(1-\beta_2)\bg_{t,i}^{ 2}}{\sqrt{\bnu_{t,i}}\sqrt{\tilde{\bnu}_{t,i}}(\sqrt{\bnu_{t,i}}+\sqrt{\tilde{\bnu}_{t,i}})}\right)\vert  \bom_{t,i}\vert  \right]
    \\
\nonumber
    \le & \frac{1}{4} \sum_{i=1}^d  \eta\frac{1-\bone}{1-\frac{\bone}{\sqrt{\btwo}}}\frac{\vert \bG_{t,i} \vert^2}{\sqrt{\tilde{\bnu}_{t,i}} }+\frac{2\eta \sqrt{1-\btwo}\sigma_0}{\left(1-\frac{\bone^2}{\btwo}\right)^2}\sum_{i=1}^d\dE^{|\fil_t} \frac{  \bg_{t,i}^2}{\bnu_{t,i}}
    \\
\label{eq: mid_result_2}
    &+ \eta \frac{4(1-\bone)}{(1-\frac{\bone}{\sqrt{\btwo}})^2 \sqrt{\btwo}} \sigma_1^2\sum_{i=1}^d\dE^{|\fil_t}\left( \frac{1}{\sqrt{\btwo \bnut_{t,i}}}- \frac{1}{\sqrt{ \bnut_{t+1,i}}}\right) \bG_{t,i}^2.
\end{align}

Furthermore, due to Assumption \ref{assum: objective}, we have (we define $G_0\triangleq G_1$)
\begin{align*}
    \bG_{t,i}^2\le & \bG_{t-1,i}^2+2\vert \bG_{t,i} \vert \vert \bG_{t,i}-\bG_{t-1,i}\vert  + 2(\bG_{t,i}-\bG_{t-1,i})^2
    \\
    \le & \bG_{t-1,i}^2+2L\vert \bG_{t,i} \vert \Vert \bw_{t}-\bw_{t-1}\Vert  + 2L^2 \Vert \bw_{t}-\bw_{t-1}\Vert^2,
\end{align*}
which further leads to
\begin{align*}
    &\frac{1}{\sqrt{\btwo \bnut_{t,i}}} \bG_{t,i}^2
    \\
    \le& \frac{1}{\sqrt{\btwo \bnut_{t,i}}} \left(\bG_{t-1,i}^2+2L\vert \bG_{t,i} \vert \Vert \bw_{t}-\bw_{t-1}\Vert  + 2L^2 \Vert \bw_{t}-\bw_{t-1}\Vert^2\right)
    \\
    \overset{(\circ)}{\le } &  \frac{1}{\sqrt{\btwo \bnut_{t,i}}} \bG_{t-1,i}^2+\frac{(1-\frac{\bone}{\sqrt{\btwo}})(1-\bone) \sqrt{\btwo}}{16\sigma_1^2}\frac{\vert \bG_{t,i}\vert^2}{\sqrt{ \bnut_{t,i}}} + \frac{16L^2\sigma_1^2}{\btwo^{\frac{3}{2}}(1-\frac{\bone}{\sqrt{\btwo}})(1-\bone)\sqrt{ \bnut_{t,i}}}\Vert \bw_{t}-\bw_{t-1}\Vert^2  
    \\
    &+ \frac{2L^2 }{\sqrt{\btwo \bnut_{t,i}}}\Vert \bw_{t}-\bw_{t-1}\Vert^2
    \\
    \le &\frac{1}{\sqrt{\btwo \bnut_{t,i}}} \bG_{t-1,i}^2+\frac{(1-\frac{\bone}{\sqrt{\btwo}})(1-\bone) \sqrt{\btwo}}{16\sigma_1^2}\frac{\vert \bG_{t,i}\vert^2}{\sqrt{ \bnut_{t,i}}} + \frac{16L^2\sigma_1^2\eta^2}{\btwo^{\frac{3}{2}}(1-\frac{\bone}{\sqrt{\btwo}})(1-\bone)\sigma_0\sqrt{ 1-\btwo}}\left\Vert \frac{1}{\sqrt{\bnu_{t-1}}}\odot \bom_{t-1}\right\Vert^2
        \end{align*}
    \begin{align*}
     &+ \frac{2L^2\eta^2}{\sigma_0\sqrt{\btwo (1-\btwo)}} \left\Vert \frac{1}{\sqrt{\bnu_{t-1}}}\odot \bom_{t-1}\right\Vert^2
\\
    \le &\frac{1}{\sqrt{\btwo \bnut_{t,i}}} \bG_{t-1,i}^2+\frac{(1-\frac{\bone}{\sqrt{\btwo}})(1-\bone) \sqrt{\btwo}}{16\sigma_1^2}\frac{\vert \bG_{t,i}\vert^2}{\sqrt{ \bnut_{t,i}}} + \frac{16(1+\sigma_1^2)L^2\eta^2}{\btwo^{\frac{3}{2}}(1-\frac{\bone}{\sqrt{\btwo}})(1-\bone)\sigma_0\sqrt{ 1-\btwo}}\left\Vert \frac{1}{\sqrt{\bnu_{t-1}}}\odot \bom_{t-1}\right\Vert^2 .
\end{align*}

Applying the above inequality back to Eq. (\ref{eq: mid_result_2}) leads to that
\begin{align}
\nonumber
   \text{I.1.1}=& \sum_{i=1}^d \frac{\eta}{1-\beta_1}\dE^{|\fil_t} \left[ \vert \bG_{t,i}\vert  \left(\frac{(1-\beta_2)\bg_{t,i}^{ 2}}{\sqrt{\bnu_{t,i}}\sqrt{\tilde{\bnu}_{t,i}}(\sqrt{\bnu_{t,i}}+\sqrt{\tilde{\bnu}_{t,i}})}\right)\vert  \bom_{t,i}\vert  \right]
   \\
\nonumber
\le & \frac{1}{2} \sum_{i=1}^d  \eta\frac{1-\bone}{1-\frac{\bone}{\sqrt{\btwo}}}\frac{\vert \bG_{t,i} \vert^2}{\sqrt{\tilde{\bnu}_{t,i}} }+\frac{2\eta \sqrt{1-\btwo}\sigma_0}{\left(1-\frac{\bone^2}{\btwo}\right)^2}\sum_{i=1}^d\dE^{|\fil_t} \frac{  \bg_{t,i}^2}{\bnu_{t,i}}
    \\
    \nonumber
    &+ \eta \frac{4(1-\bone)}{(1-\frac{\bone}{\sqrt{\btwo}})^2 \sqrt{\btwo}} \sigma_1^2\sum_{i=1}^d\dE^{|\fil_t}\left( \frac{\bG_{t-1,i}^2}{\sqrt{\btwo \bnut_{t,i}}}- \frac{\bG_{t,i}^2}{\sqrt{ \bnut_{t+1,i}}}\right)   
    \\
    \label{eq: estimation_I11}
    &+ \frac{64d(1+\sigma_1^2)\sigma_1^2L^2\eta^3}{\btwo^{2}(1-\frac{\bone}{\sqrt{\btwo}})^3(1-\bone)\sigma_0\sqrt{ 1-\btwo}}\left\Vert \frac{1}{\sqrt{\bnu_{t-1}}}\odot \bom_{t-1}\right\Vert^2.
\end{align}

As for I.1.2, we have
\begin{align}
\nonumber
     \text{I.1.2}=& \sum_{i=1}^d \frac{\eta}{1-\frac{\bone}{\sqrt{\btwo}}}\dE^{|\fil_t} \left[ \vert \bG_{t,i}\vert  \left(\frac{(1-\beta_2)\sigma_0^2}{\sqrt{\bnu_{t,i}}\sqrt{\tilde{\bnu}_{t,i}}(\sqrt{\bnu_{t,i}}+\sqrt{\tilde{\bnu}_{t,i}})}\right)\vert  \bom_{t,i}\vert  \right]
     \\
\nonumber
     \le &\sum_{i=1}^d \frac{\eta}{1-\frac{\bone}{\sqrt{\btwo}}}\dE^{|\fil_t} \left[ \vert \bG_{t,i}\vert  \left(\frac{\sqrt[4]{1-\beta_2}\sqrt {\sigma_0}}{\sqrt[4]{\tilde{\bnu}_{t,i}}\sqrt{\bnu_{t,i}}}\right)\vert  \bom_{t,i}\vert  \right]
     \\
\label{eq: estimation_I12}
     \le & \frac{1-\bone}{8(1-\frac{\bone}{\sqrt{\btwo}})}\sum_{i=1}^d \eta \frac{\vert \bG_{t,i} \vert^2}{\sqrt{\tilde{\bnu}_{t,i}} }+\sum_{i=1}^d \frac{2\eta\sqrt{1-\beta_2}\sigma_0}{(1-\beta_1)(1-\frac{\bone}{\sqrt{\btwo}})}\dE^{|\fil_t} \left[   \left(\frac{\vert  \bom_{t,i}\vert^2}{\bnu_{t,i}}\right)  \right].
\end{align}
With Inequalities (\ref{eq: estimation_I11}) and (\ref{eq: estimation_I12}), we conclude that
\begin{align*}
    \text{I.1}\le &  \frac{5}{8} \sum_{i=1}^d  \eta\frac{1-\bone}{1-\frac{\bone}{\sqrt{\btwo}}}\dE\frac{\vert \bG_{t,i} \vert^2}{\sqrt{\tilde{\bnu}_{t,i}} }+\frac{2\eta \sqrt{1-\btwo}\sigma_0}{\left(1-\frac{\bone^2}{\btwo}\right)^2}\sum_{i=1}^d\dE \frac{  \bg_{t,i}^2}{\bnu_{t,i}}
    \\
    \nonumber
    &+ \eta \frac{4(1-\bone)}{(1-\frac{\bone}{\sqrt{\btwo}})^2 \sqrt{\btwo}} \sigma_1^2\sum_{i=1}^d\dE\left( \frac{\bG_{t-1,i}^2}{\sqrt{\btwo \bnut_{t,i}}}- \frac{\bG_{t,i}^2}{\sqrt{ \bnut_{t+1,i}}}\right)+\sum_{i=1}^d \frac{2\eta\sqrt{1-\beta_2}\sigma_0}{(1-\beta_1)(1-\frac{\bone}{\sqrt{\btwo}})}\dE \left[   \left(\frac{\vert  \bom_{t,i}\vert^2}{\bnu_{t,i}}\right)  \right]   
    \\
    &+ \frac{64(1+\sigma_1^2)\sigma_1^2L^2\eta^3d}{\btwo^{2}(1-\frac{\bone}{\sqrt{\btwo}})^3(1-\bone)\sigma_0\sqrt{ 1-\btwo}}\dE\left\Vert \frac{1}{\sqrt{\bnu_{t-1}}}\odot \bom_{t-1}\right\Vert^2.
\end{align*}
\end{proof}

\begin{lemma}[Formal version of Lemma \ref{lem: u_dynamics_informal}]
    \label{lem: u_dynamics_formal} Let all conditions in Theorem \ref{thm: main} holds. Then, 
    \begin{align*}
    &\dE f(\bu_{t+1})
    \\
    \le & \dE f(\bu_t) -\frac{\eta}{4}\frac{1-\bone}{1-\frac{\bone}{\sqrt{\btwo}}}  \dE \left[ \eta \left\langle \Gt, -\frac{1}{\sqrt{\bnut_{t}}} \odot \bG_t\right\rangle\right]+\frac{2\eta \sqrt{1-\btwo}\sigma_0}{\left(1-\frac{\bone^2}{\btwo}\right)^2}\sum_{i=1}^d\dE \frac{  \bg_{t,i}^2}{\bnu_{t,i}}
    \\
    &+ \eta \frac{4}{(1-\frac{\bone}{\sqrt{\btwo}})^2 \sqrt{\btwo}} \sigma_1^2\sum_{i=1}^d\dE\left( \frac{1}{\sqrt{\btwo}}\xi_{t-1}- \xi_t\right)+\sum_{i=1}^d \frac{2\eta\sqrt{1-\beta_2}\sigma_0}{(1-\beta_1)(1-\frac{\bone}{\sqrt{\btwo}})}\dE \left[   \left(\frac{\vert  \bom_{t,i}\vert^2}{\bnu_{t,i}}\right)  \right]   
    \\
    &+ \frac{64(1+\sigma_1^2)\sigma_1^2L^2\eta^3d}{\btwo^{2}(1-\frac{\bone}{\sqrt{\btwo}})^3(1-\bone)\sigma_0\sqrt{ 1-\btwo}}\dE\left\Vert \frac{1}{\sqrt{\bnu_{t-1}}}\odot \bom_{t-1}\right\Vert^2+\frac{2\eta\sqrt{1-\btwo} \bone^2\sigma_0}{(1-\bone)(1-\frac{\bone}{\sqrt{\btwo}})\btwo}\sum_{i=1}^d \dE \left[   \frac{\vert  \bom_{t-1,i}\vert^2}{\bnu_{t-1,i}}  \right]
    \\
    &+L\dE \left[ 4\left( \frac{\frac{\bone}{\sqrt{\btwo}}}{1-\frac{\bone}{\sqrt{\btwo}}}\right)^2\eta^2\left\Vert \frac{1}{\sqrt{\bnu_{t-1}}}\odot \bom_{t-1}\right\Vert^2 +3\left(\frac{1}{1-\frac{\bone}{\sqrt{\btwo}}}\right)^2\eta^2\left\Vert  \frac{1}{\sqrt{\bnu_t}}\odot \bom_t   \right\Vert^2 \right].
\end{align*}
\end{lemma}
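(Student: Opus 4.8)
The plan is to run the descent lemma not on $\bw_t$ but on the shifted iterate $\bu_t=\frac{\bw_t-\frac{\bone}{\sqrt{\btwo}}\bw_{t-1}}{1-\frac{\bone}{\sqrt{\btwo}}}$, exactly in the spirit of the virtual‑iterate trick for SGD with momentum in \cite{liu2020improved}, whose one‑step increment is engineered so that the stochastic gradient, rather than the momentum buffer, shows up at leading order. First I would apply $L$‑smoothness to $f(\bu_{t+1})$ and take expectations, giving $\dE f(\bu_{t+1})\le\dE f(\bu_t)+\dE\langle\nabla f(\bu_t),\bu_{t+1}-\bu_t\rangle+\frac{L}{2}\dE\|\bu_{t+1}-\bu_t\|^2$. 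Using $\bw_{s+1}-\bw_s=-\eta\,\frac{1}{\sqrt{\bnu_s}}\odot\bom_s$ one gets $\bu_{t+1}-\bu_t=\frac{\eta}{1-\bone/\sqrt{\btwo}}\bigl(-\frac{1}{\sqrt{\bnu_t}}\odot\bom_t+\frac{\bone}{\sqrt{\btwo}}\frac{1}{\sqrt{\bnu_{t-1}}}\odot\bom_{t-1}\bigr)$, so the ``Second Order'' term, via $\|x+y\|^2\le 2\|x\|^2+2\|y\|^2$, is at most a multiple of $\eta^2\|\frac{1}{\sqrt{\bnu_t}}\odot\bom_t\|^2$ plus a multiple of $\eta^2\|\frac{1}{\sqrt{\bnu_{t-1}}}\odot\bom_{t-1}\|^2$; these feed the last line of the statement.

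Next I would eliminate the mismatch between $\nabla f(\bu_t)$ and $\bG_t=\nabla f(\bw_t)$. Since $\bu_t-\bw_t=\frac{\bone/\sqrt{\btwo}}{1-\bone/\sqrt{\btwo}}(\bw_t-\bw_{t-1})=-\frac{\bone/\sqrt{\btwo}}{1-\bone/\sqrt{\btwo}}\,\eta\,\frac{1}{\sqrt{\bnu_{t-1}}}\odot\bom_{t-1}$, smoothness gives $\|\nabla f(\bu_t)-\bG_t\|\le L\|\bu_t-\bw_t\|$, so Cauchy--Schwarz together with Young's inequality turns $\dE\langle\nabla f(\bu_t)-\bG_t,\bu_{t+1}-\bu_t\rangle$ into yet more terms of the form $\eta^2\|\frac{1}{\sqrt{\bnu_t}}\odot\bom_t\|^2$ and $\eta^2\|\frac{1}{\sqrt{\bnu_{t-1}}}\odot\bom_{t-1}\|^2$, which I merge with the second‑order term. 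It then remains to bound $\dE\langle\bG_t,\bu_{t+1}-\bu_t\rangle$.

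This is the crux. Introduce the $\fil_t$‑measurable surrogate $\bnut_t=\btwo\bnu_{t-1}+(1-\btwo)\sigma_0^2\mathds{1}_d$ and split
\begin{align*}
\bu_{t+1}-\bu_t={}&-\frac{\eta(1-\bone)}{1-\bone/\sqrt{\btwo}}\,\frac{1}{\sqrt{\bnut_t}}\odot\bg_t
-\frac{\eta}{1-\bone/\sqrt{\btwo}}\Bigl(\frac{1}{\sqrt{\bnu_t}}-\frac{1}{\sqrt{\bnut_t}}\Bigr)\odot\bom_t\\
&+\frac{\eta\bone}{1-\bone/\sqrt{\btwo}}\Bigl(\frac{1}{\sqrt{\btwo\bnu_{t-1}}}-\frac{1}{\sqrt{\bnut_t}}\Bigr)\odot\bom_{t-1},
\end{align*}
where the first term uses the identity $-\bom_t+\bone\bom_{t-1}=-(1-\bone)\bg_t$. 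Because $\bnut_t$ is $\fil_t$‑measurable and $\dE^{|\fil_t}\bg_t=\bG_t$, the first term contributes exactly $-\frac{\eta(1-\bone)}{1-\bone/\sqrt{\btwo}}\dE\|\frac{1}{\sqrt[4]{\bnut_t}}\odot\bG_t\|^2$, the genuinely negative descent term. The second term is precisely the quantity bounded by Lemma~\ref{lem: auxilliary_adam}, which returns a $\tfrac58$‑multiple of $\sum_i\eta\frac{1-\bone}{1-\bone/\sqrt{\btwo}}\dE\frac{\bG_{t,i}^2}{\sqrt{\bnut_{t,i}}}$, the non‑telescoping $\frac{1}{\sqrt{\btwo}}\xi_{t-1}-\xi_t$ term (after rewriting $\frac{\bG_{t-1,i}^2}{\sqrt{\btwo\bnut_{t,i}}}-\frac{\bG_{t,i}^2}{\sqrt{\bnut_{t+1,i}}}$ via $\xi$), and the small $\bg_{t,i}^2/\bnu_{t,i}$, $\bom_{t,i}^2/\bnu_{t,i}$ and $\|\frac{1}{\sqrt{\bnu_{t-1}}}\odot\bom_{t-1}\|^2$ pieces that appear verbatim in the statement. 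For the third term I would compute $\frac{1}{\sqrt{\btwo\bnu_{t-1,i}}}-\frac{1}{\sqrt{\bnut_{t,i}}}=\frac{(1-\btwo)\sigma_0^2}{\sqrt{\btwo\bnu_{t-1,i}}\sqrt{\bnut_{t,i}}(\sqrt{\btwo\bnu_{t-1,i}}+\sqrt{\bnut_{t,i}})}$ and bound it by $\frac{\sqrt{1-\btwo}\sigma_0}{\btwo\bnu_{t-1,i}}$ using $\bnut_{t,i}\ge\max\{\btwo\bnu_{t-1,i},(1-\btwo)\sigma_0^2\}$; a short application of Young's inequality then splits the resulting $\sum_i|\bG_{t,i}||\bom_{t-1,i}|$‑type sum into a $\tfrac18$‑multiple of $\sum_i\eta\frac{1-\bone}{1-\bone/\sqrt{\btwo}}\dE\frac{\bG_{t,i}^2}{\sqrt{\bnut_{t,i}}}$ and the residual $\frac{\bom_{t-1,i}^2}{\bnu_{t-1,i}}$ term of the statement (Lemma~\ref{lem: bounded_update} is convenient for the residual). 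Collecting everything and taking full expectations, the net coefficient on $\sum_i\eta\frac{1-\bone}{1-\bone/\sqrt{\btwo}}\dE\frac{\bG_{t,i}^2}{\sqrt{\bnut_{t,i}}}$ is $-(1-\tfrac58-\tfrac18)=-\tfrac14$, which is the descent coefficient in the lemma, and every remaining quantity matches a term in the statement.

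\textbf{Main obstacle.} The delicate step is this last piece of bookkeeping: one must verify that the positive $\frac{\bG_{t,i}^2}{\sqrt{\bnut_{t,i}}}$ contributions returned by Lemma~\ref{lem: auxilliary_adam} and by the Young split of the third term together have total coefficient strictly below $1$, so that the bound is a genuine descent inequality --- this is exactly what the hyperparameter condition $\bone\le\sqrt{\btwo}-8\sigma_1^2(1-\btwo)\btwo^{-2}$ secures (it is consumed upstream, inside Lemma~\ref{lem: auxilliary_adam}). The secondary point to watch is that $\frac{1}{\sqrt{\btwo}}\xi_{t-1}-\xi_t$ is not a telescoping increment because of the $1/\sqrt{\btwo}$ factor; here I would simply carry it along and only later (in the proof of Theorem~\ref{thm: main}) sum it over $t$ using $\sum_t(\frac{1}{\sqrt{\btwo}}\xi_{t-1}-\xi_t)=\mathcal{O}((1-\btwo)\sum_t\xi_t)$ together with $\bnut_{t+1}\ge\btwo\bnut_t$, so that it too gets absorbed into the descent term.
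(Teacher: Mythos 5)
Your proposal follows essentially the same route as the paper's own proof: descent lemma on the auxiliary iterate $\bu_t$, the same three-way decomposition of $\bu_{t+1}-\bu_t$ with the surrogate $\bnut_t$, Lemma~\ref{lem: auxilliary_adam} for the $\bom_t$-error, a Young split of the $\bom_{t-1}$-error and of the $\nabla f(\bu_t)-\bG_t$ mismatch, and the same $1-\tfrac58-\tfrac18=\tfrac14$ bookkeeping. The only point to tighten is your intermediate bound $\frac{1}{\sqrt{\btwo\bnu_{t-1,i}}}-\frac{1}{\sqrt{\bnut_{t,i}}}\le\frac{\sqrt{1-\btwo}\,\sigma_0}{\btwo\bnu_{t-1,i}}$ for the third term: to land exactly on the stated residual $\frac{\bom_{t-1,i}^2}{\bnu_{t-1,i}}$ after the Young split one should keep a $\sqrt[4]{\bnut_{t,i}}$ factor in the denominator (as the paper does, bounding by $\frac{\sqrt[4]{1-\btwo}\sqrt{\sigma_0}}{\sqrt{\btwo\bnu_{t-1,i}}\sqrt[4]{\bnut_{t,i}}}$), since your coarser form leaves a $\sqrt{\bnut_{t,i}}/\bnu_{t-1,i}^2$ residue that is not controlled by the stated term.
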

\begin{proof}
    According to the definition of $\bu_t$, we have
\begin{align*}
    \bu_{t+1}-\bu_t=&\frac{\bw_{t+1}-\bw_t}{1-\frac{\bone}{\sqrt{\btwo}}}-\frac{\bone}{\sqrt{\btwo}} \frac{\bw_{t}-\bw_{t-1}}{1-\frac{\bone}{\sqrt{\btwo}}}
    \\
    =& -\frac{\eta}{1-\frac{\bone}{\sqrt{\btwo}}} \frac{1}{\sqrt{\bnu_t}}\odot \bom_t+\bone \frac{\eta}{1-\frac{\bone}{\sqrt{\btwo}}} \frac{1}{\sqrt{\btwo\bnu_{t-1}}}\odot \bom_{t-1}
    \\
    =& -\frac{\eta}{1-\frac{\bone}{\sqrt{\btwo}}} \frac{1}{\sqrt{\tilde{\bnu}_t}}\odot \bom_t+\bone \frac{\eta}{1-\frac{\bone}{\sqrt{\btwo}}} \frac{1}{\sqrt{\bnut_{t}}}\odot \bom_{t-1}
    \\
    & -\frac{\eta}{1-\frac{\bone}{\sqrt{\btwo}}} \left(\frac{1}{\sqrt{\bnu_t}}-\frac{1}{\sqrt{\tilde{\bnu}_t}}\right)\odot \bom_t+\bone \frac{\eta}{1-\frac{\bone}{\sqrt{\btwo}}} \left(\frac{1}{\sqrt{\btwo\bnu_{t-1}}}-\frac{1}{\sqrt{\bnut_{t}}}\right)\odot \bom_{t-1}
    \\
    \overset{(*)}{=}& -\eta \frac{1-\bone}{1-\frac{\bone}{\sqrt{\btwo}}} \frac{1}{\sqrt{\tilde{\bnu}_t}}\odot \bg_t
    \\
    & -\frac{\eta}{1-\frac{\bone}{\sqrt{\btwo}}} \left(\frac{1}{\sqrt{\bnu_t}}-\frac{1}{\sqrt{\tilde{\bnu}_t}}\right)\odot \bom_t+\bone \frac{\eta}{1-\frac{\bone}{\sqrt{\btwo}}} \left(\frac{1}{\sqrt{\btwo\bnu_{t-1}}}-\frac{1}{\sqrt{\bnut_{t}}}\right)\odot \bom_{t-1},
\end{align*}
where Eq. $(*)$ is due to $\bom_t =\bone \bom_{t-1}+(1-\bone)\bg_t$.

Applying the above equation to the "First Order" term, we find that it can be decomposed as
\begin{align*}
    &\dE \left[ \left\langle \nabla f(\bu_t) , \bu_{t+1}-\bu_t \right\rangle \right]
    \\
    =&\dE \left[ \left\langle\bG_t, \bu_{t+1}-\bu_t \right\rangle \right]+\dE \left[ \left\langle \nabla f(\bu_t)-G_t, \bu_{t+1}-\bu_t \right\rangle \right]
    \\
    =&\dE \left[ \left\langle\bG_t, -\eta  \frac{1}{\sqrt{\tilde{\bnu}_t}}\odot \bg_t \right\rangle \right]+\dE \left[ \left\langle\bG_t,-\frac{\eta}{1-\frac{\bone}{\sqrt{\btwo}}} \left(\frac{1}{\sqrt{\bnu_t}}-\frac{1}{\sqrt{\tilde{\bnu}_t}}\right)\odot \bom_t\right\rangle \right]
    \\
    &+\dE \left[ \left\langle\bG_t,\bone \frac{\eta}{1-\frac{\bone}{\sqrt{\btwo}}} \left(\frac{1}{\sqrt{\btwo\bnu_{t-1}}}-\frac{1}{\sqrt{\bnut_{t}}}\right)\odot \bom_{t-1}\right\rangle \right]+\dE \left[ \left\langle \nabla f(\bu_t)-\bG_t, \bu_{t+1}-\bu_t \right\rangle \right]
    \\
    =&-\eta\frac{1-\bone}{1-\frac{\bone}{\sqrt{\btwo}}}\dE \left    \Vert \frac{1}{\sqrt[4]{\tilde{\bnu}_t}}\odot \bG_t \right\Vert^2 +\underbrace{\dE \left[ \left\langle\bG_t,-\frac{\eta}{1-\frac{\bone}{\sqrt{\btwo}}} \left(\frac{1}{\sqrt{\bnu_t}}-\frac{1}{\sqrt{\tilde{\bnu}_t}}\right)\odot \bom_t\right\rangle \right]}_{\text{I.1}}
    \\
    &+\underbrace{\dE \left[ \left\langle\bG_t,\bone \frac{\eta}{1-\frac{\bone}{\sqrt{\btwo}}} \left(\frac{1}{\sqrt{\btwo\bnu_{t-1}}}-\frac{1}{\sqrt{\bnut_{t}}}\right)\odot \bom_{t-1}\right\rangle \right]}_{\text{I.2}}+\underbrace{\dE \left[ \left\langle \nabla f(\bu_t)-\bG_t, \bu_{t+1}-\bu_t \right\rangle \right]}_{\text{I.3}}.
\end{align*}

Here we apply Lemma \ref{lem: auxilliary_adam} to bound I.1. We proceed by bounding  I.2 and I.3 respectively.

As for I.2, we have
\begin{align*}
    \text{I.2} =& \dE \left[ \left\langle\bG_t,\bone \frac{\eta}{1-\frac{\bone}{\sqrt{\btwo}}} \left(\frac{1}{\sqrt{\btwo\bnu_{t-1}}}-\frac{1}{\sqrt{\bnut_{t}}}\right)\odot \bom_{t-1}\right\rangle \right]
    \\
    \le & \frac{\eta \bone}{1-\frac{\bone}{\sqrt{\btwo}}}\sum_{i=1}^d \dE \left[ \vert \bG_{t,i} \vert  \left\vert \frac{1}{\sqrt{\btwo\bnu_{t-1,i}}}-\frac{1}{\sqrt{\bnut_{t,i}}}\right\vert \vert  \bom_{t-1,i}\vert  \right]
    \\
    =& \frac{\eta \bone}{1-\frac{\bone}{\sqrt{\btwo}}}\sum_{i=1}^d \dE \left[ \vert \bG_{t,i} \vert  \left\vert \frac{(1-\btwo) \sigma_0^2}{\sqrt{\btwo\bnu_{t-1,i}}\sqrt{\bnut_{t,i}}(\sqrt{\bnut_{t,i}}+\sqrt{\btwo\bnu_{t-1,i}})}\right\vert \vert  \bom_{t-1,i}\vert  \right]
     \\
    =& \frac{\eta \bone}{1-\frac{\bone}{\sqrt{\btwo}}}\sum_{i=1}^d \dE \left[ \vert \bG_{t,i} \vert  \left\vert \frac{\sqrt[4]{1-\btwo} \sqrt{\sigma_0}}{\sqrt{\btwo\bnu_{t-1,i}}\sqrt[4]{\bnut_{t,i}}}\right\vert \vert  \bom_{t-1,i}\vert  \right]
    \\
    \le & \frac{1}{8}\frac{1-\bone}{1-\frac{\bone}{\sqrt{\btwo}}}\sum_{i=1}^d \eta \dE\frac{\vert \bG_{t,i} \vert^2}{\sqrt{\tilde{\bnu}_{t,i}} } +\frac{2\eta\sqrt{1-\btwo} \bone^2\sigma_0}{(1-\bone)(1-\frac{\bone}{\sqrt{\btwo}})\btwo}\sum_{i=1}^d \dE \left[   \frac{\vert  \bom_{t-1,i}\vert^2}{\bnu_{t-1,i}}  \right].
\end{align*}

As for I.3, we directly apply Assumption \ref{assum: objective} and obtain
\begin{align*}
    \text{I.3} =& \dE \left[ \left\langle \nabla f(\bu_t)-\bG_t, \bu_{t+1}-\bu_t \right\rangle \right]
    \\
    \le &\dE \left[ \Vert  \nabla f(\bu_t)-\bG_t\Vert \Vert \bu_{t+1}-\bu_t \Vert \right]
    \\
    \le &L\dE \left[ \Vert  \bu_t-\bw_t\Vert \Vert \bu_{t+1}-\bu_t \Vert \right]
    \\
    =& L\dE \left[  \frac{\frac{\bone}{\sqrt{\btwo}}}{1-\frac{\bone}{\sqrt{\btwo}}}\Vert \bw_t -\bw_{t-1}\Vert \left(\frac{\frac{\bone}{\sqrt{\btwo}}}{1-\frac{\bone}{\sqrt{\btwo}}}\Vert \bw_{t+1}-\bw_t   \Vert+\frac{\frac{\bone}{\sqrt{\btwo}}}{1-\frac{\bone}{\sqrt{\btwo}}}\Vert \bw_{t}-\bw_{t-1}   \Vert\right) \right]
    \\
    \le & L\dE \left[  \frac{\frac{\bone}{\sqrt{\btwo}}}{1-\frac{\bone}{\sqrt{\btwo}}}\Vert \bw_t -\bw_{t-1}\Vert \left(\frac{1}{1-\frac{\bone}{\sqrt{\btwo}}}\Vert \bw_{t+1}-\bw_t   \Vert+\frac{\frac{\bone}{\sqrt{\btwo}}}{1-\frac{\bone}{\sqrt{\btwo}}}\Vert \bw_{t}-\bw_{t-1}   \Vert\right) \right]
    \\
    \le &L\dE \left[ 2\left( \frac{\frac{\bone}{\sqrt{\btwo}}}{1-\frac{\bone}{\sqrt{\btwo}}}\right)^2\Vert \bw_t -\bw_{t-1}\Vert^2 +\frac{1}{4}\left(\frac{1}{1-\frac{\bone}{\sqrt{\btwo}}}\right)^2\Vert \bw_{t+1}-\bw_t   \Vert^2 \right]
    \\
    \le &L\dE \left[ 2\left( \frac{\frac{\bone}{\sqrt{\btwo}}}{1-\frac{\bone}{\sqrt{\btwo}}}\right)^2\eta^2\left\Vert \frac{1}{\sqrt{\bnu_{t-1}}}\odot \bom_{t-1}\right\Vert^2 +\frac{1}{4}\left(\frac{1}{1-\frac{\bone}{\sqrt{\btwo}}}\right)^2\eta^2\left\Vert  \frac{1}{\sqrt{\bnu_t}}\odot \bom_t   \right\Vert^2 \right].
\end{align*}

All in all, we summarize that the "First Order" term can be bounded by 
\begin{align*}
   & -\frac{\eta}{4}\frac{1-\bone}{1-\frac{\bone}{\sqrt{\btwo}}}\dE \left    \Vert \frac{1}{\sqrt[4]{\tilde{\bnu}_t}}\odot \bG_t \right\Vert^2+\frac{2\eta \sqrt{1-\btwo}\sigma_0}{\left(1-\frac{\bone^2}{\btwo}\right)^2}\sum_{i=1}^d\dE \frac{  \bg_{t,i}^2}{\bnu_{t,i}}
    \\
    \nonumber
    &+ \eta \frac{4(1-\bone)}{(1-\frac{\bone}{\sqrt{\btwo}})^2 \sqrt{\btwo}} \sigma_1^2\sum_{i=1}^d\dE\left( \frac{\bG_{t-1,i}^2}{\sqrt{\btwo \bnut_{t,i}}}- \frac{\bG_{t,i}^2}{\sqrt{ \bnut_{t+1,i}}}\right)+\sum_{i=1}^d \frac{2\eta\sqrt{1-\beta_2}\sigma_0}{(1-\beta_1)(1-\frac{\bone}{\sqrt{\btwo}})}\dE \left[   \left(\frac{\vert  \bom_{t,i}\vert^2}{\bnu_{t,i}}\right)  \right]   
    \\
    &+ \frac{64(1+\sigma_1^2)\sigma_1^2L^2\eta^3d}{\btwo^{2}(1-\frac{\bone}{\sqrt{\btwo}})^3(1-\bone)\sigma_0\sqrt{ 1-\btwo}}\dE\left\Vert \frac{1}{\sqrt{\bnu_{t-1}}}\odot \bom_{t-1}\right\Vert^2+\frac{2\eta\sqrt{1-\btwo} \bone^2\sigma_0}{(1-\bone)(1-\frac{\bone}{\sqrt{\btwo}})\btwo}\sum_{i=1}^d \dE \left[   \frac{\vert  \bom_{t-1,i}\vert^2}{\bnu_{t-1,i}}  \right]
    \\
    &+L\dE \left[ 2\left( \frac{\frac{\bone}{\sqrt{\btwo}}}{1-\frac{\bone}{\sqrt{\btwo}}}\right)^2\eta^2\left\Vert \frac{1}{\sqrt{\bnu_{t-1}}}\odot \bom_{t-1}\right\Vert^2 +\frac{1}{4}\left(\frac{1}{1-\frac{\bone}{\sqrt{\btwo}}}\right)^2\eta^2\left\Vert  \frac{1}{\sqrt{\bnu_t}}\odot \bom_t   \right\Vert^2 \right].
\end{align*}

Furthermore, the "Second Order" term can be directly bounded by
\begin{align*}
    \frac{L}{2}\dE \Vert \bu_{t+1}-\bu_t \Vert^2 =&  \frac{L}{2} \left\Vert \frac{\bw_{t+1}-\bw_t}{1-\frac{\bone}{\sqrt{\btwo}}}-\frac{\bone}{\sqrt{\btwo}} \frac{\bw_{t}-\bw_{t-1}}{1-\frac{\bone}{\sqrt{\btwo}}}\right\Vert^2
    \\
    \le & 2L \dE\left\Vert \frac{\bw_{t+1}-\bw_t}{1-\frac{\bone}{\sqrt{\btwo}}}\right\Vert^2 +2L\dE\left\Vert \frac{\bone}{\sqrt{\btwo}} \frac{\bw_{t}-\bw_{t-1}}{1-\frac{\bone}{\sqrt{\btwo}}}\right\Vert^2.
\end{align*}

Applying the estimations of the first-order term and the second-order term to the descent lemma then gives
\begin{align*}
    &\dE f(\bu_{t+1})
    \\
    \le & \dE f(\bu_t) -\frac{\eta}{4}\frac{1-\bone}{1-\frac{\bone}{\sqrt{\btwo}}}\sum_{i=1}^d \dE\frac{\bG_{t,i}^2}{\sqrt{ \bnut_{t,i}}}+\frac{2\eta \sqrt{1-\btwo}\sigma_0}{\left(1-\frac{\bone^2}{\btwo}\right)^2}\sum_{i=1}^d\dE \frac{  \bg_{t,i}^2}{\bnu_{t,i}}
    \\
    &+ \eta \frac{4}{(1-\frac{\bone}{\sqrt{\btwo}})^2 \sqrt{\btwo}} \sigma_1^2\sum_{i=1}^d\dE\left( \frac{\bG_{t-1,i}^2}{\sqrt{\btwo \bnut_{t,i}}}- \frac{\bG_{t,i}^2}{\sqrt{ \bnut_{t+1,i}}}\right)+\sum_{i=1}^d \frac{2\eta\sqrt{1-\beta_2}\sigma_0}{(1-\beta_1)(1-\frac{\bone}{\sqrt{\btwo}})}\dE \left[   \left(\frac{\vert  \bom_{t,i}\vert^2}{\bnu_{t,i}}\right)  \right]   
    \\
    &+ \frac{64(1+\sigma_1^2)\sigma_1^2L^2\eta^3d}{\btwo^{2}(1-\frac{\bone}{\sqrt{\btwo}})^3(1-\bone)\sigma_0\sqrt{ 1-\btwo}}\dE\left\Vert \frac{1}{\sqrt{\bnu_{t-1}}}\odot \bom_{t-1}\right\Vert^2+\frac{2\eta\sqrt{1-\btwo} \bone^2\sigma_0}{(1-\bone)(1-\frac{\bone}{\sqrt{\btwo}})\btwo}\sum_{i=1}^d \dE \left[   \frac{\vert  \bom_{t-1,i}\vert^2}{\bnu_{t-1,i}}  \right]
    \\
    &+L\dE \left[ 4\left( \frac{\frac{\bone}{\sqrt{\btwo}}}{1-\frac{\bone}{\sqrt{\btwo}}}\right)^2\eta^2\left\Vert \frac{1}{\sqrt{\bnu_{t-1}}}\odot \bom_{t-1}\right\Vert^2 +3\left(\frac{1}{1-\frac{\bone}{\sqrt{\btwo}}}\right)^2\eta^2\left\Vert  \frac{1}{\sqrt{\bnu_t}}\odot \bom_t   \right\Vert^2 \right].
\end{align*}

The proof is completed.
\end{proof}

\begin{lemma}[Lemma \ref{lem: sum_conditioner}, restated]
    Let all conditions in Theorem \ref{thm: main} hold. Then,
    \begin{align*}
   \sum_{t=1}^{T+1}\sum_{i=1}^d \dE\sqrt{\bnut_{t,i}}\le 2(T+1) \sum_{i=1}^d\sqrt{{\bnu}_{0,i} +\sigma_0^2}+24d\frac{\sigma_1^2 C_1}{\sqrt{\btwo}}\ln d\frac{\sigma_1^2 C_1}{\sqrt{\btwo}}+ \frac{12\sigma_1^2}{\sqrt{\btwo}}C_2.
\end{align*}
\end{lemma}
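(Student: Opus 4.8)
The quantity to be bounded, $\sum_{t,i}\dE\sqrt{\bnut_{t,i}}$, measures how large the surrogate conditioner gets, and the strategy is to argue that large conditioner entries must have been fed by large gradients, and then to extract a bound on those gradients from the Stage-I regret estimate (Eq.~(\ref{eq: stage_1_main})). I would work coordinate by coordinate. Fix $i$ and use the identity $\bnut_{t+1,i}-\btwo\bnut_{t,i}=\btwo(1-\btwo)\bg_{t,i}^2$, which says that the one-step increment of $\sqrt{\bnut}$ is governed by $\bg_{t,i}^2$. Split every step according to whether $|\bG_{t,i}|\ge\sigma_0/\sigma_1$ or $|\bG_{t,i}|<\sigma_0/\sigma_1$. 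On the large-gradient event, Assumption~\ref{assum: affine} gives $\dE^{|\fil_t}\bg_{t,i}^2\le\sigma_0^2+\sigma_1^2\bG_{t,i}^2\le 2\sigma_1^2\bG_{t,i}^2$, so $\bG_{t,i}^2$ controls $\dE^{|\fil_t}\bg_{t,i}^2$ two-sidedly; combined with $\sqrt a-\sqrt b=(a-b)/(\sqrt a+\sqrt b)$ and the crude denominator bound $\sqrt{\bnut_{t+1,i}}\le\sqrt{\btwo\bnut_{t,i}}+\sqrt{\btwo(1-\btwo)}\,|\bg_{t,i}|$, a direct computation yields a per-step inequality $\dE[\bG_{t,i}^2/\sqrt{\bnut_{t,i}}\cdot\mathbf{1}_{|\bG_{t,i}|\ge\sigma_0/\sigma_1}]\ge\frac{\sqrt{\btwo}}{3(1-\btwo)\sigma_1^2}\dE[(\sqrt{\bnut_{t+1,i}}-\sqrt{\btwo\bnut_{t,i}})\mathbf{1}_{|\bG_{t,i}|\ge\sigma_0/\sigma_1}]$. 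On the small-gradient event the conditioner cannot grow much, so I would dominate the telescoping differences by those of the monotone accumulator $\bar\bnu_{0,i}=\bnu_{0,i}$, $\bar\bnu_{t,i}=\bar\bnu_{t-1,i}+\bg_{t,i}^2\mathbf{1}_{|\bG_{t,i}|<\sigma_0/\sigma_1}$, for which $\sum_t(\sqrt{\btwo\bar\bnu_{t,i}+(1-\btwo)\sigma_0^2}-\sqrt{\btwo(\btwo\bar\bnu_{t-1,i}+(1-\btwo)\sigma_0^2)})$ collapses to $\sqrt{\btwo\bar\bnu_{T,i}+(1-\btwo)\sigma_0^2}+(1-\sqrt{\btwo})\sum_{t<T}\sqrt{\btwo\bar\bnu_{t,i}+(1-\btwo)\sigma_0^2}$.

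\textbf{Assembling a per-coordinate bound and summing over $i$.} Adding the two cases and telescoping the left side, the geometric $\sqrt{\btwo}$-slack multiplies $\sum_{t=1}^{T+1}\dE\sqrt{\bnut_{t,i}}$ by $(1-\sqrt{\btwo})$; dividing by $1-\sqrt{\btwo}$ and using $\tfrac{1-\btwo}{1-\sqrt{\btwo}}=1+\sqrt{\btwo}\le2$ should produce
\[
\sum_{t=1}^{T+1}\dE\sqrt{\bnut_{t,i}}\ \le\ \tfrac{6\sigma_1^2}{\sqrt{\btwo}}\sum_{t=1}^{T}\dE\tfrac{\bG_{t,i}^2}{\sqrt{\bnut_{t,i}}}\ +\ (T+1)\sqrt{\bnu_{0,i}+\sigma_0^2}.
\]
Next I would sum over $i$ and substitute Eq.~(\ref{eq: stage_1_main}): writing out the constants hidden in its $\mathcal{O}$'s, and replacing $\bnu_{T,i}$ by $\bnut_{T+1,i}$ via $\bnut_{T+1,i}\ge\btwo\bnu_{T,i}$ (the stray $\ln\btwo$ folded into $C_2$, which already carries the $-T\ln\btwo$ and $\ln(\btwo\bnu_{0,i})$ contributions produced when Lemmas~\ref{lem: sum} and \ref{lem: sum_momentum} bound the $\sum_t\bg_{t,i}^2/\bnu_{t,i}$, $\sum_t|\bom_{t,i}|^2/\bnu_{t,i}$, $\sum_t\|\bnu_{t-1}^{-1/2}\odot\bom_{t-1}\|^2$ terms of Lemma~\ref{lem: u_dynamics_formal}), this reads $\sum_{t=1}^T\sum_i\dE\frac{\bG_{t,i}^2}{\sqrt{\bnut_{t,i}}}\le C_2+2C_1\sum_i\dE\ln\bnut_{T+1,i}$. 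Combining the two,
\[
\sum_{t=1}^{T+1}\sum_{i=1}^d\dE\sqrt{\bnut_{t,i}}\ \le\ \tfrac{6\sigma_1^2}{\sqrt{\btwo}}C_2\ +\ (T+1)\sum_{i=1}^d\sqrt{\bnu_{0,i}+\sigma_0^2}\ +\ \tfrac{12\sigma_1^2C_1}{\sqrt{\btwo}}\sum_{i=1}^d\dE\ln\bnut_{T+1,i}.
\]

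\textbf{Closing the circular dependence.} The right-hand side still contains $\bnut_{T+1,i}$, which reappears on the left. I would close the loop with Jensen's inequality: since $\dE\sqrt{\bnut_{T+1,i}}$ is one of the nonnegative summands of the left side, calling that side $X$ gives $\dE\ln\bnut_{T+1,i}=2\dE\ln\sqrt{\bnut_{T+1,i}}\le2\ln\dE\sqrt{\bnut_{T+1,i}}\le2\ln X$, hence $X\le a+b\ln X$ with $a=\tfrac{6\sigma_1^2}{\sqrt{\btwo}}C_2+(T+1)\sum_i\sqrt{\bnu_{0,i}+\sigma_0^2}$ and $b=\tfrac{24d\sigma_1^2C_1}{\sqrt{\btwo}}$. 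An elementary lemma of the type ``$x\le a+b\ln x\Rightarrow x\le 2a+O(b\ln b)$'' then delivers the claimed bound: the $2a$ supplies the doubled first two terms $\tfrac{12\sigma_1^2}{\sqrt{\btwo}}C_2$ and $2(T+1)\sum_i\sqrt{\bnu_{0,i}+\sigma_0^2}$, and the $O(b\ln b)$ piece (after tracking constants) becomes $24d\tfrac{\sigma_1^2C_1}{\sqrt{\btwo}}\ln\!\big(d\tfrac{\sigma_1^2C_1}{\sqrt{\btwo}}\big)$. The main obstacle is Step~1: carrying out the case-split computation so that all constants are uniform in $t$, and in particular arranging the small-gradient surrogate so that it genuinely dominates the relevant telescoping differences; everything afterward is bookkeeping, together with the single conceptual move of recognizing the self-referential inequality and discharging it with the $x\le a+b\ln x$ lemma.
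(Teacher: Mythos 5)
Your proposal follows essentially the same route as the paper's proof: the same threshold split at $|\bG_{t,i}|\ge\sigma_0/\sigma_1$ with the per-step bound $\dE[\bG_{t,i}^2/\sqrt{\bnut_{t,i}}\,\mathbf{1}]\ge\frac{\sqrt{\btwo}}{3(1-\btwo)\sigma_1^2}\dE[(\sqrt{\bnut_{t+1,i}}-\sqrt{\btwo\bnut_{t,i}})\mathbf{1}]$, the same auxiliary accumulator $\bar\bnu$ on the small-gradient event, the same intermediate estimate $\sum_{t}\dE\sqrt{\bnut_{t,i}}\le\frac{6\sigma_1^2}{\sqrt{\btwo}}\sum_t\dE\frac{\bG_{t,i}^2}{\sqrt{\bnut_{t,i}}}+(T+1)\sqrt{\bnu_{0,i}+\sigma_0^2}$, substitution of the Stage-I regret bound, Jensen to turn $\dE\ln$ into $\ln\dE$, and finally solving the self-referential inequality $X\le a+b\ln X$. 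The only deviations are bookkeeping (you cite the informal Eq.~(\ref{eq: stage_1_main}) rather than Eq.~(\ref{eq: stage_1_result}), and the $2C_1$ versus $C_1$ factor in front of $\ln\bnut_{T+1,i}$), which do not change the argument.
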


\begin{proof}[Proof of Lemma \ref{lem: sum_conditioner}]
   To begin with, we have that
\begin{equation}
\label{eq: core_2}
    \sum_{t=1}^T\dE\left[  \frac{\vert \bG_{t,i}\vert^2}{\sqrt{\bnut_{t,i}}}\mathbf{1}_{\vert G_{t,i}\vert \ge \frac{\sigma_0}{\sigma_1}} \right]\le \sum_{t=1}^T\dE\left[  \frac{\vert \bG_{t,i}\vert^2}{\sqrt{\bnut_{t,i}}} \right].
\end{equation}

On the other hand, we have that
\begin{align*}
   & \frac{\vert \bG_{t,i}\vert^2}{\sqrt{\bnut_{t,i}}}\mathbf{1}_{\vert G_{t,i}\vert \ge \frac{\sigma_0}{\sigma_1}}
   \ge \frac{\frac{2}{3}\vert \bG_{t,i}\vert^2+\frac{1}{3}\frac{\sigma^2_0}{\sigma_1^2}}{\sqrt{\bnut_{t,i}}}\mathbf{1}_{\vert G_{t,i}\vert \ge \frac{\sigma_0}{\sigma_1}}
   \ge \frac{\frac{\btwo}{3\sigma_1^2}\mathbb{E}^{|\gF_t}\vert \bg_{t,i}\vert^2+\frac{1-\btwo}{3}\frac{\sigma^2_0}{\sigma_1^2}}{\sqrt{\bnut_{t,i}}}\mathbf{1}_{\vert G_{t,i}\vert \ge \frac{\sigma_0}{\sigma_1}}
    \\
    =& \dE^{|\fil_t}\frac{\frac{\btwo}{3\sigma_1^2}\vert \bg_{t,i}\vert^2+\frac{1-\btwo}{3\sigma_1^2}\sigma_0^2}{\sqrt{\bnut_{t,i}}}\mathbf{1}_{\vert G_{t,i}\vert \ge \frac{\sigma_0}{\sigma_1}}
    \ge\sqrt{\btwo} \dE^{|\fil_t}\frac{\frac{\btwo}{3\sigma_1^2}\vert \bg_{t,i}\vert^2+\frac{1-\btwo}{3\sigma_1^2}\sigma_0^2}{\sqrt{\bnut_{t+1,i}}+\sqrt{\btwo \bnut_{t,i}}}\mathbf{1}_{\vert G_{t,i}\vert \ge \frac{\sigma_0}{\sigma_1}}.
\end{align*}
 As a conclusion, 
\begin{align*}
      &\sum_{t=1}^T\dE\left[  \frac{\vert \bG_{t,i}\vert^2}{\sqrt{\bnut_{t,i}}}\mathbf{1}_{\vert G_{t,i}\vert \ge \frac{\sigma_0}{\sigma_1}} \right] \ge  \sqrt{\btwo}\sum_{t=1}^T\dE\left[\frac{\frac{\btwo}{3\sigma_1^2}\vert \bg_{t,i}\vert^2+\frac{1-\btwo}{3\sigma_1^2}\sigma_0^2}{\sqrt{\bnut_{t+1,i}}+\sqrt{\btwo \bnut_{t,i}}}\mathbf{1}_{\vert G_{t,i}\vert \ge \frac{\sigma_0}{\sigma_1}}\right]
      \\
      \ge& \frac{\sqrt{\btwo}}{3(1-\btwo)\sigma_1^2}\sum_{t=1}^T \dE\left[\left(\sqrt{\bnut_{t+1,i}}-\sqrt{\btwo\bnut_{t,i}}\right)\mathbf{1}_{\vert G_{t,i}\vert \ge \frac{\sigma_0}{\sigma_1}}\right].
\end{align*}

On the other hand, as stated in Section \ref{sec: sketch}, we define $\{\bar{\bnu}_{t,i}\}_{t=0}^{\infty}$ as $\bar{\bnu}_{0,i}= \bnu_{0,i}$, $\bar{\bnu}_{t,i}= \btwo\bar{\bnu}_{t-1,i}+(1-\btwo)\vert g_{t,i} \vert^2\mathbf{1}_{\vert \bG_{t,i}\vert  < \frac{\sigma_0^2}{\sigma_1^2}}$. One can easily observe that $\bar{\bnu}_{t,i}\le \bnu_{t,i} $, and thus
\begin{align*}
    &\sum_{t=1}^T \dE\left[\left(\sqrt{\bnut_{t+1,i}}-\sqrt{\btwo\bnut_{t,i}}\right)\mathbf{1}_{\vert \bG_{t,i}\vert  < \frac{\sigma_0^2}{\sigma_1^2}} \right]
    \\
    =& \sum_{t=1}^T \dE\left(\sqrt{\btwo^2 \bnu_{t-1,i}+ \btwo(1-\btwo)\vert g_{t,i}\vert^2 + (1-\btwo )\sigma_0^2}-\sqrt{\btwo(\btwo\bnu_{t-1,i}+ (1-\btwo )\sigma_0^2)}\right)\mathbf{1}_{\vert \bG_{t,i}\vert  < \frac{\sigma_0^2}{\sigma_1^2}}
    \\
    \le &\sum_{t=1}^T \dE\left(\sqrt{\btwo^2 \bar{\bnu}_{t-1,i}+ \btwo(1-\btwo)\vert g_{t,i}\vert^2 + (1-\btwo )\sigma_0^2}-\sqrt{\btwo(\btwo\bar{\bnu}_{t-1,i}+ (1-\btwo )\sigma_0^2)}\right)\mathbf{1}_{\vert \bG_{t,i}\vert  < \frac{\sigma_0^2}{\sigma_1^2}}
    \\
    \le & \sum_{t=1}^T \dE\left(\sqrt{\btwo^2 \bar{\bnu}_{t-1,i}+ \btwo(1-\btwo)\vert g_{t,i}\vert^2\mathbf{1}_{\vert \bG_{t,i}\vert  < \frac{\sigma_0^2}{\sigma_1^2}} + (1-\btwo )\sigma_0^2}-\sqrt{\btwo(\btwo\bar{\bnu}_{t-1,i}+ (1-\btwo )\sigma_0^2)}\right)
    \\
    =& \sum_{t=1}^T \dE\left(\sqrt{\btwo\bar{\bnu}_{t,i} + (1-\btwo )\sigma_0^2}-\sqrt{\btwo(\btwo\bar{\bnu}_{t-1,i}+ (1-\btwo )\sigma_0^2)}\right)
    \\
    =& \dE \sqrt{\btwo\bar{\bnu}_{t,i} + (1-\btwo )\sigma_0^2}+(1-\sqrt{\btwo})\sum_{t=1}^{T-1} \dE\sqrt{\btwo\bar{\bnu}_{t,i} + (1-\btwo )\sigma_0^2} - \dE\sqrt{\btwo(\btwo\bar{\bnu}_{0,i}+ (1-\btwo )\sigma_0^2)}.
\end{align*}
All in all, summing the above two inequalities together, we obtain that
\scriptsize
\begin{align}
\nonumber
&\dE \sqrt{\bnut_{t+1,i} }+(1-\sqrt{\btwo})\sum_{t=2}^{T} \dE\sqrt{\bnut_{t,i} } - \sqrt{\btwo{\bnut}_{1,i}}
\\
\nonumber
    =&\sum_{t=1}^T \dE \left(\sqrt{\bnut_{t,i} }-\sqrt{\btwo{\bnut}_{t-1,i}}\right)
    \\
\nonumber
    \le &\sum_{t=1}^T \dE \left(\sqrt{\bnut_{t,i} }-\sqrt{\btwo{\bnut}_{t-1,i}}\right)\mathbf{1}_{\vert G_{t,i}\vert \ge \frac{\sigma_0}{\sigma_1}}
    +\sum_{t=1}^T \dE \left(\sqrt{\bnut_{t,i} }-\sqrt{\btwo{\bnut}_{t-1,i}}\right)\mathbf{1}_{\vert \bG_{t,i}\vert  < \frac{\sigma_0^2}{\sigma_1^2}}
    \\
\nonumber
\le & \frac{3(1-\btwo)\sigma_1^2}{\sqrt{\btwo}}\sum_{t=1}^T\dE\left[  \frac{\vert \bG_{t,i}\vert^2}{\sqrt{\bnut_{t,i}}} \right]+ \dE \sqrt{\btwo\bar{\bnu}_{t,i} + (1-\btwo )\sigma_0^2}+(1-\sqrt{\btwo})\sum_{t=1}^{T-1} \dE\sqrt{\btwo\bar{\bnu}_{t,i} + (1-\btwo )\sigma_0^2} - \sqrt{\btwo(\btwo\bar{\bnu}_{0,i}+ (1-\btwo )\sigma_0^2)}.
\end{align}
\normalsize
Since $\forall t$,
\begin{align*}
    \dE\sqrt{\btwo \bar{\bnu}_{t,i} + (1-\btwo )\sigma_0^2} \le \sqrt{\btwo\dE\bar{\bnu}_{t,i} + (1-\btwo )\sigma_0^2} \le \sqrt{\sigma_0^2+\bnu_{0,i}},
\end{align*}
combining with $\sqrt{\btwo{\bnut}_{1,i}}=\sqrt{\btwo(\btwo\bar{\bnu}_{0,i}+ (1-\btwo )\sigma_0^2)}$ and $\dE \sqrt{\bnut_{t+1,i} } =\dE \sqrt{\btwo{\bnu}_{t,i} + (1-\btwo )\sigma_0^2} \ge \dE \sqrt{\btwo\bar{\bnu}_{t,i} + (1-\btwo )\sigma_0^2}$, we obtain
\small
\begin{align}
\nonumber
    (1-\sqrt{\btwo})\sum_{t=2}^{T+1} \dE\sqrt{\bnut_{t,i} }
\le&  \frac{3(1-\btwo)\sigma_1^2}{\sqrt{\btwo}}\sum_{t=2}^T\dE\left[  \frac{\vert \bG_{t,i}\vert^2}{\sqrt{\bnut_{t,i}}} \right]+ +(1-\sqrt{\btwo})\sum_{t=1}^{T} \dE\sqrt{\btwo\bar{\bnu}_{t,i} + (1-\btwo )\sigma_0^2} 
\\
\label{eq: estimation_2}
\le &\frac{3(1-\btwo)\sigma_1^2}{\sqrt{\btwo}}\sum_{t=2}^T\dE\left[  \frac{\vert \bG_{t,i}\vert^2}{\sqrt{\bnut_{t,i}}} \right]+ (1-\sqrt{\btwo})T\sqrt{\sigma_0^2+\bnu_{0,i}}
.
\end{align}
\normalsize
Leveraging Eq. (\ref{eq: stage_1_result}), we then obtain that
\small
\begin{align*}
&\sum_{t=1}^{T+1}\sum_{i=1}^d \dE\sqrt{\bnut_{t,i}}
\\
\le &3\frac{(1+\sqrt{\btwo})\sigma_1^2}{\sqrt{\btwo}}\sum_{t=1}^T\dE\left[  \frac{\vert \bG_{t,i}\vert^2}{\sqrt{\bnut_{t,i}}} \right]+ (T+1)\sum_{i=1}^d \sqrt{{\bnu}_{0,i} + \sigma_0^2}
\\
\le & \frac{6\sigma_1^2}{\sqrt{\btwo}} \left(\frac{1-\frac{\bone}{\sqrt{\btwo}}}{1-\bone}\frac{8}{\eta} f(\bu_1)+\frac{32}{{\btwo}\left(1-\frac{\bone}{\sqrt{\btwo}}\right)^2} \sum_{i=1}^d \dE \frac{\bG_{1,i}^2}{\sqrt{\bnut_{1,i}}}+
C_1 \dE\sum_{i=1}^d\left(\ln \left(\frac{\bnu_{T,i}}{\bnu_{0,i}}\right) - T \ln \btwo\right)\right) 
\\
&+(T+1) \sum_{i=1}^d\sqrt{{\bnu}_{0,i} + \sigma_0^2}
\\
\le & \frac{6\sigma_1^2}{\sqrt{\btwo}} \left(\frac{1-\frac{\bone}{\sqrt{\btwo}}}{1-\bone}\frac{8}{\eta} f(\bu_1)+\frac{32}{{\btwo}\left(1-\frac{\bone}{\sqrt{\btwo}}\right)^2} \sum_{i=1}^d \dE \frac{\bG_{1,i}^2}{\sqrt{\bnut_{1,i}}}+
2C_1 \dE \sum_{i=1}^d\left(\ln \left(\frac{\sum_{t=1}^{T+1}\sqrt{\bnut_{t,i}}}{\sqrt{\btwo\bnu_{0,i}}}\right) - T \ln \btwo\right)\right) 
\\
&+(T+1) \sum_{i=1}^d\sqrt{{\bnu}_{0,i} +\sigma_0^2}
\\
\le &\frac{6\sigma_1^2}{\sqrt{\btwo}} \left(\frac{1-\frac{\bone}{\sqrt{\btwo}}}{1-\bone}\frac{8}{\eta} f(\bu_1)+\frac{32}{{\btwo}\left(1-\frac{\bone}{\sqrt{\btwo}}\right)^2} \sum_{i=1}^d \dE \frac{\bG_{1,i}^2}{\sqrt{\bnut_{1,i}}}+
2C_1 \sum_{i=1}^d\left(\ln \left(\frac{\dE\sum_{t=1}^{T+1}\sum_{j=1}^d\sqrt{\bnut_{t,j}}}{\sqrt{\btwo\bnu_{0,i}}}\right) - T \ln \btwo\right)\right) 
\\
&+(T+1) \sum_{i=1}^d\sqrt{{\bnu}_{0,i} +\sigma_0^2},
\end{align*}
\normalsize
where in the last inequality we use the concavity of $h(x)=\ln x $. Solving the above inequality with respect to $\sum_{t=1}^{T+1}\sum_{i=1}^d \dE\sqrt{\bnut_{t,i}}$ then gives

\begin{align*}
 & \sum_{t=1}^{T+1}\sum_{i=1}^d \dE\sqrt{\bnut_{t,i}}\le 2(T+1) \sum_{i=1}^d\sqrt{{\bnu}_{0,i} +\sigma_0^2}+24d\frac{\sigma_1^2 C_1}{\sqrt{\btwo}}\ln d\frac{\sigma_1^2 C_1}{\sqrt{\btwo}}
    \\
    &+\frac{12\sigma_1^2}{\sqrt{\btwo}} \left(\frac{1-\frac{\bone}{\sqrt{\btwo}}}{1-\bone}\frac{8}{\eta} f(\bu_1)+\frac{32}{{\btwo}\left(1-\frac{\bone}{\sqrt{\btwo}}\right)^2} \sum_{i=1}^d \dE \frac{\bG_{1,i}^2}{\sqrt{\bnut_{1,i}}}+
2C_1 \sum_{i=1}^d\left(\ln \left(\frac{1}{\sqrt{\btwo\bnu_{0,i}}}\right) - T \ln \btwo\right)\right).
\end{align*}
\normalsize
The proof is then completed by applying the definition of $C_2$.

\end{proof}

\subsection{Proof of Theorem \ref{thm: main}}
\begin{proof}[Proof of Theorem \ref{thm: main}] 
{ 
Summing the inequality in Lemma \ref{lem: u_dynamics_formal} over $t$ from $1$ to $T$ and collecting the terms, we obtain
\begin{align*}
    &\dE f(\bu_{T+1})
    \\
    \le & f(\bu_1)- \frac{\eta}{4}\frac{1-\bone}{1-\frac{\bone}{\sqrt{\btwo}}}\sum_{t=1}^T\sum_{i=1}^d \dE\frac{\bG_{t,i}^2}{\sqrt{ \bnut_{t,i}}} +\eta \frac{4(1-\bone)}{(1-\frac{\bone}{\sqrt{\btwo}})^2 \sqrt{\btwo}} \sigma_1^2\sum_{t=1}^T\sum_{i=1}^d\dE\left( \frac{\bG_{t-1,i}^2}{\sqrt{\btwo \bnut_{t,i}}}- \frac{\bG_{t,i}^2}{\sqrt{ \bnut_{t+1,i}}}\right)
    \\
    &+ \tilde{C}\sum_{t=1}^T\dE \left\Vert \frac{1}{\sqrt{\bnu_t}} \odot \bom_t \right\Vert^2
    \\
    \le &f(\bu_1)- \frac{\eta}{4}\frac{1-\bone}{1-\frac{\bone}{\sqrt{\btwo}}}\sum_{t=1}^T\sum_{i=1}^d \dE\frac{\bG_{t,i}^2}{\sqrt{ \bnut_{t,i}}} +\eta \frac{4(1-\bone)}{(1-\frac{\bone}{\sqrt{\btwo}})^2 \sqrt{\btwo}} \sigma_1^2 \sum_{i=1}^d\dE\left( \frac{\bG_{1,i}^2}{\sqrt{\btwo \bnut_{1,i}}}\right.
    \\
    &\left. +\left(\frac{1}{\btwo}-1\right)\sum_{t=1}^{T-1}\dE\frac{\bG_{t,i}^2}{\sqrt{\btwo \bnut_{t+1,i}}} \right)+ \tilde{C}\sum_{t=1}^T\dE \left\Vert \frac{1}{\sqrt{\bnu_t}} \odot \bom_t \right\Vert^2
    \\
    \overset{(*)}{\le} & f(\bu_1)- \frac{\eta}{4}\frac{1-\bone}{1-\frac{\bone}{\sqrt{\btwo}}}\sum_{t=1}^T\sum_{i=1}^d \dE\frac{\bG_{t,i}^2}{\sqrt{ \bnut_{t,i}}} +\eta \frac{4(1-\bone)}{(1-\frac{\bone}{\sqrt{\btwo}})^2 \sqrt{\btwo}} \sigma_1^2 \sum_{i=1}^d\dE \frac{\bG_{1,i}^2}{\sqrt{\btwo \bnut_{1,i}}}
    \\
    & +\frac{\eta}{8}\frac{1-\bone}{1-\frac{\bone}{\sqrt{\btwo}}}\sum_{t=1}^{T-1}\dE\frac{\bG_{t,i}^2}{\sqrt{ \bnut_{t,i}}} + \tilde{C}\sum_{t=1}^T\dE \left\Vert \frac{1}{\sqrt{\bnu_t}} \odot \bom_t \right\Vert^2
    \\
    \overset{(\circ)}{\le } & f(\bu_1)- \frac{\eta}{8}\frac{1-\bone}{1-\frac{\bone}{\sqrt{\btwo}}}\sum_{t=1}^T\sum_{i=1}^d \dE\frac{\bG_{t,i}^2}{\sqrt{ \bnut_{t,i}}} +\eta \frac{4(1-\bone)}{(1-\frac{\bone}{\sqrt{\btwo}})^2 \sqrt{\btwo}} \sigma_1^2 \sum_{i=1}^d\dE \frac{\bG_{1,i}^2}{\sqrt{\btwo \bnut_{1,i}}}
    \\
    & + \tilde{C}\frac{(1-\bone)^2}{(1-\frac{\bone}{\sqrt{\btwo}})^2(1-\btwo)} \sum_{i=1}^d\left(\ln \left(\frac{\bnu_{T,i}}{\bnu_{0,i}}\right) - T \ln \btwo\right),
\end{align*}
where we define
\begin{equation*}
    \tilde{C} \triangleq 4L\eta^2\left(\frac{1+\frac{\bone}{\sqrt{\btwo}}}{1-\frac{\bone}{\sqrt{\btwo}}} \right)^2+\frac{2\eta\sqrt{1-\btwo} \bone^2\sigma_0}{(1-\bone)(1-\frac{\bone}{\sqrt{\btwo}})\btwo}+\frac{64(1+\sigma_1^2)\sigma_1^2L^2\eta^3d}{\btwo^{2}(1-\frac{\bone}{\sqrt{\btwo}})^3(1-\bone)\sigma_0\sqrt{ 1-\btwo}}.
\end{equation*}
to simplify the notations, inequality $(*)$ is due to that $\tilde{\bnu}_{t+1,i}\ge \btwo \tilde{\bnu}_{t+1,i}$ and $\beta_1 \le \sqrt{\btwo}-8\sigma_1^2(1-\btwo)\btwo^{-2}$, and inequality $(\circ)$ is due to Lemma \ref{lem: sum_momentum}. Simple rearrangement of the above inequality then gives
\begin{align}
\nonumber
    \sum_{t=1}^T\dE\left[ \left\Vert\frac{1}{\sqrt[4]{\bnut_{t}}} \odot \bG_{t} \right\Vert^2\right] \le& \frac{1-\frac{\bone}{\sqrt{\btwo}}}{1-\bone}\frac{8}{\eta} f(\bu_1)+\frac{32}{{\btwo}\left(1-\frac{\bone}{\sqrt{\btwo}}\right)^2} \sum_{i=1}^d \dE \frac{\bG_{1,i}^2}{\sqrt{\bnut_{1,i}}}
    \\
    \label{eq: stage_1_result}
    &+C_1 \sum_{i=1}^d\dE\left(\ln \left(\frac{\bnu_{T,i}}{\bnu_{0,i}}\right) - T \ln \btwo\right).
\end{align}

}
 
Then, according to Cauchy's inequality, we have
\begin{equation}
\label{eq: stage_2_cauchy}
   \left(\dE \sum_{t=1}^T \Vert \bG_t\Vert_1\right)^2 \le \left(\sum_{t=1}^T\dE\left[ \left\Vert\frac{1}{\sqrt[4]{\bnut_t^{1}}} \odot \bG_{t} \right\Vert^2\right]\right) \left(\sum_{t=1}^T\dE\left[ \left\Vert\sqrt[4]{\bnut_t^{1}} \right\Vert^2\right]\right).
\end{equation}
Meanwhile, by Lemma \ref{lem: sum_conditioner}, we have
\begin{align*}
   \sum_{t=1}^{T}\sum_{i=1}^d \dE\sqrt{\bnut_{t,i}}\le 2(T+1) \sum_{i=1}^d\sqrt{{\bnu}_{0,i} +\sigma_0^2}+24d\frac{\sigma_1^2 C_1}{\sqrt{\btwo}}\ln d\frac{\sigma_1^2 C_1}{\sqrt{\btwo}}+ \frac{12\sigma_1^2}{\sqrt{\btwo}}C_2.
\end{align*}
Combining the above inequality and Eq. (\ref{eq: stage_2_cauchy}) gives
\begin{align*}
      &\left(\dE \sum_{t=1}^T \Vert \bG_t\Vert_1\right)^2
      \\
      \le& \left(\frac{1-\frac{\bone}{\sqrt{\btwo}}}{1-\bone}\frac{8}{\eta} f(\bu_1)+\frac{32}{{\btwo}\left(1-\frac{\bone}{\sqrt{\btwo}}\right)^2} \sum_{i=1}^d \dE \frac{\bG_{1,i}^2}{\sqrt{\bnut_{1,i}}}
+C_1 \sum_{i=1}^d\left(\ln \left(\frac{\bnu_{T,i}}{\bnu_{0,i}}\right) - T \ln \btwo\right)\right)
   \\
   &\times \left(2(T+1) \sum_{i=1}^d\sqrt{{\bnu}_{0,i} +\sigma_0^2}+24d\frac{\sigma_1^2 C_1}{\sqrt{\btwo}}\ln d\frac{\sigma_1^2 C_1}{\sqrt{\btwo}}+ \frac{12\sigma_1^2}{\sqrt{\btwo}}C_2\right)
   \\
   \le& \left(C_2+2C_1 \sum_{i=1}^d\left(\ln \left( \sum_{t=1}^{T}\sum_{i=1}^d \dE\sqrt{\bnut_{t,i}}\right) \right)\right)
\times \left(2(T+1) \sum_{i=1}^d\sqrt{{\bnu}_{0,i} +\sigma_0^2}+24d\frac{\sigma_1^2 C_1}{\sqrt{\btwo}}\ln d\frac{\sigma_1^2 C_1}{\sqrt{\btwo}}+ \frac{12\sigma_1^2}{\sqrt{\btwo}}C_2\right)
\\
\le& \left(C_2+2C_1 \sum_{i=1}^d\left(\ln \left( 2(T+1) \sum_{i=1}^d\sqrt{{\bnu}_{0,i} +\sigma_0^2}+24d\frac{\sigma_1^2 C_1}{\sqrt{\btwo}}\ln d\frac{\sigma_1^2 C_1}{\sqrt{\btwo}}+ \frac{12\sigma_1^2}{\sqrt{\btwo}}C_2\right) \right)\right)
\\
&\times \left(2(T+1) \sum_{i=1}^d\sqrt{{\bnu}_{0,i} +\sigma_0^2}+24d\frac{\sigma_1^2 C_1}{\sqrt{\btwo}}\ln d\frac{\sigma_1^2 C_1}{\sqrt{\btwo}}+ \frac{12\sigma_1^2}{\sqrt{\btwo}}C_2\right) .
\end{align*}
The proof is then completed.
\end{proof}

\section{Proof of Theorem \ref{thm: upper bound}}
\label{appen: proof_specific}
\begin{proof}
To start with, we have that
\begin{align*}
   & \frac{\vert \bG_{t,i}\vert^2}{\sqrt{\bnut_{t,i}}}\mathbf{1}_{\vert G_{t,i}\vert \ge \frac{\sigma_0}{\sigma_1}}\ge \frac{\frac{1}{2\sigma_1^2}\mathbb{E}^{|\gF_t}\vert \bg_{t,i}\vert^2}{\sqrt{\bnut_{t,i}}}\mathbf{1}_{\vert G_{t,i}\vert \ge \frac{\sigma_0}{\sigma_1}}
    \\
    =& \frac{\frac{1}{2\sigma_1^2}\mathbb{E}^{|\gF_t}\vert \bg_{t,i}\vert^2}{\sqrt{\btwo\bnu_{t-1,i}+(1-\btwo)\sigma_0^2}}\mathbf{1}_{\vert G_{t,i}\vert \ge \frac{\sigma_0}{\sigma_1}}
    \\
    \ge &\frac{1}{2\sigma_1^2\sqrt{1-\btwo}}\mathbb{E}^{|\gF_t} \frac{\vert \bg_{t,i}\vert^2}{\sqrt{\frac{\bnu_{0,i}}{1-\btwo}+\sum_{s=1}^T \vert g_{s,i}\vert^2+\sigma_0^2}}\mathbf{1}_{\vert G_{t,i}\vert \ge \frac{\sigma_0}{\sigma_1}},
\end{align*}
where the last inequality is due to that
\begin{align}
    \btwo\bnu_{t-1,i}= (1-\btwo)\sum_{s=1}^{t-1} \btwo^{t-s} \vert \bg_{s,i}\vert^2+\btwo^t \bnu_{0,i}
   \label{eq: estimation_specific}
   \le (1-\btwo)\sum_{s=1}^T  \vert \bg_{s,i}\vert^2+\bnu_{0,i}.
\end{align}

Furthermore, we have 
\begin{align}
\nonumber
   & \frac{\sigma_0^2+\frac{\bnu_{0,i}}{1-\btwo}}{\sqrt{\frac{\bnu_{0,i}}{1-\btwo}+\sum_{s=1}^T \vert g_{s,i}\vert^2+\sigma_0^2}}+\sum_{t=1}^T\mathbb{E} \frac{\vert \bg_{t,i}\vert^2}{\sqrt{\frac{\bnu_{0,i}}{1-\btwo}+\sum_{s=1}^T \vert g_{s,i}\vert^2+\sigma_0^2}}\mathbf{1}_{\vert \bG_{t,i}\vert  < \frac{\sigma_0}{\sigma_1}}
    \\
\nonumber
    \le &\frac{\sigma_0^2+\frac{\bnu_{0,i}}{1-\btwo}}{\sqrt{\frac{\bnu_{0,i}}{1-\btwo}+\sum_{s=1}^T \vert g_{s,i}\vert^2\mathbf{1}_{\vert \bG_{s,i}\vert  < \frac{\sigma_0}{\sigma_1}}+\sigma_0^2}}+\sum_{t=1}^T\mathbb{E} \frac{\vert \bg_{t,i}\vert^2}{\sqrt{\frac{\bnu_{0,i}}{1-\btwo}+\sum_{s=1}^T \vert g_{s,i}\vert^2\mathbf{1}_{\vert \bG_{s,i}\vert  < \frac{\sigma_0}{\sigma_1}}+\sigma_0^2}}\mathbf{1}_{\vert \bG_{t,i}\vert  < \frac{\sigma_0}{\sigma_1}}
    \\
\nonumber
    =& \dE\sqrt{\frac{\bnu_{0,i}}{1-\btwo}+\sum_{s=1}^T \vert g_{s,i}\vert^2\mathbf{1}_{\vert \bG_{s,i}\vert  < \frac{\sigma_0}{\sigma_1}}+\sigma_0^2}\le\sqrt{\frac{\bnu_{0,i}}{1-\btwo}+\dE\sum_{s=1}^T \vert g_{s,i}\vert^2\mathbf{1}_{\vert \bG_{s,i}\vert  < \frac{\sigma_0}{\sigma_1}}+\sigma_0^2} 
    \\
    \label{eq: thm2_mid}
    \le &\sqrt{\frac{\bnu_{0,i}}{1-\btwo}+2\sigma_0^2T+\sigma_0^2} .
\end{align}

Conclusively, we obtain 
\begin{align*}
&\dE\sqrt{\frac{\bnu_{0,i}}{1-\btwo}+\sum_{s=1}^T \vert g_{s,i}\vert^2+\sigma_0^2}
\\
    =&\dE\frac{\sigma_0^2+\frac{\bnu_{0,i}}{1-\btwo}}{\sqrt{\frac{\bnu_{0,i}}{1-\btwo}+\sum_{s=1}^T \vert g_{s,i}\vert^2+\sigma_0^2}}+\sum_{t=1}^T\mathbb{E} \frac{\vert \bg_{t,i}\vert^2}{\sqrt{\frac{\bnu_{0,i}}{1-\btwo}+\sum_{s=1}^T \vert g_{s,i}\vert^2+\sigma_0^2}}\mathbf{1}_{\vert \bG_{t,i}\vert  < \frac{\sigma_0}{\sigma_1}}
    \\
    &+\sum_{t=1}^T\mathbb{E} \frac{\vert \bg_{t,i}\vert^2}{\sqrt{\frac{\bnu_{0,i}}{1-\btwo}+\sum_{s=1}^T \vert g_{s,i}\vert^2+\sigma_0^2}}\mathbf{1}_{\vert \bG_{t,i}\vert  \ge \frac{\sigma_0}{\sigma_1}}
    \\
    \le & \sqrt{\frac{\bnu_{0,i}}{1-\btwo}+2\sigma_0^2T+\sigma_0^2}+2\sqrt{1-\btwo}\sigma_1^2\dE\sum_{t=1}^T\frac{\vert \bG_{t,i}\vert^2}{\sqrt{\bnut_{t,i}}}\mathbf{1}_{\vert \bG_{t,i}\vert\ge \frac{\sigma_0}{\sigma_1}}
    \\
    \le & \sqrt{\frac{\bnu_{0,i}}{1-\btwo}+2\sigma_0^2T+\sigma_0^2}+2\sqrt{1-\btwo}\sigma_1^2\dE\sum_{t=1}^T\frac{\vert \bG_{t,i}\vert^2}{\sqrt{\bnut_{t,i}}}.
\end{align*}

Secondly, as $\btwo \rightarrow 1$ as $T\rightarrow \infty$, $\bone \le \sqrt{\btwo}-8\sigma_1^2(1-\btwo)\btwo^{-2}$ holds for large enough $T$, and thus Theorem \ref{thm: main} holds. Applying the value of $\bone$, $\btwo$, and $\eta$ to Eq. (\ref{eq: stage_1_result}), we obtain that
\begin{align}\label{eq: stage_1_result_theorem_2}
    \sum_{t=1}^T\dE\left[ \left\Vert\frac{1}{\sqrt[4]{\bnut_{t}}} \odot \bG_{t} \right\Vert^2\right] \le  D_2\sqrt{T}+D_1\sqrt{T}\sum_{i=1}^d \dE \ln \bnu_{T,i}+\frac{64}{(1-c)^2}\sum_{i=1}^d \dE \frac{\bG_{1,i}^2}{\sqrt{\bnut_{1,i}}}
.
\end{align}
\normalsize
Summing Eq. (\ref{eq: thm2_mid}) with respect to $i$ then gives
\begin{align*}
   & \sum_{i=1}^d\dE\sqrt{\frac{\bnu_{0,i}}{1-\btwo}+\sum_{s=1}^T \vert g_{s,i}\vert^2+\sigma_0^2}
\\
    \le & \sum_{i=1}^d\sqrt{\frac{\bnu_{0,i}}{1-\btwo}+2\sigma_0^2T+\sigma_0^2}+2\sqrt{1-\btwo}\sigma_1^2\sum_{i=1}^d\sum_{t=1}^T\frac{\vert \bG_{t,i}\vert^2}{\sqrt{\bnut_{t,i}}}
\\
    \le & \sum_{i=1}^d\sqrt{\frac{\bnu_{0,i}}{1-\btwo}+2\sigma_0^2T+\sigma_0^2}
   +2D_2\sigma_1^2\sqrt{b}+2D_1\sigma_1^2\sqrt{b}\sum_{i=1}^d \dE \ln \bnu_{T,i}+\frac{128\sigma_1^2\sqrt{b}}{(1-c)^2\sqrt{T}}\sum_{i=1}^d \dE \frac{\bG_{1,i}^2}{\sqrt{\bnut_{1,i}}}
    \\
    =  & \sum_{i=1}^d\sqrt{\frac{\bnu_{0,i}}{1-\btwo}+2\sigma_0^2T+\sigma_0^2}
   +2D_2\sigma_1^2\sqrt{b}+4D_1\sigma_1^2\sqrt{b}\sum_{i=1}^d \dE \ln \sqrt{\bnu_{T,i}}+\frac{128\sigma_1^2\sqrt{b}}{(1-c)^2\sqrt{T}}\sum_{i=1}^d \dE \frac{\bG_{1,i}^2}{\sqrt{\bnut_{1,i}}}
    \\
    \le &\sum_{i=1}^d\sqrt{\frac{\bnu_{0,i}}{1-\btwo}+2\sigma_0^2T+\sigma_0^2}
   +2D_2\sigma_1^2\sqrt{b}+4D_1\sigma_1^2\sqrt{b}\sum_{i=1}^d \dE \ln \left(\sum_{i=1}^d\sqrt{1-\btwo}\sqrt{\frac{\bnu_{0,i}}{1-\btwo}+\sum_{s=1}^T \vert g_{s,i}\vert^2+\sigma_0^2}\right)
   \\
   &+\frac{128\sigma_1^2\sqrt{b}}{(1-c)^2\sqrt{T}}\sum_{i=1}^d \dE \frac{\bG_{1,i}^2}{\sqrt{\bnut_{1,i}}}
   \\
     \le &\sum_{i=1}^d\sqrt{\frac{\bnu_{0,i}}{1-\btwo}+2\sigma_0^2T+\sigma_0^2}
   +2D_2\sigma_1^2\sqrt{b}+4D_1\sigma_1^2\sqrt{b}\sum_{i=1}^d \ln  \dE\left(\sum_{i=1}^d\sqrt{1-\btwo}\sqrt{\frac{\bnu_{0,i}}{1-\btwo}+\sum_{s=1}^T \vert g_{s,i}\vert^2+\sigma_0^2}\right)
   \\
   &+\frac{128\sigma_1^2\sqrt{b}}{(1-c)^2\sqrt{T}}\sum_{i=1}^d \dE \frac{\bG_{1,i}^2}{\sqrt{\bnut_{1,i}}},
\end{align*}
where the second inequality is due to Eq. (\ref{eq: stage_1_result_theorem_2}), the second-to-last inequality is due to Eq. (\ref{eq: estimation_specific}), and the last inequality is due to Jensen's inequality. Solving the above ineqaulity with respect to $\sqrt{1-\btwo}\sum_{i=1}^d\dE\sqrt{\frac{\bnu_{0,i}}{1-\btwo}+\sum_{s=1}^T \vert g_{s,i}\vert^2+\sigma_0^2}$ then gives
\begin{align}
\nonumber
   &\sqrt{1-\btwo} \sum_{i=1}^d\dE\sqrt{\frac{\bnu_{0,i}}{1-\btwo}+\sum_{s=1}^T \vert g_{s,i}\vert^2+\sigma_0^2}
   \\
   \label{eq: d_3}
   \le & 2\sum_{i=1}^d\sqrt{\bnu_{0,i}+3b\sigma_0^2}
   +\frac{4D_2\sigma_1^2b}{\sqrt{T}}+\frac{256\sigma_1^2b}{(1-c)^2T}\sum_{i=1}^d \dE \frac{\bG_{1,i}^2}{\sqrt{\bnut_{1,i}}}
+\frac{16 D_1\sigma_1^2b}{\sqrt{T}}\ln \left(e+\frac{4 \tilde{D}\sigma_1^2b}{\sqrt{T}}\right).
\end{align}

Therefore, by Cauchy's inequality, we have
\begin{align*}
    \dE\left[ \sum_{t=1}^T\left\Vert \bG_{t} \right\Vert_1\right]^2
    \le \left(\sum_{t=1}^T\dE\left[ \left\Vert\frac{1}{\sqrt[4]{\bnut_t^{1}}} \odot \bG_{t} \right\Vert^2\right]\right)\left(\sum_{t=1}^T\sum_{i=1}^d \dE\sqrt{\bnut_{t,i}}\right).
\end{align*}

Since 
\begin{align*}
    \sum_{t=1}^T\sum_{i=1}^d \sqrt{\bnut_{t,i}}\le \sum_{t=1}^T\sum_{i=1}^d \sqrt{\btwo\bnu_{t-1,i}+(1-\btwo)\sigma_0^2}
    \le T\sum_{i=1}^d\sqrt{1-\btwo}\sqrt{\frac{\bnu_{0,i}}{1-\btwo}+\sum_{s=1}^T \vert g_{s,i}\vert^2+\sigma_0^2},
\end{align*}
we have 
\small
\begin{align*}
    &\dE\left[ \sum_{t=1}^T\left\Vert \bG_{t} \right\Vert_1\right]^2
    \\
    \le& \left(2T\sum_{i=1}^d\sqrt{\bnu_{0,i}+3b\sigma_0^2}
   +4D_2\sigma_1^2b\sqrt{T}+\frac{256\sigma_1^2b}{(1-c)^2}\sum_{i=1}^d \dE \frac{\bG_{1,i}^2}{\sqrt{\bnut_{1,i}}}
+16 D_1\sigma_1^2b\sqrt{T}\ln \left(e+\frac{4 \tilde{D}\sigma_1^2b}{\sqrt{T}}\right)\right)
    \\
    &\times \left(D_2\sqrt{T}+D_1\sqrt{T}\sum_{i=1}^d \dE \ln \bnu_{T,i}+\frac{64}{(1-c)^2}\sum_{i=1}^d \dE \frac{\bG_{1,i}^2}{\sqrt{\bnut_{1,i}}}\right)
    \\
    \le& \left(2T\sum_{i=1}^d\sqrt{\bnu_{0,i}+3b\sigma_0^2}
   +4D_2\sigma_1^2b\sqrt{T}+\frac{256\sigma_1^2b}{(1-c)^2}\sum_{i=1}^d \dE \frac{\bG_{1,i}^2}{\sqrt{\bnut_{1,i}}}
+16 D_1\sigma_1^2b\sqrt{T}\ln \left(e+\frac{4 \tilde{D}\sigma_1^2b}{\sqrt{T}}\right)\right)
    \\
    &\times \left(2D_1\sqrt{T}\sum_{i=1}^d  \ln \left( 2\sum_{i=1}^d\sqrt{\bnu_{0,i}+3b\sigma_0^2}
   +\frac{4D_2\sigma_1^2b}{\sqrt{T}}+\frac{256\sigma_1^2b}{(1-c)^2T}\sum_{i=1}^d \dE \frac{\bG_{1,i}^2}{\sqrt{\bnut_{1,i}}}
+\frac{16 D_1\sigma_1^2b}{\sqrt{T}}\ln \left(e+\frac{4 \tilde{D}\sigma_1^2b}{\sqrt{T}}\right)\right)\right.
    \\
   &+ \left.\frac{64}{(1-c)^2}\sum_{i=1}^d \dE \frac{\bG_{1,i}^2}{\sqrt{\bnut_{1,i}}}+D_2\sqrt{T}\right).
\end{align*}
\normalsize
The proof is completed.
\end{proof}

\section{Experiments}
\label{sec:exp}

\begin{table}[h]
    \centering
    \caption{Exploring effect of $\beta_1$ of Adam. We explore the dataset of Cifar10 using VGG13\citep{simonyan2014very} and ResNet18\citep{he2016deep} and WiKiText2\citep{merity2016pointer} using Transforemer\citep{vaswani2017attention}. We show the training loss after 50 epochs}
    \resizebox{\linewidth}{!}{
    \begin{tabular}{c|c|c|c|c|c|c|c|c|c|c|c|c|c}
    \hline
        $\beta_1$ & 0 & 0.1 & 0.2 & 0.3 & 0.4 & 0.5 & 0.6 & 0.7 & 0.8 & 0.9 & 0.99 & 0.999 & 0.9999  \\
    \hline
    
        Cifar10 ResNet18 & 0.2268	& 0.2197	&0.2158&	0.2197&	0.2182&	0.2198&	0.2217&	0.2204&	0.2218&	0.2222&	0.2351&	0.3620&	0.6187 \\
        Cifar10 VGG13 &0.1416&	0.1605&	0.1428&	0.1453 &	0.1391&	0.1421&	0.1387&	0.1457&	0.1417&	0.1419&	0.1551&	0.3497&	0.6645 \\
       WikiText2 &3.3600 &	3.3589&	3.3586&	3.3573&	3.3565 &	3.3599 &	3.3627&	3.3634 &	3.3659 &	3.3749 &	3.4314 &	6.3274&	7.5384 \\
   
    \hline

    \end{tabular}}
    
    \label{tab:momentum_Adam}
\end{table}

In this section, we give the effect of momentum in Adam as a complementary for our theory, since our theory cannot give better results using momentum (discussed in Section \ref{sec: limitation}). 
\paragraph{Experiment setting} We use Adam training on Cifar 10 with ResNet18\citep{simonyan2014very} and VGG13\citep{he2016deep} and wikitext2 with two layer Transformer\citep{vaswani2017attention} for 50 epoch and record its training loss at 50 epoch as measure for the optimization speed. Smaller loss indicates better optimization.  The batch size is set 1024 for Cifar10 dataset and 100 for WikiText2 Dataset.

The results are given in Table \ref{tab:momentum_Adam}.  Our discoveries are:
\begin{itemize}
    \item Momentum can benefit the optimization when the $\beta$ is not too large.
    \item For all datasets, larger $\beta_1$ will worse the optimization. 
\end{itemize}

\end{document}